\documentclass[12pt]{article} % For LaTeX2e
\usepackage{epicpaper,times}

\title{Epistemic Learning from Imprecise Annotation}

\author{Kaizheng Wang \and Siu Lun Chau
}

\date{%
    \small
    Epistemic Intelligence \& Computation Lab\\
    College of Computing \& Data Science\\
    Nanyang Technological University, Singapore \\
\texttt{\{kaizheng.wang,siulun.chau\}@ntu.edu.sg} \\[2ex]
{\normalsize \today}
}

\usepackage{caption}
\usepackage{multirow}
\usepackage{colortbl}
\usepackage{layouts}
\usepackage{graphicx}
\usepackage[normalem]{ulem}
\usepackage{framed}
\usepackage{multicol}
\usepackage{booktabs}
\usepackage{enumitem}
\usepackage{titletoc}
\usepackage{xcolor}
\colorlet{OxfordBlueLight}{green!20}
\usepackage{hyperref}  
\usepackage{cleveref}
\newcommand{\inner}[2]{\left\langle #1,#2\right\rangle}
\usepackage{amsthm}
\usepackage{thmtools}
\usepackage{thm-restate}

\theoremstyle{definition}
\newtheorem{definition}[theorem]{Definition}

\begin{document}
\maketitle

\begin{abstract}
Imprecise annotations may support several plausible labelling distributions, yet learning methods often resolve this ambiguity into a single predictive distribution. This can obscure what the annotation evidence leaves unresolved. We introduce \emph{epistemic learning from credal supervision}, a framework that uses convex sets of plausible labelling distributions, called credal sets, as supervision and learns sets of predictive distributions. We instantiate the framework with the \emph{pessimistic--optimistic credal classifier} (POCC), which combines a shared backbone with two classification heads trained to minimise worst-case and best-case losses over the supervision sets. Their outputs define a predictive credal set whose spread provides an uncertainty score. We also show how credal labels can be obtained through a simple relaxation of existing probabilistic labels, reducing commitment to their precise probability assignments. This construction admits closed-form inner optimisation under cross-entropy loss, enabling efficient training. Assuming the supervision sets contain the true conditional label distributions, and other regularity assumptions, we establish a finite-sample generalisation bound for the averaged predictor with an explicit penalty for supervision imprecision. We evaluate POCC using human annotator disagreement and teacher predictions, alongside label smoothing as a controlled proxy for annotation imprecision. Across these settings, POCC achieves a favourable balance of predictive accuracy, calibration, and uncertainty-based selective classification versus competitive baselines.
\end{abstract}

% ----------------------------------------
\section{Introduction}

Supervised classifiers usually learn from a single target label or label distribution per instance. Yet annotator disagreement, incomplete label information, and differing interpretations may support several plausible targets~\citep{peterson2019human,schmarje2022one,destercke2022uncertain}. Reducing this evidence to a single target can erase distinctions relevant to deciding whether to predict or defer for further assessment. Under majority voting, for example, unanimous agreement and substantial annotator disagreement may produce the same hard label, despite providing different support for that decision.

Existing approaches accommodate annotation imprecision in different ways. Soft labels represent supervision through a single probability distribution, obtained, for example, by aggregating annotations or smoothing hard labels~\citep{geng2016label,szegedy2016rethinking}. These targets retain information about the relative support for different classes, but they do not explicitly represent which alternative label distributions remain plausible. Another approach represents supervision through a \emph{credal set}: a convex set of admissible label distributions~\citep{lienen2021credal,lienen2023conformal}. Such sets allow the annotation evidence to leave some probabilities unspecified. However, they are often used to select a training target or define a robust learning objective, after which the model returns a single predictive distribution. The remaining challenge is to carry information about unresolved alternatives from the supervision into the predictions.

Our aim is to develop an \emph{epistemic learner} that represents both what the annotation evidence supports and what it leaves unresolved. We call this objective \emph{epistemic learning from credal supervision}: learning sets of predictive distributions from credal labels. Here, the predictive epistemic uncertainty (EU) of interest concerns ambiguity about which labelling distribution is warranted for a given input, as reflected in the imprecision of the observed annotations. This supervision-induced uncertainty is distinct from outcome variability under a specified distribution and complements uncertainty arising from fitting a model to finite data~\citep{hullermeier2021aleatoric}. Repeated fitting, as in deep ensembles~\citep{lakshminarayanan2017simple}, captures variation across fitted models but does not, by itself, recover plausible labelling distributions discarded when constructing fixed training targets.

To represent this supervision-induced uncertainty efficiently, we introduce the \emph{pessimistic--optimistic credal classifier} (POCC), whose two prediction heads share a backbone. The pessimistic head minimises the worst-case loss over the distributions admitted by the supervision, while the optimistic head minimises the best-case loss. Their outputs and all mixtures between them form a predictive credal set. The separation between the heads supplies an uncertainty score, while their average provides a probability prediction. Figure~\ref{FIG: Concept} provides an overview of our framework and proposed method. We summarise our contributions below:
\begin{figure}[t]
\centering
\includegraphics[width=\linewidth]{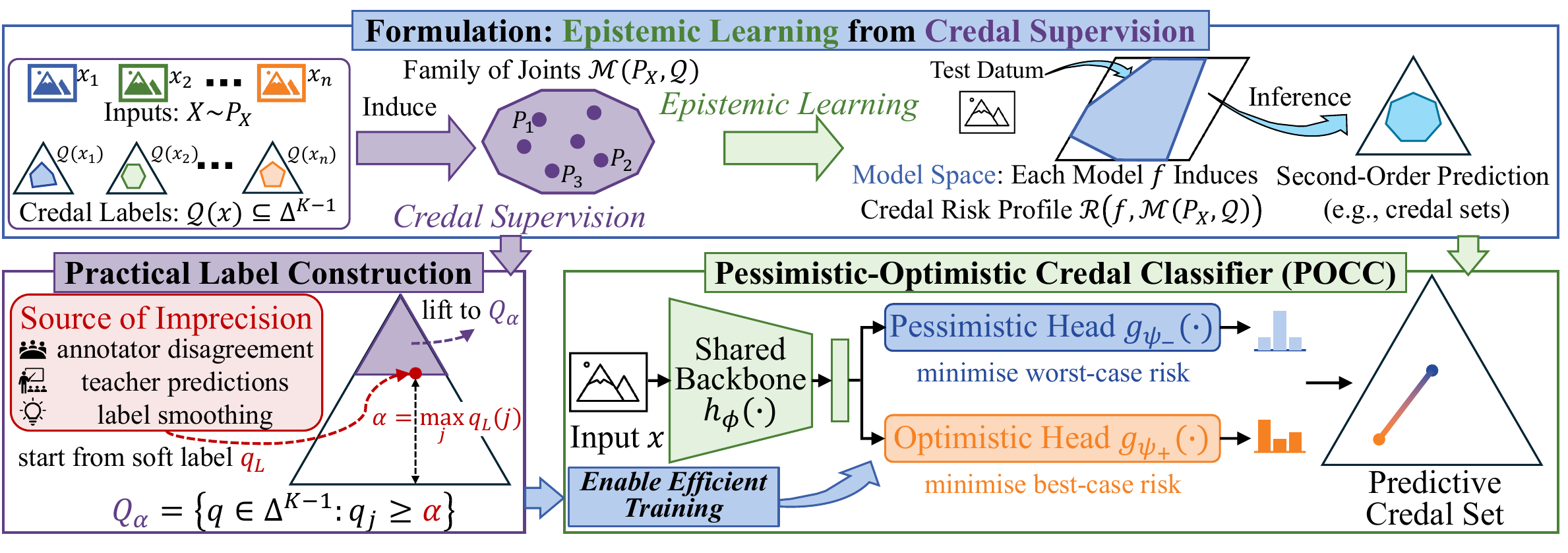}
\caption{Overview of epistemic learning from credal supervision. Top: the framework relates credal supervision to sets of predictive distributions. Bottom left: practical construction of credal labels from imprecise annotations. Bottom right: POCC uses a shared backbone and pessimistic and optimistic heads to form a predictive credal set.} 
\label{FIG: Concept}
\end{figure}

$\bullet\quad$ \textbf{A framework for epistemic learning from credal supervision.} We formalise learning from sets of admissible label distributions and identify the representation of unresolved annotation evidence at prediction time as a central learning objective.

$\bullet\quad$ \textbf{An efficient classifier with a learning-theoretic analysis.} We derive closed-form solutions for POCC's inner optimisation problems under cross-entropy loss and the proposed credal-label construction. Assuming the supervision sets contain the true conditional label distributions, and under appropriate boundedness conditions, we establish a finite-sample generalisation bound for the averaged predictor with an explicit penalty for supervision imprecision.

$\bullet\quad$ \textbf{An empirical evaluation across three supervision settings.} We construct credal labels from human annotator disagreement and teacher predictions, and use label smoothing as a controlled proxy for annotation imprecision. Across the evaluated settings, POCC achieves a favourable overall balance among predictive accuracy, calibration, and uncertainty-based selective classification compared with competitive baselines.

% ----------------------------------------
\section{Background}
\label{sec: BG}
% ----------------------------------------
\textbf{Classical Supervised Classification.}
% ----------------------------------------
Consider a \(K\)-class classification problem with input space \(\mathcal{X}\subseteq\mathbb{R}^d\), label space \(\mathcal{Y}=\{1,\dots,K\}\), and the \((K\!-\!1)\)-dimensional probability simplex
\(
\Delta^{K-1} = \big\{q\in\mathbb{R}_{\ge 0}^K\!: \sum_{k=1}^K q_k = 1\big\}.
\)
Assuming data \((X, Y)\) are drawn i.i.d. from a true but unknown joint distribution \(P_0\) over \(\mathcal{X}\times\mathcal{Y}\), the standard objective is to learn a first-order probabilistic classifier \(f:\mathcal{X}\to\Delta^{K-1}\) minimising the population risk
\begin{equation}
R_0(f;P_0) = \mathbb{E}_{(X,Y)\sim P_0}\big[\ell(f(X), Y)\big],
\label{eq:PopulationRisk}
\end{equation}
where \(\ell:\Delta^{K-1}\times\mathcal{Y}\to\mathbb{R}\) denotes the loss function (e.g., cross-entropy). In practice, given a finite dataset \(\{(x_i, y_i)\}_{i=1}^N\), the empirical risk \(\hat{R}(f;\hat{P}_N) = \frac{1}{N}\sum_{i=1}^N \ell(f(x_i), y_i)\) is minimised instead. To assess performance on unseen data, learning theory studies the gap between empirical and population risks, known as the \emph{generalisation gap}. Under suitable assumptions on the loss and model complexity, bounds on this gap establish when increasing data and improving optimisation bring the learned predictor's risk towards the best attainable in its model class~\citep{shalev2014understanding,bartlett2002rademacher}, motivating our analysis in Section~\ref{sec: Analysis}. 

For a distributional target \(q\in\Delta^{K-1}\), let
\(\ell(p,q)=\sum\nolimits_{k}q_k\,\ell(p,k)\), which recovers \(\ell(p,k)\) at the one-hot target \(q=e_k\); For cross-entropy, this gives \(\ell(p,q)=-\sum_{k=1}^{K}q_k\log p_k\).

\textbf{Learning from Imprecise Annotations.}
We consider single-label classification in which the annotation evidence may leave several conditional labelling distributions plausible. This differs from label-noise formulations, which model discrepancies between observed and underlying clean labels~\citep{song2022learning,song2025survey}, and from multi-label learning, where several classes may apply simultaneously~\citep{tarekegn2024deep}. A missing annotation supplies no direct label constraint, although missing-label recovery lies outside our scope.

Soft labels and label distribution learning summarise class support in a single probability vector~\citep{geng2016label,schmarje2022one}, while label smoothing deliberately softens hard targets~\citep{szegedy2016rethinking}. Credal labels retain a set of admissible distributions, allowing the supervision to leave some target probabilities unresolved~\citep{lienen2021label}. They have also been used as pseudo-labels in semi-supervised and self-supervised learning~\citep{lienen2023conformal,rodriguez2023self,bordini2024self}.

Theoretical work has studied learning guarantees under different credalised settings~\citep{campagner2023credal,caprio2024credal}, such as learning from multiple data generating sources. Closest to our work is \citet{venkatesh2026structured}, where they developed a structured credal framework that distinguishes uncertainty in label information from uncertainty in the input distribution. Much of this work learns ordinary probabilistic predictors; our interest also concerns how supervision imprecision informs predictive epistemic uncertainty.

% ----------------------------------------
\textbf{Representing Predictive Epistemic Uncertainty.}
% ----------------------------------------
A probability vector describes predictions over class outcomes. Uncertainty about which probability vector is warranted can itself be represented by a distribution over probability vectors or by a set of plausible vectors~\citep{hullermeier2021aleatoric,tpamiWANG}. We use \emph{second-order representation} to refer broadly to these richer descriptions of predictive uncertainty. A credal set, in particular, specifies which distributions are admissible without assigning probabilities to its members.

Bayesian and ensemble methods obtain such representations through uncertainty over model parameters or variation across fitted models~\citep{gal2016dropout,lakshminarayanan2017simple,hobbhahn2022fast}. Evidential and related approaches learn representations of uncertainty over class probabilities directly~\citep{sensoy2018evidential,nipossibilistic}, while credal predictors return sets of plausible probability vectors~\citep{wang2024CredalEnsembles,hofmanefficient}. These predictive representations do not, by themselves, specify how the constraints supplied by imprecise annotations should inform predictive epistemic uncertainty.

% Section~\ref{sec: Method} formalises learning predictive credal sets from credal supervision via admissible conditional labelling distributions and their population risks. Further related work appears in Appendix~\ref{App:RelatedWork}. 

% ----------------------------------------
\section{Epistemic Learning from Credal Supervision}
\label{sec: Method}
% ----------------------------------------
Imprecise annotations constrain plausible conditional labelling distributions. To carry these constraints into prediction, we first show that they induce a credal set of joint distributions and a range of population risks. The endpoints then define complementary learning objectives.

For each input \(x\in\mathcal X\), let \(Q(x)\subseteq\Delta^{K-1}\) contain the conditional labelling distributions compatible with the annotation evidence. A conventional Markov kernel assigns one conditional distribution to each input; we extend this representation by allowing a set of distributions.

\begin{definition}[Credal Markov Kernel]
\label{def:CMV}
A \emph{credal Markov kernel} is a measurable set-valued map \(Q:\mathcal X\rightrightarrows\Delta^{K-1}\) such that \(Q(x)\) is non-empty, closed, and convex for every \(x\in\mathcal X\).
\end{definition}

Hard and soft labels correspond to singleton sets: \(Q(x)=\{e_k\}\) for class \(k\), where \(e_k\) places all mass on \(k\), and \(Q(x)=\{q\}\) for a soft target \(q\). Non-singleton sets leave several conditional laws admissible. Fixing the input marginal \(P_X\) induces the following joint credal set and range of risks.

\begin{proposition}[Induced Joint Credal Set and Risk Profile]
\label{prop:main}
Assume that \(\mathcal X\) is a Polish space---a space admitting a complete separable metric---equipped with its Borel \(\sigma\)-algebra. For a fixed input marginal \(P_X\) and a credal Markov kernel \(Q\), the family
\[
\mathcal M(P_X,Q)
=
\left\{
P\in\mathcal P(\mathcal X\times\mathcal Y):
P(\,\cdot\,\times\mathcal Y)=P_X,\;
P_{Y\mid X=x}\in Q(x)
\text{ for }P_X\text{-a.e. }x
\right\}
\]
is non-empty, convex, and weakly compact, and hence is a closed convex credal set. Here, \(\mathcal P(\mathcal X\times\mathcal Y)\) denotes the probability distributions on the input--label space. Suppose a measurable predictor \(f\) has measurable per-class losses. If \(\mathbb E_X[\max_k|\ell(f(X),k)|]<\infty\) for \(X\sim P_X\), its \emph{credal risk profile} is the compact interval
{\small
\begin{equation}
\mathcal R(f,\mathcal M)
:=
\big\{
\mathbb E_P[\ell(f(X),Y)]:P\in\mathcal M
\big\}
=
[R_+(f),R_-(f)],
\qquad \mathcal M:=\mathcal M(P_X,Q),
\label{eq:risk_profile}
\end{equation}
}
where
{\small
\begin{equation}
\begin{aligned}
R_+(f)&=\min_{P\in{\mathcal M(P_X,Q)}}E_P\big[\ell(f(X),Y)\big] = \mathbb E_X\big[\min_{q\in Q(X)}\ell(f(X),q)\big],\\
R_-(f)&=\max_{P\in{\mathcal M(P_X,Q)}}E_P\big[\ell(f(X),Y)\big] = \mathbb E_X\big[\max_{q\in Q(X)}\ell(f(X),q)\big].
\end{aligned}
\label{eq: instance_risk}
\end{equation}
}
Both endpoints are attained by distributions in \(\mathcal M(P_X,Q)\).
\end{proposition}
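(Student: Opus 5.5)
The plan is to identify \(\mathcal M(P_X,Q)\) with the set of measurable selections of \(Q\), and then transfer every property from the kernel to the joint family. Since \(\mathcal Y\) is finite, every \(P\) with first marginal \(P_X\) disintegrates as \(P(dx,\{k\})=P_X(dx)\,q_k(x)\) for a measurable \(q:\mathcal X\to\Delta^{K-1}\), unique \(P_X\)-a.e. (here the Polish assumption is used). Thus \(\mathcal M(P_X,Q)\) is in bijection with the \(P_X\)-a.e. equivalence classes of measurable selections \(q(x)\in Q(x)\).

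\emph{Non-emptiness.} \(Q\) is measurable with non-empty closed values in the compact metric space \(\Delta^{K-1}\) and \(\mathcal X\) is Polish, so the Kuratowski--Ryll-Nardzewski theorem gives a measurable selection.

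\emph{Convexity.} If \(P^{(j)}\leftrightarrow q^{(j)}\) for \(j=1,2\), then \(\lambda P^{(1)}+(1-\lambda)P^{(2)}\) has marginal \(P_X\) and conditional \(\lambda q^{(1)}+(1-\lambda)q^{(2)}\in Q(x)\), because each \(Q(x)\) is convex.

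\emph{Weak compactness.} Every element has the fixed first marginal \(P_X\), which is tight on the Polish space \(\mathcal X\), and \(\mathcal Y\) is finite. Hence the family is tight, and Prokhorov gives relative compactness. It remains to prove closedness, which I expect to be the main obstacle. I would argue through the embedding \(P\mapsto q\in L^\infty(P_X;\mathbb R^K)\).
\begin{enumerate}
\item Weak convergence \(P^n\to P\) with fixed first marginal implies \(\int g(x)q^n_k(x)\,dP_X\to\int g(x)q_k(x)\,dP_X\) for bounded continuous \(g\). By density of \(C_b\) in \(L^1(P_X)\) and uniform boundedness of \(q^n\), this extends to all \(g\in L^1(P_X)\), so \(q^n\to q\) weak-\(*\) in \(L^\infty\).
\item The set \(S=\{q: q(x)\in Q(x)\ \text{a.e.}\}\) is convex. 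It is also strongly closed in \(L^1\), since an \(L^1\)-convergent sequence has an a.e.-convergent subsequence and each \(Q(x)\) is closed.
\item By Mazur's lemma, \(S\) is therefore weakly closed in \(L^1\). Because the \(q^n\) are uniformly bounded, weak-\(*\) convergence in \(L^\infty\) is the same as weak convergence in \(L^1\), so \(q\in S\).
\end{enumerate}
This yields a weakly closed, tight set, hence weakly compact, and in particular a closed convex credal set.

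\emph{Risk profile.} For \(P\leftrightarrow q\), the definition \(\ell(p,q)=\sum_k q_k\ell(p,k)\) gives
\[
\mathbb E_P[\ell(f(X),Y)]=\mathbb E_X[\ell(f(X),q(X))].
\]
This expectation is finite because \(|\ell(f(x),q(x))|\le\max_k|\ell(f(x),k)|\), which is integrable by assumption.

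\emph{Endpoints.} The map \(q\mapsto\ell(f(x),q)\) is linear, hence continuous, on the compact set \(Q(x)\), so \(\max_{q\in Q(x)}\ell(f(x),q)\) is attained.
\begin{enumerate}
\item The function \((x,q)\mapsto\ell(f(x),q)\) is Carathéodory: measurable in \(x\) and continuous in \(q\). The measurable maximum theorem therefore gives measurability of the pointwise maximum and a measurable maximiser selection \(q^*(x)\in Q(x)\).
\item The corresponding \(P^*\in\mathcal M\) attains \(\mathbb E_X[\max_{q\in Q(X)}\ell(f(X),q)]\).
\item For any other \(P\in\mathcal M\), pointwise domination gives \(\mathbb E_P[\ell(f(X),Y)]\le\mathbb E_X[\max_{q\in Q(X)}\ell(f(X),q)]\).
\end{enumerate}
Hence this value equals \(R_-(f)\) and is attained by \(P^*\). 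The symmetric argument with the minimum gives \(R_+(f)\) and a minimiser \(P_*\in\mathcal M\).

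\emph{The risk set is the whole interval.} The map \(P\mapsto\mathbb E_P[\ell(f(X),Y)]\) is affine on the convex set \(\mathcal M\). Its image is therefore convex, and it contains the attained endpoints, via mixtures \(\lambda P_*+(1-\lambda)P^*\). So the image is exactly the compact interval \([R_+(f),R_-(f)]\).

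The closedness step is the delicate one: integrable rather than bounded continuity of the loss prevents a direct argument, which is why I would rely on the \(L^\infty\)/Mazur route rather than on continuity of the functional.
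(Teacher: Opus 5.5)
Your proposal is correct and follows essentially the same route as the paper. Both identify $\mathcal M(P_X,Q)$ with the a.e.\ selections of $Q$, get non-emptiness from Kuratowski--Ryll-Nardzewski, weak closedness from norm-closedness of the selection set plus Mazur, compactness from tightness of the fixed marginal and Prokhorov, and the endpoints from the measurable maximum theorem, with mixtures filling the interval. The one variation is that you go from $P^n\to P$ straight to weak $L^1$ convergence of the conditionals via density of $C_b$ in $L^1(P_X)$ and the bound $\|q^n\|_\infty\le 1$, which streamlines the paper's Dunford--Pettis/cluster-point argument; you should also state, as the paper does, that the limit $P$ keeps marginal $P_X$ and that metrisability of the weak topology makes sequential closedness sufficient.
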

The joint credal set supports a behavioural interpretation: its members are permissible probability models for evaluating decisions through expected loss, without commitment to a single model~\citep{levi1980enterprise}. This formulation also connects to the ambiguity sets used in distributionally robust optimisation (DRO)~\citep{ben2013robust,duchi2021statistics,chen2026bulkcalibrated}. Whereas standard DRO formulations minimise worst-case expected loss to seek robustness, our credal formulation retains the full range of expected losses induced by the permissible models, without privileging either endpoint as the sole learning criterion. In our setting, the input marginal remains fixed, and the geometry of the joint credal set is determined by the conditional annotation constraints \(Q(x)\). The proof is provided in Appendix~\ref{App:ProofProposition1}.

% ----------------------------------------
\subsection{Pessimistic--Optimistic Credal Classifier}
\label{sec:EfficientModel}
One natural approach to learning from credal supervision is to train a separate predictor for each admissible conditional labelling law. This leads to an infinite-task learning problem~\citep{pmlr-v89-brault19a,chen2024imprecise,singh2024domain}: the family of laws is generally uncountable, and approximating it through sampled tasks can introduce substantial computational overhead. We instead introduce the \emph{pessimistic--optimistic credal classifier} (POCC), which uses the two endpoints of the credal risk profile to define complementary learning objectives.

\paragraph{Pessimistic and Optimistic Learning.}
The upper risk \(R_-\) and lower risk \(R_+\) measure worst- and best-case performance over the set of admissible joint distributions, yielding classical pessimistic and optimistic learning criteria~\citep{guillaume2017maximum,hullermeier2019learning}. POCC minimises these risks with two predictors and retains both outputs in a predictive credal set. This represents the consequences of unresolved annotation evidence, rather than selecting a single compromise between robustness and optimism.

\paragraph{Architecture and Training.}
\label{sec:model}
As illustrated in Figure~\ref{FIG: Concept}, POCC combines a shared backbone with two classification heads. Let \(h_\phi\) denote the backbone and \(g_{\psi_-}\) and \(g_{\psi_+}\) the pessimistic and optimistic heads. For \(\theta=(\phi,\psi_-,\psi_+)\), their predictive distributions are
\begin{equation}
p_{-,\theta}(x)
:=
f_{-,\theta}(x)
=
g_{\psi_-}(h_\phi(x)),
\qquad
p_{+,\theta}(x)
:=
f_{+,\theta}(x)
=
g_{\psi_+}(h_\phi(x)),
\label{eq:pocc_heads}
\end{equation}
with both outputs taking values in \(\Delta^{K-1}\). Given training data \(\{(x_i,Q_i)\}_{i=1}^{N}\), where \(Q_i:=Q(x_i)\), we jointly train the two heads by solving
\begin{equation}
\min_\theta\;
\frac{1}{N}
\sum_{i=1}^{N}
\Big[
\max_{q\in Q_i}
\ell(p_{-,\theta}(x_i),q)
+
\min_{q\in Q_i}
\ell(p_{+,\theta}(x_i),q)
\Big].
\label{eq:total_loss}
\end{equation}
For each instance, the inner maximisation selects the least favourable admissible label distribution for the pessimistic head, while the inner minimisation selects the most favourable one for the optimistic head. Both heads minimise their respective losses, and both contribute to learning the shared representation.

This objective applies whenever the inner extrema over the credal labels can be evaluated. For the construction introduced in Section~\ref{sec: CredalLabel}, both admit closed-form solutions under cross-entropy loss, allowing efficient training without iterative inner optimisation.

\paragraph{Prediction and Epistemic Uncertainty.}
To retain both learned views at prediction time, we return the set of all mixtures of the two head outputs:
\begin{equation}
C_\theta(x)
=
\big\{
(1-\lambda)p_{-,\theta}(x)
+
\lambda p_{+,\theta}(x)
:
\lambda\in[0,1]
\big\}.
\label{eq:credal_prediction}
\end{equation}
This line segment is the smallest closed convex set that contains both predictions, called their convex hull, and it collapses to a singleton when they agree~\citep{huber1992robust}. When a single predictive distribution is required, we use the midpoint and its most probable class:
\[
\bar f_\theta(x)=\big({p_{-,\theta}(x)+p_{+,\theta}(x)}\big)/{2},
\qquad
\widehat y_\theta(x)=\arg\max_k\bar f_{\theta,k}(x).
\]
We use the spread of \(C_\theta(x)\) as an epistemic uncertainty score reflecting disagreement between the two learning criteria. Specifically, we adopt the \emph{Maximum Mean Imprecision} (MMI) under the total variation distance~\citep{chau2025integral,gonzalezgarcia2026quantification}. For this line segment, MMI reduces to the total variation distance between its endpoints:
% {\small
\begin{equation}
\mathrm{MMI}(x)
=
\frac{1}{2}
\sum_{k=1}^{K}
\left|
p_{+,\theta,k}(x)-p_{-,\theta,k}(x)
\right|.
\label{eq:eu_mmi}
\end{equation}
% }
The score is available directly from the two head outputs and requires no further optimisation. Appendix~\ref{App:mmi} provides the derivation, while Appendix~\ref{App:EUQ} compares it with classical entropy-based alternatives~\citep{abellan2006disaggregated}.

\subsection{Practical Credal-Label Construction and Efficient Optimisation}
\label{sec: CredalLabel}
Credal supervision allows annotation evidence to leave some probabilities unspecified. For example, a statement such as ``at least 70\% probability of cat'' naturally defines a set of compatible labelling distributions. Common supervision signals instead often provide a single reference distribution \(q_L(x)\in\Delta^{K-1}\), obtained from annotator frequencies, teacher predictions, or smoothed labels. We show how to obtain credal labels through a simple relaxation of these readily available targets, reducing commitment to their precise probability assignments while retaining information useful for prediction.

Following~\citet{lienen2021credal}, we retain the probability of the most supported class as a lower bound and leave the allocation of the remaining mass unresolved. Let \(j(x)\in\arg\max_k q_L(k\mid x)\), with ties resolved by a fixed rule, and write \(\alpha(x)=q_L(j(x)\mid x)\). The resulting credal label is
\begin{equation}
\mathcal Q_\alpha(x)
:=
\left\{
q\in\Delta^{K-1}:q_{j(x)}\geq\alpha(x)
\right\}.
\label{eq:LabelConstruction}
\end{equation}
By construction, \(\mathcal Q_\alpha(x)\) contains \(q_L(x)\), as illustrated in Figure~\ref{FIG: Concept}. The relaxation therefore preserves the original target as an admissible distribution while allowing alternatives to its precise probability assignments.

\paragraph{Interpreting Supervision Imprecision.}
To understand what this relaxation represents, observe that
\begin{equation}
\mathcal Q_\alpha(x)
=
\alpha(x)e_{j(x)}
+
\bigl(1-\alpha(x)\bigr)\Delta^{K-1},
\label{eq:CredalGeometry}
\end{equation}
where \(e_{j(x)}\) places all probability mass on class \(j(x)\).
Every admissible distribution reserves mass \(\alpha(x)\) for this
class and leaves the remaining mass unrestricted, including allocation
back to the same class. This is an \(\epsilon\)-contamination set with
\(\epsilon(x)=1-\alpha(x)\): the mass originally assigned outside the
most supported class determines how much probability mass is left
free to be reallocated.

The remaining imprecision is quantified by the largest total variation distance between members of the set: for \(K\geq2\), $$\operatorname{diam}_{\mathrm{TV}}
\bigl(\mathcal Q_\alpha(x)\bigr)
:=
\sup_{q,r\in\mathcal Q_\alpha(x)}
\operatorname{TV}(q,r)
=
1-\alpha(x).$$
For fixed \(j(x)\), decreasing \(\alpha(x)\) enlarges the set. A one-hot target \(q_L(x)=e_{j(x)}\) gives \(\alpha(x)=1\) and the singleton \(\{e_{j(x)}\}\). Although \(q_L(x)\) belongs to the set by design, coverage of the true conditional distribution remains an assumption of Section~\ref{sec: Analysis}.

\paragraph{Closed-Form Inner Optimisation.}
The same mixture representation makes the POCC training objective efficient to evaluate. Cross-entropy is linear in the target distribution, so each head's extremal target is obtained by assigning the unrestricted mass to a single class.

\begin{proposition}[Closed-Form Extreme Label Distributions]
\label{prop:closed_solution}
Fix an input \(x\), abbreviate \(j=j(x)\) and \(\alpha=\alpha(x)\), and let \(p_-\) and \(p_+\) denote the two head predictions with strictly positive coordinates. Choose \(k_{\min}\in\arg\min_k p_{-,k}\) and \(k_{\max}\in\arg\max_k p_{+,k}\). Under cross-entropy loss, an inner maximiser for the pessimistic head and an inner minimiser for the optimistic head are
\begin{equation}
q_-^*=\alpha e_j+(1-\alpha)e_{k_{\min}},\qquad q_+^*=
\alpha e_j+(1-\alpha)e_{k_{\max}}.
\label{eq:closed_solution}
\end{equation}
These expressions also cover \(k_{\min}=j\) or \(k_{\max}=j\), in which case the corresponding target is \(e_j\).
\end{proposition}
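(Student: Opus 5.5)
The plan is to use the mixture representation \eqref{eq:CredalGeometry} together with the linearity of cross-entropy in its target argument. For fixed \(p\) with strictly positive coordinates, the map \(q\mapsto\ell(p,q)=-\sum_k q_k\log p_k\) is a finite linear functional on \(\Delta^{K-1}\). By \eqref{eq:CredalGeometry}, every \(q\in\mathcal Q_\alpha(x)\) can be written as \(q=\alpha e_j+(1-\alpha)r\) with \(r\in\Delta^{K-1}\). Conversely, every such mixture satisfies \(q_j\ge\alpha\). I would check this equality of sets directly. For \(\alpha<1\), set \(r=(q-\alpha e_j)/(1-\alpha)\); this is nonnegative precisely because \(q_j\ge\alpha\), and it sums to one. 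For \(\alpha=1\) the set is the singleton \(\{e_j\}\), so there is nothing to optimise.

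Linearity then gives
\[
\ell(p,q)=\alpha\,\ell(p,e_j)+(1-\alpha)\,\ell(p,r)=-\alpha\log p_j-(1-\alpha)\sum_k r_k\log p_k .
\]
The first term does not depend on \(r\), and \(1-\alpha\ge0\). So optimising over \(q\in\mathcal Q_\alpha(x)\) reduces to optimising the linear function \(r\mapsto-\sum_k r_k\log p_k\) over the simplex.

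A linear function on \(\Delta^{K-1}\) attains its extrema at vertices: \(\sum_k r_k c_k\le\max_k c_k\) and \(\ge\min_k c_k\), with equality at the corresponding \(e_k\).
\begin{itemize}
\item For the pessimistic head, take \(c_k=-\log p_{-,k}\). This is maximised where \(p_{-,k}\) is smallest, since \(-\log\) is decreasing. Hence \(r=e_{k_{\min}}\) maximises, which gives \(q_-^*\).
\item For the optimistic head, \(c_k=-\log p_{+,k}\) is minimised where \(p_{+,k}\) is largest. Hence \(r=e_{k_{\max}}\) gives \(q_+^*\).
\end{itemize}
Strict positivity ensures all \(c_k\) are finite, so the optimal values are attained and well defined.

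Finally, the degenerate cases need a remark. If \(k_{\min}=j\) (respectively \(k_{\max}=j\)), the formula gives \(\alpha e_j+(1-\alpha)e_j=e_j\), which still lies in \(\mathcal Q_\alpha(x)\). The proof above covers this case without change, since it never assumed \(k_{\min}\ne j\).

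No step is a genuine obstacle. The only point needing care is verifying the set identity \eqref{eq:CredalGeometry}, including the \(\alpha=1\) edge case and the division by \(1-\alpha\). The paper asserts this identity, but I would verify it as above for completeness.
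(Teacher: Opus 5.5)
Your proposal is correct and follows essentially the same route as the paper's proof: write each admissible target as \(q=\alpha e_j+(1-\alpha)r\) with \(r\in\Delta^{K-1}\), use linearity of cross-entropy in the target, and put all of \(r\)'s mass on the class with the largest (pessimistic head) or smallest (optimistic head) per-class loss \(-\log p_k\). The one addition is your explicit check of the set identity \(\mathcal Q_\alpha(x)=\alpha e_j+(1-\alpha)\Delta^{K-1}\), including the \(\alpha=1\) case, which the paper asserts without proof.
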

The pessimistic target places the unrestricted mass on the class its head considers least probable, thereby maximising the loss. The optimistic target places this mass on the class its head considers most probable, thereby minimising the loss. A proof is provided in Appendix~\ref{App:ProofProposition2}. Each inner problem requires an \(\mathcal O(K)\) scan, with no sampling or iterative optimisation. The dependence on the number of classes matches ordinary cross-entropy training; the additional model cost comes primarily from the second head.

\section{Learning Guarantees under Credal Supervision}
\label{sec: Analysis}
Our analysis examines when imprecise supervision supports informative probabilistic prediction. As a first step, we consider \emph{valid but imprecise supervision}: the annotation constraints may leave the true conditional labelling distribution unresolved, but do not exclude it. Under suitable regularity conditions, we establish a finite-sample cross-entropy risk bound for POCC's averaged predictor, making explicit the contribution of supervision imprecision.

\textbf{Coverage of the True Conditional Distribution.}
Let $p_0(x)\!\!=\!P_0(Y\!=\!\cdot\!\mid\! X\!=\!x)$ denote the true conditional labelling distribution. We formalise supervision validity through the following assumption.
\begin{assumption}[Almost-sure Conditional Coverage]
\label{ass:conditional-coverage}
The credal supervision satisfies \(p_0(x)\in Q(x)\) for \(P_X\)-almost every \(x\).
\end{assumption}
Coverage is a \emph{well-specification assumption on the supervision}, allowing us to isolate the effect of annotation imprecision from that of misspecified constraints. It relaxes exact distributional supervision, $Q(x)=\{p_0(x)\}$, to set membership. This is distinct from the usual sampled-label assumption: the latter specifies how observed labels are drawn from the target conditional distribution, whereas coverage specifies the validity of the supplied annotation constraints. In particular, coverage neither requires deterministic outcomes nor follows merely from i.i.d.\ sampling. When labels are deterministic, $p_0(x)=e_{y^*(x)}$, a correct hard annotation is recovered as the singleton $Q(x)=\{e_{y^*(x)}\}$. More general supervision models that account for stochastic annotation mechanisms and coverage violations, for example through random-set formulations, are left to future work.

By ensuring $P_0\in\mathcal M(P_X,Q)$, coverage connects the credal objectives to the true risk $R_0(f):=\mathbb E_X[\ell(f(X),p_0(X))]$, yielding $R_+(f)\le R_0(f)\le R_-(f)$. Coverage alone, however, does not ensure informative supervision: taking $Q(x)=\Delta^{K-1}$ always satisfies the assumption but leaves the conditional distribution unrestricted. We therefore quantify supervision imprecision through the total-variation diameter $$d_Q(x):=\sup_{q,r\in Q(x)}\tfrac12\|q-r\|_1$$.

\textbf{Generalisation of the Averaged Predictor.}
To relate the training objective in~\cref{eq:total_loss} to this risk, we divide it by two, leaving its minimisers unchanged. Define
\begin{equation}
j_\theta(x):=\frac{1}{2}\big(\max_{q\in Q(x)}\ell(f_{-,\theta}(x),q)+
\min_{q\in Q(x)}\ell(f_{+,\theta}(x),q)
\big),
\label{eq:empirical-joint-objective}
\end{equation}
and let \(\widehat J_{Q,n}(\theta)\!:=\!n^{-1}\sum_{i=1}^{n}j_\theta(X_i)\). Suppose that training returns a parameter \(\widehat\theta_n\) satisfying
$$
\widehat J_{Q,n}(\widehat\theta_n)
\!\le\!\inf_{\theta\in\boldsymbol\Theta}\widehat J_{Q,n}(\theta)+\eta_n,
\nonumber
$$
where \(\eta_n\ge0\) measures the optimisation error. Its averaged prediction is \(\widehat{\bar f}_n\!=\!(f_{-,\widehat\theta_n}\!+\!f_{+,\widehat\theta_n})/2\). To assess this predictor against an ordinary probabilistic model of comparable capacity, we use the predictors that the architecture can represent with identical heads:
\[
\mathcal F_\Delta:=\left\{f:\exists\theta_f\in\boldsymbol\Theta\text{ such that }f_{-,\theta_f}=f_{+,\theta_f}=f\right\}.
\]
The learned heads may differ; only the comparison class is restricted in this way.

\begin{theorem}[Generalisation under Credal Coverage]
\label{theorem:main}
Assume almost-sure conditional coverage, a fixed credal kernel \(Q\), and inputs \(X_1,\ldots,X_n\) drawn independently from \(P_X\). Suppose that \(\mathcal F_\Delta\) is non-empty and, on one common measurable set of inputs with \(P_X\)-probability one, cross-entropy is bounded by \(B\) for all parameters, both heads, and all admissible targets, while \(\max_k[-\log f_k(x)]-\min_k[-\log f_k(x)]\le L\) for every head prediction. Then, with probability at least \(1-\delta\),
\begin{equation}
R_0(\widehat{\bar f}_n)-\inf_{f\in\mathcal F_\Delta}R_0(f)
\le
4\mathfrak R_n(\mathcal J_Q)
+2B\sqrt{\frac{\log(2/\delta)}{2n}}
+\eta_n
+L\,\mathbb E[d_Q(X)],
\label{eq:general_bound}
\end{equation}
where \(\mathcal J_Q:=\{j_\theta:\theta\in\boldsymbol\Theta\}\), and \(\mathfrak R_n(\mathcal J_Q)\) measures the capacity of this loss class through its expected absolute Rademacher complexity, defined in Appendix~\ref{App:ProofTheorem}.
\end{theorem}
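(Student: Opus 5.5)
The plan is to split the excess risk into a deterministic \emph{approximation/imprecision} part, controlled pointwise by coverage and convexity, and a \emph{statistical} part, controlled by uniform convergence of \(\widehat J_{Q,n}\) to its population counterpart \(J_Q(\theta):=\mathbb E_X[j_\theta(X)]\). The chain I aim for, for any fixed comparator \(f\in\mathcal F_\Delta\) with parameter \(\theta_f\), is
\[
R_0(\widehat{\bar f}_n)\le J_Q(\widehat\theta_n)+\tfrac L2\mathbb E[d_Q(X)],\qquad J_Q(\theta_f)\le R_0(f)+\tfrac L2\mathbb E[d_Q(X)],
\]
together with \(J_Q(\widehat\theta_n)\le J_Q(\theta_f)+\eta_n+2\sup_\theta|J_Q(\theta)-\widehat J_{Q,n}(\theta)|\). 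Taking the infimum over \(f\in\mathcal F_\Delta\) at the end (along a minimising sequence) gives the claim, since \(\mathcal F_\Delta\) is non-empty.

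\textbf{Pointwise step.} On the common full-measure set, fix \(x\) and write \(c_k=-\log f_k(x)\) for a head prediction \(f(x)\). Cross-entropy is linear in the target, so for any \(q,r\in Q(x)\) we have
\[
\ell(f,q)-\ell(f,r)=\sum_k(q_k-r_k)c_k\le \tfrac12\|q-r\|_1\bigl(\max_kc_k-\min_kc_k\bigr)\le L\,d_Q(x).
\]
This is the standard fact that \(\sum_k v_kc_k\le \tfrac12\|v\|_1(\max c-\min c)\) whenever \(\sum_k v_k=0\). Coverage gives \(p_0(x)\in Q(x)\) a.s.

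For the learned predictor I use convexity of \(p\mapsto-\log p_k\) to get \(\ell(\bar f,p_0)\le\tfrac12\ell(f_-,p_0)+\tfrac12\ell(f_+,p_0)\). I then bound \(\ell(f_-,p_0)\le\max_{q\in Q}\ell(f_-,q)\) by coverage, and \(\ell(f_+,p_0)\le\min_{q\in Q}\ell(f_+,q)+L d_Q\) by the Lipschitz-in-TV estimate applied with \(r\) the minimiser, which is attained since \(Q(x)\) is compact. Integrating yields the first display.

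For the comparator, where \(f_-=f_+=f\), I bound \(\max_q\ell(f,q)\le\ell(f,p_0)+Ld_Q\) and \(\min_q\ell(f,q)\le\ell(f,p_0)\). Averaging gives the second display. Together these account for the full penalty \(L\,\mathbb E[d_Q(X)]\). The assumption \(\mathbb E\max_k|\ell|<\infty\) holds by the bound \(B\), so Proposition~\ref{prop:main} ensures all expectations are well defined.

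\textbf{Statistical step.} By near-optimality, \(\widehat J_{Q,n}(\widehat\theta_n)\le\widehat J_{Q,n}(\theta_f)+\eta_n\). Since cross-entropy is nonnegative and bounded by \(B\), each \(j_\theta\in[0,B]\), so changing one \(X_i\) alters \(\sup_\theta|J_Q-\widehat J_{Q,n}|\) by at most \(B/n\). McDiarmid's inequality therefore gives, with probability at least \(1-\delta\), a deviation of at most \(B\sqrt{\log(2/\delta)/(2n)}\) above its mean. Symmetrisation bounds that mean by \(2\mathfrak R_n(\mathcal J_Q)\) with the absolute Rademacher complexity, which handles both signs at once. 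Doubling gives the \(4\mathfrak R_n+2B\sqrt{\cdot}\) terms; the \(2/\delta\) inside the logarithm leaves room to split the confidence between the two one-sided deviations if one prefers.

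\textbf{Main obstacle.} I expect the delicate part to be measurability and attainment rather than the inequalities themselves. I need \(x\mapsto\max_{q\in Q(x)}\ell(f(x),q)\) and the corresponding minimum to be measurable, which I take from Proposition~\ref{prop:main} and the measurable-selection argument behind it. I also need the supremum over \(\theta\) in the concentration step to be measurable, which I handle via outer expectations or the standard separability convention. The only genuinely new idea is the TV–oscillation bound that converts the optimistic head's best-case loss back into the true risk.
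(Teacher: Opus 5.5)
Your proposal is correct and follows the paper's route. Convexity of cross-entropy, coverage, and the TV--oscillation inequality give $R_0(\bar f_\theta)\le J_Q(\theta)+\tfrac L2\mathbb E[d_Q(X)]$ and, for diagonal comparators, $J_Q(\theta_f)\le R_0(f)+\tfrac L2\mathbb E[d_Q(X)]$; symmetrisation and McDiarmid give the uniform deviation $2\mathfrak R_n(\mathcal J_Q)+B\sqrt{\log(2/\delta)/(2n)}$, used twice in the approximate-ERM chain (you fix $\theta_f$ and take the infimum at the end rather than passing through $\inf_\theta J_Q(\theta)$, which is equivalent on the same event).
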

The bound separates the effects of model complexity, finite-sample variation, imperfect optimisation, and annotation imprecision. The final term is specific to the unresolved supervision: even when coverage holds, different admissible conditional distributions can assign different risks to the same predictor. Its contribution decreases as the credal labels become more precise and vanishes when \(Q(x)\!=\!\{p_0(x)\}\) almost surely. The proof uses convexity of cross-entropy to compare the averaged predictor with its heads, coverage and diameter bounds to control discrepancies between true and credal risks, and uniform convergence to relate empirical and population objectives; see Appendix~\ref{App:ProofTheorem}.

\textbf{Specialisation to our Credal Construction.}
For \(Q_{\alpha,j}(x)=\{q\in\Delta^{K-1}:q_{j(x)}\ge\alpha(x)\}\) from~\cref{eq:LabelConstruction}, coverage has the explicit interpretation \(p_{0,j(x)}(x)\ge\alpha(x)\) almost surely: the probability lower bound supplied by the annotation must be valid for the true conditional distribution. If all logits in the model class are bounded in absolute value by \(A\), substituting \(B=\log K+2A\), \(L=2A\), and \(d_Q(x)=1-\alpha(x)\) into~\cref{eq:general_bound} gives the imprecision penalty \(2A\,\mathbb E[1-\alpha(X)]\). Appendix~\ref{App:specialization} provides the complete specialised bound and its derivation.

This specialisation makes the relationship between validity and precision explicit. Increasing \(\alpha(x)\) narrows the credal label and reduces the imprecision penalty, but also strengthens the coverage requirement. Containing the reference target \(q_L(x)\) does not by itself establish coverage of \(p_0(x)\). The guarantee therefore applies when the annotation constraints are valid, and quantifies the effect of the uncertainty they leave unresolved. It concerns the cross-entropy risk of the averaged predictor; coverage or calibration of the predictive credal set requires a separate analysis.

% ----------------------------------------
\section{Experiments}
\label{sec: Exp}
% ----------------------------------------
\textbf{Evaluation and Baselines.}
We study annotation imprecision from two sources---annotators' disagreement and teacher predictions in a knowledge-distillation setting---and include label smoothing as a controlled proxy. We report test accuracy (ACC), expected calibration error (ECE)~\citep{guo2017calibration}, and the usefulness of EU estimates for \emph{selective classification}, in which predictions with high uncertainty are rejected. The area under the accuracy--rejection curve (AUARC)~\citep{jaegercall} summarises accuracy on retained predictions across rejection rates. Higher ACC and AUARC and lower ECE are better.
For EU evaluation, we adopt selective classification rather than out-of-distribution (OOD) detection as the latter targets distributional shift in input space, whereas our EU stems from label-space annotation imprecision. OOD evaluation was also recently criticised as an inappropriate evaluation for EU estimation~\citep{li2025position} (see Appendix~\ref{App:WhyNotOOD}). 
To summarise performance, the \emph{balanced quality score} (BQS) averages ACC, \(1-\mathrm{ECE}\), and AUARC after separate min-max normalisation to \([0,1]\) across methods for each seed. 
All experiments are repeated over \(10\) independent seeds (\(\mathrm{seed}\!=\!1, \!\ldots\!, 10\)).
% All experiments are repeated over \(10\) seeds.

Our work introduces a new problem setting, \emph{epistemic learning from credal supervision}, with POCC as one instantiation. To our knowledge, no existing method is designed specifically for this setting. To provide informative comparisons, we adapt established methods for \emph{epistemic uncertainty quantification} to the same supervision sources. Specifically, the baselines are trained on the reference distributional target \(q_L\), while POCC uses the credal label constructed from that same target via~\cref{eq:LabelConstruction}. This design gives all methods access to the same underlying annotation information while accommodating their different supervision representations.
Specifically, we consider four single-pass second-order predictors: evidential deep learning (EDL)~\citep{sensoy2018evidential}, Laplace bridge with Bayesian deep networks (LbBnn)~\citep{hobbhahn2022fast}, efficient credal predictions via decalibration (Decali)~\citep{hofmanefficient}, and Dirichlet-approximated possibilistic posterior predictions (DAPPr)~\citep{nipossibilistic}. We also include inference-heavy but widely used baselines: Monte Carlo dropout (MCDO)~\citep{gal2016dropout} with five forward passes and deep ensembles (DE)~\citep{lakshminarayanan2017simple} with five members. To ensure a fair comparison, batch size, training epochs, optimiser, and learning rate schedule are matched across methods. Implementation details are in Appendix~\ref{App:ImplementationDetails}.

\textbf{Annotator Disagreement.}
For class labels \(y_1,\ldots,y_S\) supplied by \(S\) annotators, we use their empirical distribution \(q_L=S^{-1}\sum_{s=1}^{S}e_{y_s}\) as the distributional training target \(q_L\).

\textbf{\emph{Setup.}}
CIFAR-10H~\citep{peterson2019human} supplies about 50 annotations per image for the CIFAR-10 test set. We train all methods from scratch with a ResNet-18~\citep{he2016deep} backbone, and evaluate them on the last 10,000 samples of the original CIFAR-10 training set to avoid data leakage. On MiceBone and TreeVersity~\citep{schmarje2022one}, we use DenseNet-121~\citep{huang2017densely} pre-trained 
on ImageNet. For these two datasets, MCDO and LbBnn are excluded due to the lack of a compatible pre-trained model and out-of-memory errors on an NVIDIA RTX 5090, respectively.

\textbf{\emph{Results.}}
\tablename~\ref{tab:result} reports performance on CIFAR-10H, with BQS comparison across datasets in \figurename~\ref{fig: DisaggreeResult} and 
additional results in \tablename~\ref{Table: DisagreementDetails}. POCC achieves the highest BQS across all three datasets. Although MCDO and Decali obtain lower ECE, this comes at the cost of substantially worse ACC and AUARC, reflected in their markedly lower BQS.
\begin{table}[!htbp]
\centering
\captionof{table}{CIFAR-10H results (\%). Best in bold; second best underlined.}
\label{tab:result}
\small
\setlength\tabcolsep{2.5pt}
\begin{tabular}{lccc|c}
\toprule
Method & ACC $\uparrow$ & ECE $\downarrow$ & AUARC $\uparrow$ & BQS $\uparrow$ \\
\midrule
EDL    & $85.0 \scriptstyle{\pm 0.4}$ & $9.5 \scriptstyle{\pm 0.3}$ & $96.7 \scriptstyle{\pm 0.1}$ & $33.1 \scriptstyle{\pm 8.3}$ \\
LbBnn  & $\mathbf{87.0 \scriptstyle{\pm 0.8}}$ & $6.6 \scriptstyle{\pm 0.3}$ & $95.9 \scriptstyle{\pm 0.3}$ & $54.8 \scriptstyle{\pm 6.3}$ \\
Decali & $84.3 \scriptstyle{\pm 1.0}$ & $\underline{1.0 \scriptstyle{\pm 0.2}}$ & $94.9 \scriptstyle{\pm 0.5}$ & $35.4 \scriptstyle{\pm 4.5}$ \\
DAPPr  & $86.6 \scriptstyle{\pm 0.3}$ & $6.9 \scriptstyle{\pm 0.2}$ & $\mathbf{97.2 \scriptstyle{\pm 0.1}}$ & $68.3 \scriptstyle{\pm 5.9}$ \\
\rowcolor{OxfordBlueLight}
Ours   & $\mathbf{87.0 \scriptstyle{\pm 0.4}}$ & $2.9 \scriptstyle{\pm 0.3}$ & $\underline{97.1 \scriptstyle{\pm 0.2}}$ & $\mathbf{86.0 \scriptstyle{\pm 6.1}}$ \\
\midrule
MCDO   & $85.6 \scriptstyle{\pm 0.7}$ & $\mathbf{0.9 \scriptstyle{\pm 0.1}}$ & $96.6 \scriptstyle{\pm 0.3}$ & $71.0 \scriptstyle{\pm 10.6}$ \\
DE     & $\mathbf{87.0 \scriptstyle{\pm 0.3}}$ & $4.3 \scriptstyle{\pm 0.2}$ & $96.9 \scriptstyle{\pm 0.1}$ & $\underline{77.6 \scriptstyle{\pm 5.4}}$ \\
\bottomrule
\end{tabular}
\end{table}

\begin{figure}
\centering
\includegraphics[width=0.5\linewidth]{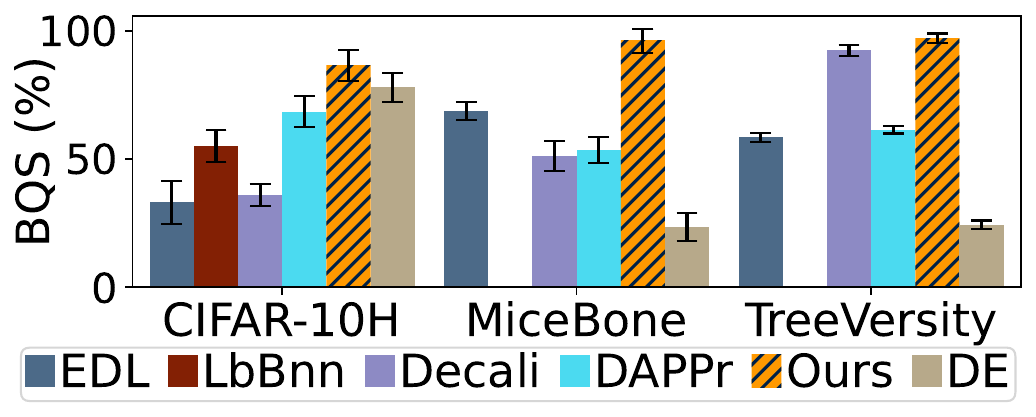}
\captionof{figure}{BQS comparison under annotator disagreement on distinct datasets.}
\label{fig: DisaggreeResult}
\end{figure}

\textbf{Teacher Predictions.}
In the knowledge distillation setting~\citep{hinton2015distilling}, a pre-trained teacher's soft predictions serve as distributional labels \(q_L\).

\textbf{\emph{Setup.}}
We use publicly available teachers~\citep{chenyaofo_pytorch_cifar_models} pre-trained on CIFAR-10 and CIFAR-100~\citep{krizhevsky2009learning}. To assess the generalisability of our findings, we consider three teacher architectures: ResNet-20, ShuffleNetV2-0.5×~\citep{ma2018shufflenet}, and MobileNetV2-x0-5~\citep{sandler2018mobilenetv2}. Student models adopt ResNet-18 for CIFAR-10 and WRN-28-4~\citep{WidResNet} for CIFAR-100, with temperature \(T=2.5\)~\citep{hinton2015distilling}. We additionally ablate on unscaled teacher predictions (\(T=1\)). Implementation details are provided in Appendix~\ref{sub-App:Teacher}.

\textbf{\emph{Results.}}
\tablename~\ref{Table: LabelFromTeacherMain} and \figurename~\ref{Figure: TeacherAblation} show that POCC achieves the best mean ACC, ECE, and BQS across all three teacher architectures and both datasets under temperature-scaled predictions (\(T=2.5\)). Under unscaled predictions (\(T=1\)), POCC achieves the best mean BQS across all three teacher architectures and both datasets. Detailed results for all settings are provided in Tables~\ref{Table: AblationTeacher} and~\ref{Table: AblationTeacher_original}.

\begin{table}[!htbp]
\caption{Teacher-prediction results (\%) with ResNet-20. Best in bold; second best underlined.}
\label{Table: LabelFromTeacherMain}
\centering
\small
\setlength\tabcolsep{4pt}
\begin{tabular}{lccc|c||ccc|c}
\toprule
\multirow{2}{*}{Method} & \multicolumn{4}{c||}{CIFAR-10} & \multicolumn{4}{c}{CIFAR-100} \\
\cmidrule(lr){2-5} \cmidrule(lr){6-9}
& ACC $\uparrow$ & ECE $\downarrow$ & AUARC $\uparrow$ & BQS $\uparrow$ & ACC $\uparrow$ & ECE $\downarrow$ & AUARC $\uparrow$ & BQS $\uparrow$ \\
\midrule
EDL    & $93.4 \scriptstyle{\pm 0.2}$ & $9.0 \scriptstyle{\pm 0.2}$ & $99.1 \scriptstyle{\pm 0.1}$ & $51.6 \scriptstyle{\pm 3.8}$ & $33.1 \scriptstyle{\pm 5.7}$ & $\underline{23.3 \scriptstyle{\pm 3.9}}$ & $65.8 \scriptstyle{\pm 6.1}$ & $30.2 \scriptstyle{\pm 2.9}$ \\
LbBnn  & $\underline{95.1 \scriptstyle{\pm 0.1}}$ & $28.1 \scriptstyle{\pm 0.2}$ & $\underline{99.2 \scriptstyle{\pm 0.0}}$ & $54.2 \scriptstyle{\pm 3.3}$ & $\underline{73.1 \scriptstyle{\pm 0.2}}$ & $63.1 \scriptstyle{\pm 0.2}$ & $63.4 \scriptstyle{\pm 0.4}$ & $31.9 \scriptstyle{\pm 2.4}$ \\
Decali & $93.5 \scriptstyle{\pm 0.1}$ & $8.5 \scriptstyle{\pm 0.1}$ & $98.0 \scriptstyle{\pm 0.1}$ & $29.7 \scriptstyle{\pm 2.5}$ & $71.0 \scriptstyle{\pm 0.2}$ & $27.8 \scriptstyle{\pm 0.2}$ & $73.3 \scriptstyle{\pm 0.3}$ & $64.2 \scriptstyle{\pm 1.8}$ \\
DAPPr  & $93.4 \scriptstyle{\pm 0.2}$ & $14.6 \scriptstyle{\pm 0.2}$ & $99.0 \scriptstyle{\pm 0.0}$ & $42.8 \scriptstyle{\pm 3.1}$ & $72.5 \scriptstyle{\pm 0.4}$ & $52.8 \scriptstyle{\pm 0.3}$ & $78.3 \scriptstyle{\pm 0.3}$ & $55.1 \scriptstyle{\pm 1.6}$ \\
\rowcolor{OxfordBlueLight}
Ours   & $\mathbf{95.5 \scriptstyle{\pm 0.1}}$ & $\mathbf{3.0 \scriptstyle{\pm 0.1}}$ & $\mathbf{99.5 \scriptstyle{\pm 0.0}}$ & $\mathbf{99.9 \scriptstyle{\pm 0.3}}$ & $\mathbf{77.2 \scriptstyle{\pm 0.2}}$ & $\mathbf{10.6 \scriptstyle{\pm 0.4}}$ & $\mathbf{90.6 \scriptstyle{\pm 0.2}}$ & $\mathbf{100.0 \scriptstyle{\pm 0.0}}$ \\
\midrule
MCDO   & $93.5 \scriptstyle{\pm 0.2}$ & $\underline{8.5 \scriptstyle{\pm 0.2}}$ & $99.0 \scriptstyle{\pm 0.1}$ & $50.2 \scriptstyle{\pm 4.8}$ & $71.1 \scriptstyle{\pm 0.3}$ & $28.2 \scriptstyle{\pm 0.3}$ & $83.2 \scriptstyle{\pm 0.7}$ & $75.5 \scriptstyle{\pm 1.3}$ \\
DE     & $94.2 \scriptstyle{\pm 0.1}$ & $9.5 \scriptstyle{\pm 0.1}$ & $99.2 \scriptstyle{\pm 0.0}$ & $\underline{65.3 \scriptstyle{\pm 3.0}}$ & $72.7 \scriptstyle{\pm 0.3}$ & $29.9 \scriptstyle{\pm 0.3}$ & $\underline{86.5 \scriptstyle{\pm 0.3}}$ & $\underline{79.5 \scriptstyle{\pm 0.7}}$ \\
\bottomrule
\end{tabular}
\end{table}
\begin{figure}[!htbp]
\centering
\includegraphics[width=\linewidth]{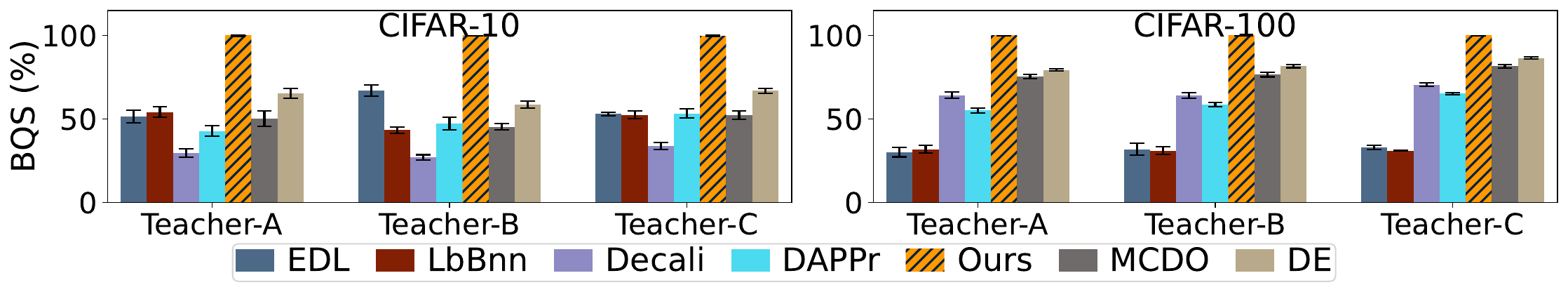}
\caption{BQS across teachers: (A) ResNet-20; (B) ShuffleNetV2-0.5×; (C) MobileNetV2-x0-5.}
\label{Figure: TeacherAblation}
\end{figure}

\textbf{Label Smoothing.}
Label smoothing~\citep{szegedy2016rethinking} redistributes a fraction \(\epsilon\) of a one-hot target's probability mass uniformly across the other classes. Unlike annotator disagreement or teacher predictions, which reflect instance-dependent ambiguity, it applies a fixed perturbation across all samples. It thus serves as a \emph{proxy} for annotation imprecision rather than a direct instantiation of it.

\textbf{\emph{Setup.}}
We use ResNet-18 on CIFAR-10 and WRN-28-4 on CIFAR-100, with label smoothing parameter \(\epsilon\in\{0.05,0.1,0.15,0.2\}\). Configurations are detailed in Appendix~\ref{sub-App:Smooth}.

\textbf{\emph{Results.}}
\tablename~\ref{Table: LabelSmoothingMain} reports results for \(\epsilon=0.05\), with BQS across all \(\epsilon\) visualized in \figurename~\ref{Figure: SmoothingAblation} and detailed results provided in \tablename~\ref{Table: AblationSmoothing}. POCC achieves the highest mean BQS in seven of eight settings; the sole exception is \(\epsilon=0.1\) on CIFAR-10, where it ranks second behind DE (5 members), with the gap attributable to marginal differences in accuracy and AUARC. Statistical significance tests (\tablename~\ref{Table: ST-smoothing}, Appendix~\ref{App:StatisticalTest}) confirm that POCC ranks first statistically across all settings.
\begin{table}[!htpb]
\caption{Label-smoothing results (\%) at \(\epsilon=0.05\) over 10 runs. Best in bold; second best underlined.}
\label{Table: LabelSmoothingMain}
\centering
\small
\begin{tabular}{lccc|c||ccc|c}
\toprule
\multirow{2}{*}{Method} & \multicolumn{4}{c||}{CIFAR-10} & \multicolumn{4}{c}{CIFAR-100} \\
\cmidrule(lr){2-5} \cmidrule(lr){6-9}
& ACC $\uparrow$ & ECE $\downarrow$ & AUARC $\uparrow$ & BQS $\uparrow$ & ACC $\uparrow$ & ECE $\downarrow$ & AUARC $\uparrow$ & BQS $\uparrow$ \\
\midrule
EDL    & $93.4 \scriptstyle{\pm 0.3}$ & $9.1 \scriptstyle{\pm 0.2}$ & $99.1 \scriptstyle{\pm 0.1}$ & $48.6 \scriptstyle{\pm 4.3}$ & $26.3 \scriptstyle{\pm 1.2}$ & $18.7 \scriptstyle{\pm 1.0}$ & $58.5 \scriptstyle{\pm 1.6}$ & $24.3 \scriptstyle{\pm 0.6}$ \\
LbBnn  & $\underline{95.4 \scriptstyle{\pm 0.2}}$ & $29.9 \scriptstyle{\pm 0.3}$ & $99.1 \scriptstyle{\pm 0.1}$ & $53.6 \scriptstyle{\pm 2.6}$ & $\underline{79.3 \scriptstyle{\pm 0.3}}$ & $61.6 \scriptstyle{\pm 0.5}$ & $74.5 \scriptstyle{\pm 0.9}$ & $47.7 \scriptstyle{\pm 1.1}$ \\
Decali & $93.9 \scriptstyle{\pm 0.3}$ & $3.6 \scriptstyle{\pm 0.1}$ & $98.3 \scriptstyle{\pm 0.2}$ & $39.0 \scriptstyle{\pm 5.2}$ & $76.2 \scriptstyle{\pm 0.3}$ & $\underline{3.5 \scriptstyle{\pm 0.4}}$ & $90.7 \scriptstyle{\pm 0.3}$ & $94.0 \scriptstyle{\pm 0.7}$ \\
DAPPr  & $93.9 \scriptstyle{\pm 0.3}$ & $8.9 \scriptstyle{\pm 0.2}$ & $99.1 \scriptstyle{\pm 0.1}$ & $58.5 \scriptstyle{\pm 6.4}$ & $75.1 \scriptstyle{\pm 0.4}$ & $35.3 \scriptstyle{\pm 0.4}$ & $91.9 \scriptstyle{\pm 0.2}$ & $76.5 \scriptstyle{\pm 0.4}$ \\
\rowcolor{OxfordBlueLight}
Ours   & $\mathbf{95.5 \scriptstyle{\pm 0.1}}$ & $\mathbf{1.5 \scriptstyle{\pm 0.1}}$ & $\underline{99.2 \scriptstyle{\pm 0.1}}$ & $\mathbf{90.6 \scriptstyle{\pm 4.7}}$ & $\underline{79.3 \scriptstyle{\pm 0.2}}$ & $\mathbf{2.7 \scriptstyle{\pm 0.2}}$ & $\underline{93.4 \scriptstyle{\pm 0.1}}$ & $\mathbf{99.0 \scriptstyle{\pm 0.3}}$ \\
\midrule
MCDO   & $93.7 \scriptstyle{\pm 0.2}$ & $\underline{3.5 \scriptstyle{\pm 0.1}}$ & $99.1 \scriptstyle{\pm 0.1}$ & $60.2 \scriptstyle{\pm 6.3}$ & $76.1 \scriptstyle{\pm 0.3}$ & $3.8 \scriptstyle{\pm 0.4}$ & $91.8 \scriptstyle{\pm 0.2}$ & $94.9 \scriptstyle{\pm 0.6}$ \\
DE     & $95.2 \scriptstyle{\pm 0.1}$ & $5.5 \scriptstyle{\pm 0.1}$ & $\mathbf{99.4 \scriptstyle{\pm 0.0}}$ & $\underline{89.7 \scriptstyle{\pm 2.3}}$ & $\mathbf{80.3 \scriptstyle{\pm 0.2}}$ & $9.1 \scriptstyle{\pm 0.2}$ & $\mathbf{93.8 \scriptstyle{\pm 0.1}}$ & $\underline{96.4 \scriptstyle{\pm 0.2}}$ \\
\bottomrule
\end{tabular}
\end{table}
\begin{figure}[!htbp]
\centering
\includegraphics[width=\linewidth]{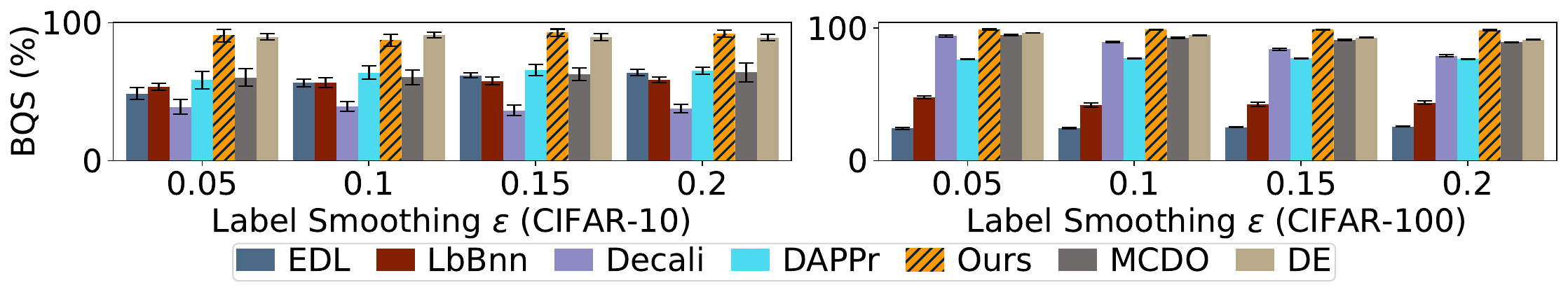}
\caption{BQS comparison across label-smoothing strengths \(\epsilon\).}
\label{Figure: SmoothingAblation}
\end{figure}

\textbf{Discussion.}
Across the three supervision settings, POCC achieves a consistently favourable balance among predictive accuracy, calibration, and uncertainty ranking, obtaining the highest mean BQS in most evaluated configurations. Individual baselines remain competitive and sometimes perform better on particular metrics, but their relative strengths vary across settings.

These findings are consistent with the motivation for explicitly modelling annotation imprecision. Although a distributional target \(q_L\) retains information about relative class support, treating it as fixed does not explicitly represent uncertainty about which alternative labelling distributions remain plausible. POCC incorporates these alternatives into learning through complementary pessimistic and optimistic objectives, and retains both heads' predictions in a predictive credal set. This provides a mechanism through which unresolved supervision can influence the predictions and their associated uncertainty score. The observed balance across metrics suggests that this treatment of supervision ambiguity is practically useful, even when POCC does not achieve the best performance on every individual metric.

\textbf{Additional Analyses.} We provide further empirical evidence that supports the merit of our framework and method. Appendix~\ref{App:StatisticalTest} reports pairwise one-sided Wilcoxon signed-rank tests at the 5\% significance level, confirming that POCC is statistically ranked first in balanced performance across distinct settings. Appendix~\ref{App:FurtherComparison} compares POCC with baselines trained on collapsed hard labels, also showing the best trade-off among ACC, calibration, and EU quantification. Appendices~\ref{App:normalisedAUARC} and~\ref{App:SP} assess balanced performance via normalised AUARC and the EU--Prediction Error correlation, respectively, where POCC consistently achieves the best balance score. Appendix~\ref{App:EUQ} ablates POCC across different EU measures, demonstrating robustness to the choice of measure.

% ----------------------------------------
\section{Conclusion and Future Work}
\label{Sec: conclude}
% ----------------------------------------
We introduced \emph{epistemic learning from credal supervision}, which uses sets of admissible labelling distributions to retain unresolved annotation information in prediction. POCC implements this framework with complementary pessimistic and optimistic heads and closed-form inner optimisation. Under conditional coverage and boundedness assumptions, its averaged predictor admits a finite-sample risk bound with an explicit imprecision term. Across annotator disagreement, teacher predictions, and label smoothing as a controlled proxy, POCC achieves a favourable overall balance of accuracy, calibration, and selective classification.

\textbf{Limitations and Future Work.}
POCC is one instantiation of this framework. Future work will explore alternative predictors and credal-label constructions, and extensions to large foundation models. Our analysis provides a first guarantee under almost-sure conditional coverage, separating the effect of supervision imprecision from that of invalid annotation constraints. This condition need not hold for credal labels constructed from finite annotations or imperfect teacher predictions. A natural extension is to model credal labels as random sets generated by a stochastic annotation mechanism, replacing almost-sure coverage with probabilistic coverage or compatibility in conditional expectation. Such an analysis would account for annotation variability and controlled misspecification alongside the imprecision retained by the supervision.

% \newpage 
% \section*{AI use statement}
% In this work, we used generative AI tools to help polish the writing and check mathematical proofs. We have reviewed all AI-assisted work.

% \section*{Reproducibility Statement}
% Experimental configurations, hyperparameters, and dataset preprocessing procedures are detailed in Appendix~\ref{App:ImplementationDetails}. The datasets used in this 
% work are drawn from existing publicly available benchmarks. To further support reproducibility, the core implementation codes are included in the supplementary materials and will be released 
% publicly upon acceptance under a licence permitting free use for research purposes.

% ----------------------------------------
\bibliography{main}
\bibliographystyle{plainnat}
% ----------------------------------------
\newpage
\appendix
\renewcommand{\thefigure}{A\arabic{figure}}
\setcounter{figure}{0} % Reset the figure counter
\renewcommand{\thetable}{A\arabic{table}}
\setcounter{table}{0} % Reset the table counter
\renewcommand{\theequation}{A\arabic{equation}}
\setcounter{equation}{0} % Reset the table counter
\startcontents[sections]
\printcontents[sections]{}{1}{\section*{Table of Appendix Content}}
\newpage
\section{Related Work}
\label{App:RelatedWork}
Credal sets have attracted increasing attention in the community as a principled framework for representing epistemic uncertainty (EU), with a broad body of work predating their adoption in deep learning and in other statistical applications~\citep{zaffalon2002naive, corani2008learning, corani2012bayesian, maua2017credal, meier2021ensemble,pmlr-v258-manchingal25a, chau2025credaltest,singh2025truthful,yang2026verbalizing}. A central advantage of credal sets lies in their ability to unify set-based and distributional reasoning within a single coherent framework, making them a more natural representation of EU than any single probability distribution~\citep{corani2012bayesian, hullermeier2021aleatoric}. This is because sets more directly encode ignorance as an absence of knowledge~\citep{dubois2002representing}: probability distributions, by contrast, carry implicit assumptions that go beyond merely separating plausible from implausible hypotheses~\citep{lohr2025credal, caprio2024credal}. 

In deep learning, credal predictors have been proposed across several paradigms, though all existing approaches assume that training annotations are precise. These include efficient credal predictions via decalibration (Decali)~\citep{hofmanefficient}, ensemble-based credal predictors~\citep{wang2024CredalEnsembles, wang2025credalwrapper, NguyenCredalEns25, lohr2025credal, wang2026credaldis}, computationally more intensive credal Bayesian neural networks~\citep{caprio2024credal}, and credal predictors designed primarily for interval-valued inputs~\citep{wang2025creinns}. We exclude these methods as baselines for two reasons: (1) they are relatively heavyweight classifiers that conflict with our efficiency objective, and (2) including them would render comparisons unfair given the fundamental difference in computational budget. We also exclude the recent random-set neural network~\citep{manchingal2025randomset}, which outputs belief functions to define credal predictions, as its training requires converting one-hot labels into belief-function form---a procedure incompatible with our experimental setting.
% ----------------------------------------
\section{Proof}
\subsection{Proof of Proposition~\ref{prop:main}}
\label{App:ProofProposition1}
\subsubsection{Setup and Standing Assumptions}
Recall that $\mathcal{X}$ is a Polish space with Borel $\sigma$-algebra $\mathcal{B}(\mathcal{X})$, $\mathcal{Y} = \{1, \dots, K\}$, and 
$$
\Delta^{K-1} = \Big\{ {p} \in \mathbb{R}^K : p_k \ge 0,\, \textstyle\sum\nolimits_{k=1}^{K} p_k = 1 \Big\}.
$$
Since $\mathcal{X}$ is Polish and $\mathcal{Y}$ is finite, $\mathcal{X} \times \mathcal{Y}$ is Polish. We equip $\mathcal{P}(\mathcal{X} \times \mathcal{Y})$ with the \emph{weak topology} (the coarsest making $P \mapsto \int f \, dP$ continuous for every $f \in C_b(\mathcal{X} \times \mathcal{Y})$), which is metrisable. 

We consider a \emph{Credal Markov Kernel} $Q: \mathcal{X}\rightrightarrows \Delta^{K-1}$ subject to the following.
\begin{enumerate}[label=\textbf{Assumption \arabic*.}, leftmargin=*]
  \item $Q(x) \neq \emptyset$ for every $x \in \mathcal{X}$;
  \item $Q(x)$ is closed and convex for every $x \in \mathcal{X}$;
  \item $Q$ is \emph{measurable}, i.e., its graph $\mathrm{Gr}(Q) = \{(x, {p}) : {p} \in Q(x)\}$ belongs to $\mathcal{B}(\mathcal{X}) \otimes \mathcal{B}(\Delta^{K-1})$.
\end{enumerate}
For a fixed input marginal $P_X \in \mathcal{P}(\mathcal{X})$ and a \emph{Credal Markov Kernel} $Q$, the induced family of joint laws is defined as follows:
$$
\mathcal{M}(P_X, Q) =
\bigl\{
P \in \mathcal{P}(\mathcal{X} \times \mathcal{Y}) :
(\pi_{\mathcal{X}})_{\#} P = P_X,\;
P_{Y \mid X = x} \in Q(x),\;
P_X\text{-a.e.\ } x
\bigr\}
$$
Under \emph{Assumptions 1--3}, we prove that the set $\mathcal{M}(P_X, Q)$ is nonempty, convex, and weakly compact in $\mathcal{P}(\mathcal{X} \times \mathcal{Y})$. In particular, it is a \emph{closed convex credal set}. We then show that its admissible risks form a compact interval: the pointwise best-case and worst-case losses can both be realised by joint laws in this set. For this claim, fix a measurable predictor $f$ with measurable per-class losses satisfying $\mathbb{E}_{X\sim P_X}[\max_{k\in\mathcal{Y}}|\ell(f(X),k)|]<\infty$.

\subsubsection{Reparametrisation by Selections}
\begin{lemma}\label{lem:selection}
For $P \in \mathcal{P}(\mathcal{X} \times \mathcal{Y})$ with $(\pi_{\mathcal{X}})_{\#} P = P_X$, define the sub-probability measures $\nu_k(A) = P(A \times \{k\})$ for $A \in \mathcal{B}(\mathcal{X})$.
Then $\nu_k \ll P_X$, and the Radon--Nikodym derivatives $\eta_k = \frac{d\nu_k}{dP_X} \in [0,1]$ satisfy $\sum_{k=1}^K \eta_k = 1$ $P_X$-a.e.
The map
\[
\Phi : P \;\longmapsto\; {\eta} = (\eta_1, \dots, \eta_K) \in L^1(P_X;\, \mathbb{R}^K)
\]
is an affine bijection from $\{P: (\pi_{\mathcal{X}})_{\#} P = P_X\}$ onto $\{{\eta} \in L^1(P_X;\mathbb{R}^K) : {\eta}(x) \in \Delta^{K-1} \; P_X\text{-a.e.}\}$, with conditional $P_{Y \mid X = x} = {\eta}(x)$ and, for every $f \in C_b(\mathcal{X} \times \mathcal{Y})\cong C_b(\mathcal{X})^K$,
\begin{equation}\label{eq:integral-rep}
\int_{\mathcal{X} \times \mathcal{Y}} f \, dP
= \sum_{k=1}^{K} \int_{\mathcal{X}} f(x, k)\, \eta_k(x)\, dP_X(x).
\end{equation}
Consequently, $P \in \mathcal{M}(P_X, Q)$ if and only if ${\eta} \in \mathcal{S}$, where
\[
\mathcal{S}
:= \Bigl\{{\eta} \in L^1(P_X;\mathbb{R}^K) : {\eta}(x) \in Q(x)\; P_X\text{-a.e.}\Bigr\}
\]
is the set of integrable selections of $Q$. 
\end{lemma}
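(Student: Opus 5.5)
The plan is to build the map $\Phi$ directly from the disintegration of $P$ over its first marginal. Then I check the four claims in turn: absolute continuity with the simplex constraint, the integral representation, bijectivity with affinity, and finally the characterisation of $\mathcal M(P_X,Q)$.

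\textbf{Absolute continuity and the simplex constraint.} Fix $P$ with $(\pi_{\mathcal X})_{\#}P=P_X$. For each $k$ and each $A\in\mathcal B(\mathcal X)$,
\[
\nu_k(A)=P(A\times\{k\})\le P(A\times\mathcal Y)=P_X(A).
\]
Two consequences follow at once. First, $\nu_k\ll P_X$, so Radon--Nikodym gives $\eta_k=d\nu_k/dP_X$. Second, since $\nu_k\le P_X$ as measures, $\int_A\eta_k\,dP_X\le P_X(A)$ for every $A$, which forces $0\le\eta_k\le1$ a.e. Next, $\sum_k\nu_k=P_X$ because $\mathcal Y$ is finite. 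Uniqueness of the density then gives $\sum_k\eta_k=1$ $P_X$-a.e. After modifying on a null set, we may take $\eta(x)\in\Delta^{K-1}$ everywhere; in particular $\eta$ is bounded and hence lies in $L^1(P_X;\mathbb R^K)$.

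\textbf{Integral representation.} I prove \eqref{eq:integral-rep} by the standard measure-theoretic ladder:
\begin{itemize}
\item it holds for indicators of rectangles $A\times\{k\}$ by definition of $\eta_k$;
\item it extends by linearity to simple functions;
\item it extends by monotone or dominated convergence to all bounded measurable $f$, in particular to $f\in C_b$.
\end{itemize}
This identity also shows that $x\mapsto\eta(x)$ is a version of the regular conditional distribution $P_{Y\mid X=x}$. Such a version exists and is $P_X$-a.e.\ unique, here trivially because $\mathcal Y$ is finite.

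\textbf{Bijectivity and affinity.} Injectivity holds because the rectangles $A\times\{k\}$ form a $\pi$-system generating $\mathcal B(\mathcal X)\otimes2^{\mathcal Y}$. Their $P$-masses are $\int_A\eta_k\,dP_X$, so $\eta$ determines $P$. For surjectivity, given a measurable $\eta$ with $\eta(x)\in\Delta^{K-1}$ a.e., define
\[
P(C)=\sum_k\int\mathbf 1_C(x,k)\,\eta_k(x)\,dP_X(x).
\]
This is countably additive by monotone convergence and has total mass $1$. Its first marginal is $P_X$ because $\sum_k\eta_k=1$, and by construction it maps to $\eta$. Affinity holds because Radon--Nikodym derivatives are linear in the measure: a mixture $tP+(1-t)P'$ has $\nu_k$ equal to the corresponding mixture, hence density $t\eta+(1-t)\eta'$.

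\textbf{Characterisation of $\mathcal M(P_X,Q)$.} This is immediate from the conditional-law identification. Since $P_{Y\mid X=x}=\eta(x)$ a.e., the condition $P_{Y\mid X=x}\in Q(x)$ a.e.\ is exactly $\eta\in\mathcal S$. Because conditionals are only defined a.e., the membership condition must be read modulo null sets, which is how $\mathcal M$ is defined.

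\textbf{Main obstacle.} The only delicate point is the identification of $\eta$ with the conditional law together with the a.e.\ bookkeeping. This is where I expect care is needed, though the finiteness of $\mathcal Y$ makes it routine. No measurability of $Q$ is needed for this lemma; measurability is used only later, for non-emptiness of $\mathcal S$ via a measurable selection theorem.
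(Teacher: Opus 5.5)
Your proposal is correct and follows essentially the same route as the paper. The domination $\nu_k(A)\le P_X(A)$ gives absolute continuity, $\eta_k\le 1$ and $\sum_k\eta_k=1$; the integral identity comes from integrating each slice $f(\cdot,k)$ against $\nu_k$; and the characterisation of $\mathcal M(P_X,Q)$ is read off from $P_{Y\mid X=x}=\eta(x)$. Your explicit surjectivity construction and $\pi$-system injectivity argument spell out what the paper compresses into the remark that $P$ is determined by $\eta$ through affine integrals. Your ladder, extended one more step by monotone convergence, also gives the identity for measurable integrands with an integrable envelope, which is what Step 7 of the proof of Proposition~\ref{prop:main} later invokes.
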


\begin{proof}
Each $\nu_k$ is absolutely continuous w.r.t. $P_X$ because $\nu_k(A) \le P(A \times \mathcal{Y}) = P_X(A)$; the same bound gives $\eta_k \le 1$ a.e., and $\sum_k \nu_k = P_X$ gives $\sum_k \eta_k = 1$ a.e.
Finiteness of $\mathcal{Y}$ is precisely what permits the conditional to be a bounded Radon--Nikodym derivative rather than a regular conditional probability.
Equation~\ref{eq:integral-rep} follows by writing $f = (f(\cdot, 1), \dots, f(\cdot, K))$ and integrating each slice against $\nu_k$; it shows $P$ is determined by ${\eta}$ through integrals that are affine in ${\eta}$, so $\Phi$ is an affine bijection.
The final equivalence is immediate from $P_{Y \mid X = x} = {\eta}(x)$.
\end{proof}

It therefore suffices to show that $\mathcal{S}$ is nonempty, convex, and weakly compact in $L^1$, and then transfer the topology along $\Phi$.
\subsubsection{Formal Proof of Proposition~\ref{prop:main}}
\begin{proof}
\emph{\textbf{Step 1: nonemptiness.}} By \emph{Assumptions 1--3}, $Q$ is a measurable multifunction with nonempty closed values, so the Kuratowski--Ryll-Nardzewski selection theorem yields a measurable ${\eta} : \mathcal{X} \to \Delta^{K-1}$ with ${\eta}(x) \in Q(x)$ for all $x$.
Since ${\eta}$ is bounded, ${\eta} \in \mathcal{S}$; hence $\mathcal{S} \neq \emptyset$ and $\mathcal{M}(P_X, Q) \neq \emptyset$.

\emph{\textbf{Step 2: convexity.}} Let ${\eta}, {\eta}' \in \mathcal{S}$ and $\lambda \in [0,1]$. For a.e.\ $x$, we have ${\eta}(x), {\eta}'(x) \in Q(x)$, and $Q(x)$ is convex by \emph{Assumption 2}, so $\lambda\, {\eta}(x) + (1-\lambda)\, {\eta}'(x) \in Q(x)$.
Thus $\mathcal{S}$ is convex; since $\Phi^{-1}$ is affine by \cref{eq:integral-rep}, $\mathcal{M}(P_X, Q)$ is convex.

\emph{\textbf{Step 3: $\mathcal{S}$ is norm-closed in $L^1$.}} Let ${\eta}^{(m)} \to {\eta}$ in $L^1(P_X;\mathbb{R}^K)$.
Pass to a subsequence with ${\eta}^{(m_j)}(x) \to {\eta}(x)$ for a.e.\ $x$. For such $x$, ${\eta}^{(m_j)}(x) \in Q(x)$ and $Q(x)$ is closed, whence ${\eta}(x) \in Q(x)$.
Therefore ${\eta} \in \mathcal{S}$.

\emph{\textbf{Step 4: $\mathcal{S}$ is weakly closed in $L^1$.}} $\mathcal{S}$ is convex (Step 2) and norm-closed (Step 3). By Mazur's theorem, a convex norm-closed subset of a Banach space is weakly closed. 
Here the convexity established in Step 2 permits the passage from norm closedness to weak closedness.

\emph{\textbf{Step 5: weak closedness of $\mathcal{M}(P_X, Q)$.}} Let $P^{(n)} \in \mathcal{M}(P_X, Q)$ with $P^{(n)} \rightharpoonup P$ weakly in $\mathcal{P}(\mathcal{X} \times \mathcal{Y})$.
Since $P \mapsto (\pi_{\mathcal{X}})_{\#} P$ is weakly continuous, $(\pi_{\mathcal{X}})_{\#} P = P_X$, so ${\eta} := \Phi(P)$ and ${\eta}^{(n)} := \Phi(P^{(n)})$ are well defined.
By \cref{eq:integral-rep}, for every $g \in C_b(\mathcal{X})$ and each $k$,
\begin{equation}
\int_{\mathcal{X}} g\, \eta^{(n)}_k \, dP_X
\;\longrightarrow\;
\int_{\mathcal{X}} g\, \eta_k \, dP_X.
\label{eq: weak-closedness}
\end{equation}
The family $({\eta}^{(n)})$ takes values in $\Delta^{K-1}$, hence is bounded in $L^\infty$ and uniformly integrable; by the Dunford--Pettis theorem it is relatively weakly compact in $L^1$.
Let ${\zeta}$ be any weak-$L^1$ cluster point.
Then~\cref{eq: weak-closedness} gives 
$$\int g(\zeta_k - \eta_k)\, dP_X = 0$$
for all $g \in C_b(\mathcal{X})$. As $C_b(\mathcal{X})$ is $L^1(P_X)$-determining---for any $A \in \mathcal{B}(\mathcal{X})$, Lusin's theorem provides $g_j \in C_b(\mathcal{X})$ with $\|g_j\|_\infty \le 1$ and $g_j \to \mathbf{1}_A$ in $L^1(P_X)$, whence $\int_{A}(\zeta_k-\eta_k)dP_X=0$---we conclude ${\zeta} = {\eta}$ a.e. Thus every cluster point equals ${\eta}$, so ${\eta}^{(n)} \rightharpoonup {\eta}$ weakly in $L^1$.
Since ${\eta}^{(n)} \in \mathcal{S}$ and $\mathcal{S}$ is weakly closed (Step~4), ${\eta} \in \mathcal{S}$, i.e.\ $P \in \mathcal{M}(P_X, Q)$.
The metrisability of the weak topology in $\mathcal{P}(\mathcal{X} \times \mathcal{Y})$ makes sequential closedness sufficient.

\emph{\textbf{Step 6: tightness and weak compactness.}} Fix $\varepsilon > 0$. By Ulam's theorem ($P_X$ is Radon on a Polish space), choose a compact $K_\varepsilon \subseteq \mathcal{X}$ with $P_X(K_\varepsilon) \ge 1 - \varepsilon$. As $\mathcal{Y}$ is finite, $K_\varepsilon \times \mathcal{Y}$ is compact, and for every $P \in \mathcal{M}(P_X, Q)$,
\[
P(K_\varepsilon \times \mathcal{Y}) = P_X(K_\varepsilon) \ge 1 - \varepsilon.
\]
Hence $\mathcal{M}(P_X, Q)$ is \emph{uniformly tight}, so relatively weakly compact by Prokhorov's theorem.
Combined with weak closedness (Step~5), $\mathcal{M}(P_X, Q)$ is weakly compact.

\emph{\textbf{Step 7: the credal risk profile.}} To identify the full range of admissible risks, we first construct joint laws attaining its endpoints. Write $c_k(x)=\ell(f(x),k)$ and $a(x)=\max_k|c_k(x)|$. The integrability assumption makes $a$ finite $P_X$-a.e.; if necessary, set all $c_k$ to zero on the common exceptional null set, which does not affect any risk because every admissible joint law has marginal $P_X$. The function $L(x,q)=\sum_{k=1}^K q_kc_k(x)$ is measurable in $x$, continuous in $q$, and satisfies $|L(x,q)|\le a(x)$ for every $q\in\Delta^{K-1}$, up to the same null-set modification.

Since $Q$ is measurable with nonempty compact values, the measurable maximum theorem gives measurable minimum and maximum values and nonempty compact sets of minimisers and maximisers. Measurable selection therefore yields $q_+$ and $q_-$ such that
\[
q_+(x)\in\operatorname*{arg\,min}_{q\in Q(x)}L(x,q),
\qquad
q_-(x)\in\operatorname*{arg\,max}_{q\in Q(x)}L(x,q).
\]
Their induced joint laws, defined by $P_\pm(A\times\{k\})=\int_A q_{\pm,k}(x)\,dP_X(x)$, belong to $\mathcal{M}(P_X,Q)$. The integrable envelope $a$ ensures that all admissible risks are finite and that the integral representation in Lemma~\ref{lem:selection} also applies to these losses. Consequently,
\[
\mathbb{E}_{P_+}[\ell(f(X),Y)]=R_+(f),
\qquad
\mathbb{E}_{P_-}[\ell(f(X),Y)]=R_-(f),
\]
with $R_+$ and $R_-$ as defined in~\cref{eq: instance_risk}. Every $P\in\mathcal{M}(P_X,Q)$ has a conditional selection $\eta(x)\in Q(x)$ $P_X$-a.e., so integrating the pointwise inequalities $\min_{q\in Q(x)}L(x,q)\le L(x,\eta(x))\le\max_{q\in Q(x)}L(x,q)$ gives
\[
R_+(f)\le\mathbb{E}_{P}[\ell(f(X),Y)]\le R_-(f).
\]
Conversely, convexity of $\mathcal{M}(P_X,Q)$ ensures that $(1-t)P_++tP_-$ is admissible for each $t\in[0,1]$, with risk $(1-t)R_+(f)+tR_-(f)$. Thus every value between the attained endpoints is admissible, proving the compact-interval identity in~\cref{eq:risk_profile}.
\end{proof}
\begin{remark}
The compactness argument uses Polishness through metrisability of the weak topology, the determining properties of $C_b(\mathcal{X})$, and tightness of $P_X$. Finiteness of $\mathcal{Y}$ gives the selection representation, while convexity of the values yields both convexity and weak closedness of the selection set. For a bounded continuous loss on $\mathcal{X}\times\mathcal{Y}$, weak compactness alone ensures that the extreme risks are attained. Step 7 establishes attainment more generally through measurable selection and an integrable envelope; it requires no continuity of the loss in $x$.
\end{remark}

\subsection{Proof of Proposition~\ref{prop:closed_solution}}
\label{App:ProofProposition2}
\begin{proof}
The mixture representation reduces each inner optimisation to allocating the unrestricted probability mass. Fix an input \(x\), write \(j=j(x)\) and \(\alpha=\alpha(x)\), and consider a prediction \(p\) with strictly positive coordinates. Set \(c_k=-\log p_k\). Since \(\mathcal Q_\alpha(x)=\alpha e_j+(1-\alpha)\Delta^{K-1}\), every admissible target has the form \(q=\alpha e_j+(1-\alpha)r\) for some \(r\in\Delta^{K-1}\). This set of mixtures is an \(\epsilon\)-contamination class with \(\epsilon=1-\alpha\). Its cross-entropy loss is therefore
\[
\ell(p,q)=\alpha c_j+(1-\alpha)\sum_{k=1}^{K}r_kc_k.
\]
The sum is a weighted average of the per-class losses, so it lies between \(\min_k c_k\) and \(\max_k c_k\). Both bounds are attained by placing all mass of \(r\) on a class attaining the corresponding extremum. As \(-\log\) is decreasing, the largest loss corresponds to a least probable class, and the smallest loss to a most probable class.

Applying this argument to the two heads, choose \(k_{\min}\in\arg\min_k p_{-,k}\) and \(k_{\max}\in\arg\max_k p_{+,k}\). An inner maximiser for the pessimistic head and an inner minimiser for the optimistic head are, respectively,
\[
q_-^*=\alpha e_j+(1-\alpha)e_{k_{\min}},
\qquad
q_+^*=\alpha e_j+(1-\alpha)e_{k_{\max}}.
\]
Any tied extremal class gives a valid solution; uniqueness is not required. If a selected class is \(j\), its target reduces to \(e_j\). When \(\alpha=1\), both expressions reduce to the only admissible target \(e_j\).
\end{proof}

\subsection{Proof of Theorem~\ref{theorem:main}}
\label{App:ProofTheorem}
\subsubsection{Setup and Standing Assumptions}
The proof connects the joint training objective to the true risk of the averaged predictor. Throughout, \(Q\) is a fixed measurable credal kernel satisfying the regularity conditions of Section~\ref{sec: Method}, and we retain the head convention of Section~\ref{sec:EfficientModel}: the pessimistic head \(f_{-,\theta}\) minimises an upper risk, while the optimistic head \(f_{+,\theta}\) minimises a lower risk. We use the normalised population and empirical objectives corresponding to~\cref{eq:total_loss}:
\begin{equation}
J_Q(\theta) := \frac{1}{2}\left\{R_-(f_{-,\theta}) + R_+(f_{+,\theta})\right\}, \quad
\widehat J_{Q,n}(\theta) := \frac{1}{2}\left\{\widehat R_{-,n}(f_{-,\theta}) + \widehat R_{+,n}(f_{+,\theta})\right\}.
\label{eq: joint-objectives}
\end{equation}
For \(q\in\Delta^{K-1}\) and a probabilistic prediction \(f(x)\in\Delta^{K-1}\), the cross-entropy loss is
\begin{equation}
H(q,f(x)) := -\sum\nolimits_{k=1}^K q_k \log f_k(x).
\label{eq:cross-entropy}
\end{equation}
The true risk of any ordinary first-order predictor \(f\) under cross-entropy is then
\begin{equation}
R_0(f) := \mathbb E_X H(p_0(X), f(X)) = \mathbb E_{(X,Y)\sim P_0}[-\log f_Y(X)].
\label{eq:true-risk}
\end{equation}
For clarity, define the pointwise upper and lower losses by
\begin{equation}
\ell_Q^-(f,x):=\max_{q\in Q(x)}H(q,f(x)),\qquad
\ell_Q^+(f,x):=\min_{q\in Q(x)}H(q,f(x)).
\label{eq:pointwise-credal-losses}
\end{equation}
Their population risks are \(R_-(f)=\mathbb E_X[\ell_Q^-(f,X)]\) and \(R_+(f)=\mathbb E_X[\ell_Q^+(f,X)]\), as in Section~\ref{sec: Method}; their empirical counterparts are \(\widehat R_{\pm,n}(f)=n^{-1}\sum_{i=1}^n\ell_Q^\pm(f,X_i)\).

We adopt the following assumptions throughout the proof.
\begin{enumerate}[label=\textbf{Assumption \arabic*.}, leftmargin=*]
  \item (Conditional coverage) For \(P_X\)-almost every \(x\),
        \begin{equation}
        p_0(x)\in Q(x).
        \label{eq:coverage}
        \end{equation}
  \item (Sampling) The inputs \(X_1,\dots,X_n\overset{\mathrm{iid}}{\sim}P_X\), and the corresponding credal labels \(Q(X_i)\) are observed. The credal kernel \(Q\) and the model class are fixed independently of this sample.
  \item (Uniformly bounded cross-entropy) There exist \(B<\infty\) and a measurable set \(\mathcal X_0\subseteq\mathcal X\) with \(P_X(\mathcal X_0)=1\) such that, simultaneously for every \(x\in\mathcal X_0\), \(\theta\in\boldsymbol\Theta\), and \(q\in Q(x)\),
        \begin{equation}
        0\le H(q,f_{-,\theta}(x))\le B,\qquad 0\le H(q,f_{+,\theta}(x))\le B.
        \label{eq:bounded-loss}
        \end{equation}
        The common set \(\mathcal X_0\) ensures that this bound also applies to parameters selected from the observed sample.
  \item (Bounded variation in classwise losses) The difference between the largest and smallest classwise log losses, called their oscillation, is uniformly bounded: there exists \(L<\infty\) such that every prediction used by the pair class satisfies
        \begin{equation}
        \operatorname{osc}(-\log f(x)):=\max_k[-\log f_k(x)]-\min_k[-\log f_k(x)]\le L
        \label{eq:oscillation-bound}
        \end{equation}
        for every \(x\in\mathcal X_0\), simultaneously for all parameters and both heads.
  \item (Approximate joint ERM) The learned parameter \(\widehat\theta_n\) satisfies
        \begin{equation}
        \widehat J_{Q,n}(\widehat\theta_n)\le \inf_{\theta\in\boldsymbol\Theta}\widehat J_{Q,n}(\theta)+\eta_n,
        \label{eq:approx-joint-erm}
        \end{equation}
        where \(\eta_n\ge0\) is the optimisation error in the normalised objective. If the unnormalised objective is optimised to tolerance \(\widetilde\eta_n\), then \(\eta_n=\widetilde\eta_n/2\).
  \item (Nonempty comparator class) At least one ordinary predictor can be represented by making the two heads identical. Thus, the corresponding comparator class is nonempty:
        \begin{equation}
        \mathcal F_{\Delta}:=\big\{f:\exists\theta_f\in\boldsymbol\Theta \text{ such that }f_{-,\theta_f}=f_{+,\theta_f}=f\big\}\ne\varnothing.
        \label{eq:diagonal-class}
        \end{equation}
\end{enumerate}

Let \(\widehat{\theta}_n\) be an \(\eta_n\)-approximate empirical minimiser of~\cref{eq: joint-objectives}, and let
\[
\widehat{\bar f}_n := \bar f_{\widehat{\theta}_n}
= \frac{f_{-,\widehat{\theta}_n} + f_{+,\widehat{\theta}_n}}{2}
\]
denote the corresponding averaged predictor defined in Section~\ref{sec:EfficientModel}. The learned pair need not lie on the diagonal \(f_+=f_-\), so we compare its true risk with that of the best ordinary predictor in \(\mathcal F_{\Delta}\).

Under assumptions in~\cref{eq:coverage,eq:bounded-loss,eq:oscillation-bound,eq:approx-joint-erm,eq:diagonal-class}, we prove the generic generalisation bound of POCC stated in Theorem~\ref{theorem:main}:
\[
\boxed{
R_0(\widehat{\bar f}_n) \!-\! \inf_{f\in\mathcal F_{\Delta}}R_0(f)
\!\leq\!\!\!\!\!
\underbrace{4\mathfrak R_n(\mathcal J_Q)}_{\text{model class complexity}}
\!+\!\!\!
\underbrace{2B\sqrt{\frac{\log(2/\delta)}{2n}}}_{\text{finite-sample estimation error}}
\!+\!\!\!
\underbrace{\eta_n}_{\text{optimisation gap}}
\!+\!\!\!
\underbrace{L\overline d_Q}_{\text{annotation imprecision}}.
}
\]
\subsubsection{Elementary Ingredients for the Proof}
We first establish several elementary results used in the proof of Theorem~\ref{theorem:main}. Coverage makes the true conditional distribution feasible in both pointwise optimisations, so \(R_+(f)\le R_0(f)\le R_-(f)\). The following argument controls how far either endpoint can lie from the true risk.
\paragraph{Credal Diameter Controls Ambiguity Gaps.}
Define the pointwise total-variation (TV) diameter
\begin{equation}
d_Q(x):=\sup_{q,r\in Q(x)}\operatorname{TV}(q,r), \qquad \overline d_Q:=\mathbb E_X[d_Q(X)],
\label{eq:credal-diameter}
\end{equation}
where, on the finite label space,
\[\operatorname{TV}(q,r)=\frac12\|q-r\|_1.\]

\begin{lemma}[Total Variation--Oscillation Inequality]
For any $q,r\in\Delta^{K-1}$ and $a\in\mathbb R^K$,
\begin{equation}
|\inner{q-r}{a}|
\le
\operatorname{TV}(q,r)\,\operatorname{osc}(a),
\qquad
\operatorname{osc}(a)=\max_k a_k-\min_k a_k.
\label{eq:tv-oscillation}
\end{equation}
\end{lemma}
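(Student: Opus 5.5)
The plan is to exploit the fact that $q-r$ has zero total mass, so adding any constant vector to $a$ leaves $\inner{q-r}{a}$ unchanged. Let $c:=\tfrac12(\max_k a_k+\min_k a_k)$ and replace $a$ by the centred vector $a-c\mathbf 1$. This shift does not change $\inner{q-r}{a}$, because $\inner{q-r}{\mathbf 1}=\sum_k q_k-\sum_k r_k=0$ for $q,r\in\Delta^{K-1}$. Every coordinate of the centred vector satisfies $|a_k-c|\le\tfrac12\operatorname{osc}(a)$.

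With this reduction, Hölder's inequality between $\ell^1$ and $\ell^\infty$ gives
\[
|\inner{q-r}{a}|=|\inner{q-r}{a-c\mathbf 1}|\le \|q-r\|_1\,\|a-c\mathbf 1\|_\infty\le \|q-r\|_1\cdot\tfrac12\operatorname{osc}(a)=\operatorname{TV}(q,r)\,\operatorname{osc}(a),
\]
where the last equality uses $\operatorname{TV}(q,r)=\tfrac12\|q-r\|_1$.

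An alternative route, which I would mention as a check, splits $q-r$ into its positive and negative parts $(q-r)_+$ and $(q-r)_-$. Each part has total mass $\operatorname{TV}(q,r)$, again because the total mass of $q-r$ is zero. Then $\inner{q-r}{a}\le \operatorname{TV}(q,r)\max_k a_k-\operatorname{TV}(q,r)\min_k a_k$, and the same bound follows for $-(q-r)$.

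There is no substantive obstacle. The only point needing care is to state explicitly that the shift invariance relies on $q$ and $r$ both being probability vectors. This is also exactly why the oscillation of $a$, rather than its sup-norm, appears in the bound. That distinction is what later lets the $L$ from the oscillation assumption control the gap between the endpoints of the risk interval $[R_+,R_-]$ and $R_0$.
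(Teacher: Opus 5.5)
Your proof is correct and is essentially the paper's own argument. You centre $a$ at the midrange $c=\tfrac12(\max_k a_k+\min_k a_k)$, justify the shift by $\inner{q-r}{\mathbf 1}=0$, and then apply H\"older's inequality with $\|q-r\|_1=2\operatorname{TV}(q,r)$ and $\|a-c\mathbf 1\|_\infty=\tfrac12\operatorname{osc}(a)$; your positive/negative-part check is also valid but not needed.
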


\begin{proof}
Let
\[
m:=\frac{\max_k a_k+\min_k a_k}{2}.
\]
Because $q$ and $r$ both sum to one,
\[
\inner{q-r}{\bm 1}=0.
\]
Consequently,
\[
\inner{q-r}{a}
=\inner{q-r}{a-m\bm 1}.
\]
By H\"older's inequality,
\[
|\inner{q-r}{a-m\bm 1}|
\le
\|q-r\|_1\,\|a-m\bm 1\|_\infty.
\]
The two factors satisfy
\[
\|q-r\|_1=2\operatorname{TV}(q,r),
\qquad
\|a-m\bm 1\|_\infty=\frac{\operatorname{osc}(a)}2.
\]
Multiplying them gives~\cref{eq:tv-oscillation}.
\end{proof}
The upper and lower ambiguity gaps measure the two endpoint deviations from the true risk:
\begin{equation}
A_Q^-(f):=R_-(f)-R_0(f)\ge0, \quad A_Q^+(f):=R_0(f)-R_+(f)\ge0.
\label{eq:upper-lower-gaps}
\end{equation}
\begin{proposition}[Generic Ambiguity-gap Bounds]
Under conditional coverage, for every $f$,
\begin{align}
0\le A_Q^-(f)
&\le
\mathbb E_X\!\left[d_Q(X)\,\operatorname{osc}(-\log f(X))\right],
\label{eq:upper-gap-diameter}\\
0\le A_Q^+(f)
&\le
\mathbb E_X\!\left[d_Q(X)\,\operatorname{osc}(-\log f(X))\right].
\label{eq:lower-gap-diameter}
\end{align}
Under the uniform oscillation bound~\cref{eq:oscillation-bound}, both gaps are at most $L\overline d_Q$.
\end{proposition}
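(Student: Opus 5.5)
The plan is to argue pointwise in \(x\) and then integrate over \(P_X\). Fix a predictor \(f\) and an input \(x\) in the common full-measure set \(\mathcal X_0\) on which coverage \(p_0(x)\in Q(x)\) also holds; the exceptional set is \(P_X\)-null and does not affect any of the expectations. Write \(a(x):=-\log f(x)\in\mathbb R^K\), so that \(H(q,f(x))=\inner{q}{a(x)}\) is linear in the target \(q\).

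\emph{Nonnegativity.} Since \(p_0(x)\) is feasible in both pointwise problems in~\cref{eq:pointwise-credal-losses}, we have
\[
\ell_Q^+(f,x)\le H(p_0(x),f(x))\le \ell_Q^-(f,x).
\]
Taking expectations gives \(R_+(f)\le R_0(f)\le R_-(f)\), that is, \(A_Q^\pm(f)\ge0\). Here we use the attainment in Proposition~\ref{prop:main}, or just the defining extrema, to ensure the expectations are well defined.

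\emph{Upper bounds.} For the upper gap, pick a maximiser \(q^-(x)\in Q(x)\). It exists because \(Q(x)\) is compact and the objective is continuous in \(q\), and the measurable selection from Step 7 of the proof of Proposition~\ref{prop:main} lets us choose it measurably in \(x\). Then
\[
\ell_Q^-(f,x)-H(p_0(x),f(x))=\inner{q^-(x)-p_0(x)}{a(x)}.
\]
By the Total Variation--Oscillation inequality~\cref{eq:tv-oscillation}, this is at most \(\operatorname{TV}(q^-(x),p_0(x))\operatorname{osc}(a(x))\). Both arguments of the TV term lie in \(Q(x)\), so the TV term is at most \(d_Q(x)\). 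The lower gap is handled symmetrically with a minimiser \(q^+(x)\), bounding \(\inner{p_0(x)-q^+(x)}{a(x)}\) the same way. Integrating these pointwise bounds over \(X\sim P_X\) gives~\cref{eq:upper-gap-diameter} and~\cref{eq:lower-gap-diameter}. Under~\cref{eq:oscillation-bound}, \(\operatorname{osc}(a(x))\le L\) on \(\mathcal X_0\), so both gaps are bounded by \(L\,\mathbb E_X[d_Q(X)]=L\overline d_Q\).

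\emph{Main obstacle.} The only delicate point is integrability and measurability. The functions \(x\mapsto\ell_Q^\pm(f,x)\), \(d_Q\) and \(\operatorname{osc}(a)\) must be measurable, and \(R_0(f)\) must be finite so that the differences \(A_Q^\pm(f)\) are meaningful rather than \(\infty-\infty\).

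I would handle this first via the bounded-loss assumption~\cref{eq:bounded-loss}, or the integrable envelope \(\max_k|\ell(f(X),k)|\) of Proposition~\ref{prop:main}, for the relevant \(f\). Measurability of \(d_Q\) follows from the measurability of \(Q\), since the supremum can be taken over a countable dense family of measurable selections. If \(R_0(f)=\infty\) for a general \(f\), the inequalities should be read on the set of \(f\) with finite risks, which covers every predictor used in the theorem.
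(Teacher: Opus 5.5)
Your proof is correct and follows the paper's own argument. Coverage gives nonnegativity, and the pointwise gap is bounded through the total variation--oscillation inequality with $\operatorname{TV}(q,p_0(x))\le d_Q(x)$ and then integrated. The one cosmetic difference is that the paper proves $(q-p_0(x))^\top a_f(x)\le d_Q(x)\operatorname{osc}(a_f(x))$ for every feasible $q$ and only then takes the supremum, so it needs neither attainment of the extremum nor the measurable selection of the extremiser that you invoke.
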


\begin{proof}
Set
\[
a_f(x):=-\log f(x)
=\bigl(-\log f_1(x),\dots,-\log f_K(x)\bigr).
\]
For the upper gap, use coverage and write
\begin{align*}
\ell_Q^-(f,x)-H(p_0(x),f(x))=\sup_{q\in Q(x)}q^\top a_f(x)-p_0(x)^\top a_f(x)=\sup_{q\in Q(x)}(q-p_0(x))^\top a_f(x).
\end{align*}
For each feasible $q$, both $q$ and $p_0(x)$ belong to $Q(x)$, so
\[
\operatorname{TV}(q,p_0(x))\le d_Q(x).
\]
Applying \cref{eq:tv-oscillation},
\[
(q-p_0(x))^\top a_f(x)
\le d_Q(x)\operatorname{osc}(a_f(x)).
\]
The right-hand side no longer depends on $q$, so the same bound holds after taking the supremum. Integrating proves~\cref{eq:upper-gap-diameter}.

For the lower gap,
\begin{align*}
H(p_0(x),f(x))-\ell_Q^+(f,x)&=p_0(x)^\top a_f(x)-\inf_{q\in Q(x)}q^\top a_f(x)\\
&=\sup_{q\in Q(x)}(p_0(x)-q)^\top a_f(x),
\end{align*}
and the identical argument proves~\cref{eq:lower-gap-diameter}. The final assertion follows directly from $\operatorname{osc}(-\log f(X))\le L$ in~\cref{eq:oscillation-bound}.
\end{proof}
\paragraph{Population Risk of an Arbitrary Averaged Dual-head Predictor.}
Next, we establish the central bridge between the algorithm's joint credal objective and the true risk of its averaged prediction.
\begin{proposition}[Population Risk Bound for the Average]
For every $\theta\in\boldsymbol\Theta$, under conditional coverage,
\begin{equation}
R_0(\bar f_\theta)\le J_Q(\theta) +\frac12\left\{A_Q^+(f_{+,\theta})-A_Q^-(f_{-,\theta})\right\}.
\label{eq:exact-average-decomposition}
\end{equation}
Consequently,
\begin{align}
R_0(\bar f_\theta)
&\le J_Q(\theta)+\frac12A_Q^+(f_{+,\theta}),
\label{eq:average-lower-gap-only}\\
&\le J_Q(\theta)
+\frac12\mathbb E_X\!\left[d_Q(X)\operatorname{osc}(-\log f_{+,\theta}(X))\right],
\label{eq:average-diameter-local}\\
&\le J_Q(\theta)+\frac L2\overline d_Q.
\label{eq:average-diameter-uniform}
\end{align}
\end{proposition}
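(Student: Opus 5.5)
The plan is to combine convexity of cross-entropy in the prediction with the definition of the two ambiguity gaps. First, for every $x$, the map $f\mapsto H(p_0(x),f)=-\sum_k p_{0,k}(x)\log f_k$ is convex on the interior of the simplex, because $-\log$ is convex. Applying this to $\bar f_\theta=\tfrac12(f_{-,\theta}+f_{+,\theta})$ pointwise and integrating over $X\sim P_X$ gives
\[
R_0(\bar f_\theta)\le \tfrac12\bigl\{R_0(f_{-,\theta})+R_0(f_{+,\theta})\bigr\}.
\]
Integrability is not an issue on $\mathcal X_0$ under the boundedness assumption. Without it, the same inequality holds in $[0,\infty]$, since all terms are nonnegative.

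Next, I rewrite each true risk through the gaps defined in~\cref{eq:upper-lower-gaps}: $R_0(f_{-,\theta})=R_-(f_{-,\theta})-A_Q^-(f_{-,\theta})$ and $R_0(f_{+,\theta})=R_+(f_{+,\theta})+A_Q^+(f_{+,\theta})$. Substituting these and recalling $J_Q(\theta)=\tfrac12\{R_-(f_{-,\theta})+R_+(f_{+,\theta})\}$ from~\cref{eq: joint-objectives} yields~\cref{eq:exact-average-decomposition} directly.

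The chain of consequences then follows from the earlier Proposition on generic ambiguity-gap bounds. Under conditional coverage that Proposition gives $A_Q^-(f_{-,\theta})\ge0$, so dropping this term gives~\cref{eq:average-lower-gap-only}. Bounding $A_Q^+(f_{+,\theta})$ by~\cref{eq:lower-gap-diameter} gives~\cref{eq:average-diameter-local}. Finally, using $\operatorname{osc}(-\log f_{+,\theta}(x))\le L$ on $\mathcal X_0$, together with $P_X(\mathcal X_0)=1$, gives~\cref{eq:average-diameter-uniform}.

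The only delicate point is bookkeeping with possibly infinite quantities when rearranging the gap identities. I would handle it by working on $\mathcal X_0$, where all losses are bounded by $B$, so every risk and gap is finite and the subtractions are legitimate. Strict positivity of the head outputs, which is implicit in bounded cross-entropy, ensures that $\log$ of the average is well defined.
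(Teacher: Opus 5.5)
Your proposal is correct and follows the paper's proof essentially step for step: pointwise convexity of cross-entropy in the prediction, substitution of the gap identities $R_0(f_{-,\theta})=R_-(f_{-,\theta})-A_Q^-(f_{-,\theta})$ and $R_0(f_{+,\theta})=R_+(f_{+,\theta})+A_Q^+(f_{+,\theta})$, then dropping $A_Q^-(f_{-,\theta})\ge 0$, and finally applying the lower-gap diameter bound and the uniform oscillation bound. One small remark: strict positivity of every head coordinate follows from the finite oscillation bound (or the softmax parametrisation), not from bounded cross-entropy alone, which only controls classes charged by some admissible target; as you note, though, the convexity step already holds in $[0,\infty]$, so the argument does not depend on this.
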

\begin{proof}
Start with convexity of the true risk:
\begin{equation}
R_0(\bar f_\theta)
\le
\frac12R_0(f_{-,\theta})
+\frac12R_0(f_{+,\theta}).
\label{eq:step-convexity}
\end{equation}
Use the definitions of the ambiguity gaps in~\cref{eq:upper-lower-gaps}:
\begin{align*}
R_0(f_{-,\theta}) =R_-(f_{-,\theta})-A_Q^-(f_{-,\theta}),\quad
R_0(f_{+,\theta}) =R_+(f_{+,\theta})+A_Q^+(f_{+,\theta}).
\end{align*}
Substitute both identities into~\cref{eq:step-convexity}:
\begin{align*}
R_0(\bar f_\theta)
&\le
\frac12\bigl[R_-(f_{-,\theta})-A_Q^-(f_{-,\theta})\bigr]
+\frac12\bigl[R_+(f_{+,\theta})+A_Q^+(f_{+,\theta})\bigr]\\
&=
\frac12\bigl[R_-(f_{-,\theta})+R_+(f_{+,\theta})\bigr]
+\frac12\bigl[A_Q^+(f_{+,\theta})-A_Q^-(f_{-,\theta})\bigr]\\
&=J_Q(\theta)
+\frac12\bigl[A_Q^+(f_{+,\theta})-A_Q^-(f_{-,\theta})\bigr].
\end{align*}
This proves~\cref{eq:exact-average-decomposition}. Because $A_Q^-(f_{-,\theta})\ge0$, dropping the negative term proves~\cref{eq:average-lower-gap-only}. Applying~\cref{eq:lower-gap-diameter} to the optimistic head gives~\cref{eq:average-diameter-local}. Finally, the uniform oscillation bound gives~\cref{eq:average-diameter-uniform}.
\end{proof}
\paragraph{Uniform Convergence for the Joint Infinite Hypothesis Class.}

Define
\begin{equation}
j_\theta(x)
:=\frac12\left\{
\ell_Q^-(f_{-,\theta},x)
+\ell_Q^+(f_{+,\theta},x)
\right\},
\label{eq:joint-pointwise-loss}
\end{equation}
and the induced real-valued class
\begin{equation}
\mathcal J_Q
:=\{j_\theta:\theta\in\boldsymbol\Theta\}.
\label{eq:joint-loss-class}
\end{equation}
Then
\[
J_Q(\theta)=\mathbb E_X[j_\theta(X)],
\qquad
\widehat J_{Q,n}(\theta)=\frac1n\sum\nolimits_{i=1}^n j_\theta(X_i).
\]
By the common bound in~\cref{eq:bounded-loss}, every $j_\theta$ takes values in $[0,B]$ on the same set $\mathcal X_0$ simultaneously for every $\theta\in\Theta$.

For a real-valued class $\mathcal G$, define the expected absolute Rademacher complexity
\begin{equation}
\mathfrak R_n(\mathcal G)
:=\mathbb E_{X_{1:n},\sigma_{1:n}}
\left[
\sup_{g\in\mathcal G}
\left|
\frac1n\sum_{i=1}^n\sigma_i g(X_i)
\right|
\right],
\label{eq:rademacher-definition}
\end{equation}
where $\sigma_1,\dots,\sigma_n$ are independent Rademacher signs.

\begin{theorem}[Uniform Convergence of the Joint Objective]
Under i.i.d. sampling and the uniform boundedness assumption~\cref{eq:bounded-loss}, with probability at least $1-\delta$,
\begin{equation}
\sup_{\theta\in\boldsymbol\Theta}
\left|J_Q(\theta)-\widehat J_{Q,n}(\theta)\right|
\le
\varepsilon_{n,\mathrm{joint}}(\delta),
\label{eq:joint-uniform-convergence}
\end{equation}
where
\begin{equation}
\varepsilon_{n,\mathrm{joint}}(\delta)
:=2\mathfrak R_n(\mathcal J_Q)
+B\sqrt{\frac{\log(2/\delta)}{2n}}.
\label{eq:joint-epsilon}
\end{equation}
No finiteness or countability of $\boldsymbol\Theta$ is required, provided the relevant suprema are measurable and the displayed Rademacher complexity is well defined.
\end{theorem}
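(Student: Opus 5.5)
The plan is the standard symmetrisation-plus-concentration argument applied to the bounded class \(\mathcal J_Q\). Write \(Z:=\sup_{\theta\in\boldsymbol\Theta}|J_Q(\theta)-\widehat J_{Q,n}(\theta)|\), viewed as a function of the sample \(X_{1:n}\). By~\cref{eq:bounded-loss}, on the common full-measure set \(\mathcal X_0\) every \(j_\theta\) takes values in \([0,B]\) simultaneously for all \(\theta\). Without loss of generality we may restrict attention to samples in \(\mathcal X_0^n\), which has probability one.

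\textbf{Concentration.} Replacing a single coordinate \(X_i\) by \(X_i'\) changes each \(\widehat J_{Q,n}(\theta)\) by at most \(B/n\). Hence it changes \(Z\) by at most \(B/n\), since a supremum of absolute values of functions each perturbed by at most \(B/n\) moves by at most \(B/n\). McDiarmid's bounded-differences inequality then gives, with probability at least \(1-\delta'\),
\[
Z\le\mathbb E[Z]+B\sqrt{\log(1/\delta')/(2n)}.
\]
I would take \(\delta'=\delta/2\), or simply note that \(\log(1/\delta)\le\log(2/\delta)\), to match the stated constant \(\log(2/\delta)\). The two-sided absolute value is already contained in \(Z\), so one application suffices and the slack in the constant is harmless.

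\textbf{Symmetrisation.} Introduce a ghost sample \(X'_{1:n}\) drawn i.i.d.\ from \(P_X\), and write \(J_Q(\theta)=\mathbb E'[\widehat J'_{Q,n}(\theta)]\). Jensen's inequality, moving the supremum inside the ghost expectation, gives
\[
\mathbb E[Z]\le\mathbb E\sup_\theta\Bigl|\tfrac1n\sum_i\bigl(j_\theta(X_i')-j_\theta(X_i)\bigr)\Bigr|.
\]
Each difference is symmetric in distribution, so inserting independent Rademacher signs \(\sigma_i\) leaves the law unchanged. The triangle inequality then splits the expression into two copies of the expected absolute Rademacher average, yielding \(\mathbb E[Z]\le 2\mathfrak R_n(\mathcal J_Q)\) in the sense of~\cref{eq:rademacher-definition}. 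The absolute value in that definition is exactly what lets us handle the two-sided deviation without a separate lower-tail argument. Combining this with the concentration step gives~\cref{eq:joint-uniform-convergence} with \(\varepsilon_{n,\mathrm{joint}}(\delta)\) as in~\cref{eq:joint-epsilon}.

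\textbf{Main obstacle.} The main obstacle is measure-theoretic rather than combinatorial. \(\boldsymbol\Theta\) is uncountable, so \(Z\) and the Rademacher suprema need not be measurable, and McDiarmid's inequality and Jensen's inequality require measurable functions. The statement explicitly assumes this measurability. If one wanted to avoid the assumption, I would first try to reduce to a countable dense subset of \(\boldsymbol\Theta\), using continuity of \(\theta\mapsto j_\theta(x)\) (head outputs continuous in parameters, and \(\max/\min\) over compact \(Q(x)\) preserving continuity). Failing that, I would use outer expectations. A secondary point is that the bounded-difference constant must hold uniformly over \(\theta\), which is exactly why the common set \(\mathcal X_0\) in~\cref{eq:bounded-loss} is needed.
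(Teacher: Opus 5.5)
Your proposal is correct and follows essentially the same route as the paper's proof. The paper also uses a ghost sample, Jensen's inequality, Rademacher signs and the triangle inequality to obtain $\mathbb E Z\le 2\mathfrak R_n(\mathcal J_Q)$, and then applies McDiarmid's inequality with bounded differences $B/n$ on $\mathcal X_0^n$. Your remark that $\log(2/\delta)$ only adds slack is accurate, since the paper's choice of $t$ in fact gives failure probability $\delta/2$. The only difference is cosmetic: you present the concentration step before the symmetrisation step.
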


\begin{proof}
Write $Pj:=\mathbb E[j(X)]$ and $P_nj:=n^{-1}\sum_{i=1}^n j(X_i)$. Define the random uniform deviation
\[
\Phi(X_{1:n})
:=\sup_{j\in\mathcal J_Q}|Pj-P_nj|.
\]
We prove the result in two stages.

\emph{\textbf{Stage 1: symmetrisation.}}
Let $X_1',\dots,X_n'$ be an independent ghost sample from $P_X$, and let
\[
P_n'j:=\frac1n\sum_{i=1}^n j(X_i').
\]
Since $Pj=\mathbb E_{X_{1:n}'}[P_n'j]$, Jensen's inequality gives
\begin{align*}
\mathbb E_{X_{1:n}}\Phi
&=
\mathbb E_{X_{1:n}}
\sup_{j\in\mathcal J_Q}
\left|
\mathbb E_{X_{1:n}'}[P_n'j]-P_nj
\right|\\
&\le
\mathbb E_{X_{1:n},X_{1:n}'}
\sup_{j\in\mathcal J_Q}
|P_n'j-P_nj|\\
&=
\mathbb E_{X,X'}
\sup_{j\in\mathcal J_Q}
\left|
\frac1n\sum_{i=1}^n\bigl(j(X_i')-j(X_i)\bigr)
\right|.
\end{align*}
For each $i$, the pair $(X_i,X_i')$ is exchangeable. Introducing independent Rademacher signs therefore leaves the joint distribution of the difference unchanged:
\begin{align*}
\mathbb E\Phi
&\le
\mathbb E_{X,X',\sigma}
\sup_{j\in\mathcal J_Q}
\left|
\frac1n\sum_{i=1}^n
\sigma_i\bigl(j(X_i')-j(X_i)\bigr)
\right|.
\end{align*}
Use the triangle inequality inside the supremum:
\begin{align*}
\sup_{j\in\mathcal J_Q}
\left|
\frac1n\sum_{i=1}^n
\sigma_i\bigl(j(X_i')-j(X_i)\bigr)
\right|
\quad\le
\sup_{j\in\mathcal J_Q}
\left|
\frac1n\sum_{i=1}^n\sigma_i j(X_i')
\right|
+
\sup_{j\in\mathcal J_Q}
\left|
\frac1n\sum_{i=1}^n\sigma_i j(X_i)
\right|.
\end{align*}
Both terms have the same expectation. Hence
\begin{equation}
\mathbb E\Phi\le2\mathfrak R_n(\mathcal J_Q).
\label{eq:expected-uniform-deviation}
\end{equation}

\emph{\textbf{Stage 2: concentration around the expectation.}}
% Since $P_X(\mathcal X_0)=1$, we may regard the inputs as taking values in $\mathcal X_0$. 
Since $P_X(\mathcal X_0)=1$, we may regard the inputs as taking values in $\mathcal X_0$. By assumption, the bound $B$ holds simultaneously for every 
$\theta\in\Theta$, both heads, and every admissible target $q\in\mathcal Q(x)$ on this common set, i.e.,
\[
\sup_{\theta\in\Theta}\,\sup_{q\in\mathcal Q(x)}
\ell\bigl(p_{\pm,\theta}(x),q\bigr)\le B
\qquad\text{for all }x\in\mathcal X_0.
\]
% On $\mathcal X_0^n$, replace one observation $X_i$ by any $\widetilde X_i\in\mathcal X_0$. Every $j\in\mathcal J_Q$ is bounded by the same interval $[0,B]$ on this set, so
% \[
% |P_nj-P_n^{(i)}j|
% \le\frac{B}{n}.
% \]
Consequently, every $j\in\mathcal J_Q$ is bounded in $[0,B]$ on $\mathcal X_0^n$ simultaneously across all parameters, which is the bounded-differences hypothesis required by McDiarmid's inequality. On $\mathcal X_0^n$, replacing one observation $X_i$ by any $\widetilde X_i\in\mathcal X_0$ therefore changes $P_n j$ by at most $B/n$:
\[
|P_nj-P_n^{(i)}j|
\le\frac{B}{n}.
\]
Taking a supremum cannot increase the replacement sensitivity beyond $B/n$, so
\[
|\Phi(X_{1:n})-\Phi(X_1,\dots,\widetilde X_i,\dots,X_n)|
\le\frac{B}{n}.
\]
McDiarmid's inequality therefore yields
\[
\mathbb P\{\Phi-\mathbb E\Phi\ge t\}
\le\exp\!\left(-\frac{2nt^2}{B^2}\right).
\]
Choose
\[
t=B\sqrt{\frac{\log(2/\delta)}{2n}}.
\]
Then, with probability at least $1-\delta$,
\[
\Phi
\le\mathbb E\Phi+B\sqrt{\frac{\log(2/\delta)}{2n}}.
\]
Combining this with~\cref{eq:expected-uniform-deviation} proves~\cref{eq:joint-uniform-convergence}--~\cref{eq:joint-epsilon}.
\end{proof}
\subsubsection{Proof of the Generic Generalisation Bound}
\paragraph{Step 1.}
We first combine the population relation in~\cref{eq:average-diameter-uniform} with uniform convergence and approximate empirical risk minimisation.
\begin{theorem}[Risk Bound Relative to the Best Joint Credal Objective]
\label{theorem: risk_bound}
Under the assumptions in~\cref{eq:coverage,eq:bounded-loss,eq:oscillation-bound,eq:approx-joint-erm}, with probability at least $1-\delta$,
\begin{equation}
R_0(\widehat{\bar f}_n)
\le
\inf_{\theta\in\boldsymbol\Theta}J_Q(\theta)
+2\varepsilon_{n,\mathrm{joint}}(\delta)
+\eta_n
+\frac L2\overline d_Q.
\label{eq:joint-objective-oracle}
\end{equation}
\end{theorem}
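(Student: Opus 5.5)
The bound follows by chaining three earlier results: the population risk bound for the average, uniform convergence of the joint objective, and approximate joint ERM. The standard "uniform convergence applied twice" argument glues them together.

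\textbf{Step 1: reduce to the objective.} Apply the Population Risk Bound for the Average, \cref{eq:average-diameter-uniform}, at the random parameter $\theta=\widehat\theta_n$. This gives
\[
R_0(\widehat{\bar f}_n)\le J_Q(\widehat\theta_n)+\tfrac L2\overline d_Q .
\]
This inequality holds deterministically for every $\theta$, so using the data-dependent $\widehat\theta_n$ causes no issue. Coverage, used inside that proposition, is what allows the lower-gap term to be controlled by $L\overline d_Q$.

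\textbf{Step 2: work on the good event.} Let $E$ be the event of probability at least $1-\delta$ on which \cref{eq:joint-uniform-convergence} holds, i.e.
\[
\sup_\theta|J_Q(\theta)-\widehat J_{Q,n}(\theta)|\le\varepsilon:=\varepsilon_{n,\mathrm{joint}}(\delta).
\]
The supremum covers all of $\boldsymbol\Theta$, so it applies in particular to $\widehat\theta_n$. Hence on $E$,
\[
J_Q(\widehat\theta_n)\le \widehat J_{Q,n}(\widehat\theta_n)+\varepsilon .
\]
By \cref{eq:approx-joint-erm}, $\widehat J_{Q,n}(\widehat\theta_n)\le \widehat J_{Q,n}(\theta)+\eta_n$ for every $\theta$. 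Applying uniform convergence once more to that arbitrary $\theta$ gives $\widehat J_{Q,n}(\theta)\le J_Q(\theta)+\varepsilon$. Chaining these,
\[
J_Q(\widehat\theta_n)\le J_Q(\theta)+2\varepsilon+\eta_n \quad\text{for all }\theta .
\]

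\textbf{Step 3: conclude.} Taking the infimum over $\theta$ and combining with Step 1 yields \cref{eq:joint-objective-oracle}. If the infimum is not attained, I will use an $\epsilon'$-minimiser and let $\epsilon'\to0$; this is harmless because every bound in the chain is non-strict.

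The main obstacle I expect is not algebraic. It is making sure the uniform deviation bound legitimately applies to the data-dependent parameter $\widehat\theta_n$. This is exactly why the uniform convergence theorem was stated with a supremum over all of $\boldsymbol\Theta$, and why Assumption 3 was imposed on the common full-measure set $\mathcal X_0$. I will also note the measurability caveat already made in the uniform convergence theorem. No union bound is needed, because a single event $E$ controls both uses of uniform convergence.
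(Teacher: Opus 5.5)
Your proposal is correct and follows the paper's proof essentially line for line: on the single uniform-convergence event, you chain the population bound \cref{eq:average-diameter-uniform} at $\widehat\theta_n$, uniform convergence at $\widehat\theta_n$, approximate ERM, and uniform convergence again at the comparator, exactly as the paper does. The only difference is cosmetic: you compare against an arbitrary $\theta$ before taking the infimum, whereas the paper takes the infimum inside the uniform-convergence step. Your $\epsilon'$-minimiser remark is unnecessary, because the infimum of the right-hand side can be taken directly without attainment.
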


\begin{proof}
Work on the event~\cref{eq:joint-uniform-convergence}. We proceed one line at a time.

First, apply~\cref{eq:average-diameter-uniform} to the learned pair:
\begin{equation}
R_0(\widehat{\bar f}_n)
\le
J_Q(\widehat\theta_n)+\frac L2\overline d_Q.
\label{eq:erm-proof-1}
\end{equation}
Uniform convergence at $\widehat\theta_n$ gives
\begin{equation}
J_Q(\widehat\theta_n)
\le
\widehat J_{Q,n}(\widehat\theta_n)
+\varepsilon_{n,\mathrm{joint}}.
\label{eq:erm-proof-2}
\end{equation}
Approximate empirical minimisation gives
\begin{equation}
\widehat J_{Q,n}(\widehat\theta_n)
\le
\inf_{\theta\in\boldsymbol\Theta}\widehat J_{Q,n}(\theta)+\eta_n.
\label{eq:erm-proof-3}
\end{equation}
Uniform convergence also gives, simultaneously for every comparator $\theta$,
\[
\widehat J_{Q,n}(\theta)
\le J_Q(\theta)+\varepsilon_{n,\mathrm{joint}}.
\]
Taking the infimum over $\theta$,
\begin{equation}
\inf_{\theta\in\boldsymbol\Theta}\widehat J_{Q,n}(\theta)
\le
\inf_{\theta\in\boldsymbol\Theta}J_Q(\theta)
+\varepsilon_{n,\mathrm{joint}}.
\label{eq:erm-proof-4}
\end{equation}
Combining~\cref{eq:erm-proof-1}--~\cref{eq:erm-proof-4},
\begin{align*}
R_0(\widehat{\bar f}_n)
&\le
J_Q(\widehat\theta_n)+\frac L2\overline d_Q\\
&\le
\widehat J_{Q,n}(\widehat\theta_n)
+\varepsilon_{n,\mathrm{joint}}
+\frac L2\overline d_Q\\
&\le
\inf_{\theta}\widehat J_{Q,n}(\theta)
+\eta_n
+\varepsilon_{n,\mathrm{joint}}
+\frac L2\overline d_Q\\
&\le
\inf_{\theta}J_Q(\theta)
+2\varepsilon_{n,\mathrm{joint}}
+\eta_n
+\frac L2\overline d_Q.
\end{align*}
This is~\cref{eq:joint-objective-oracle}.
\end{proof}
\paragraph{Step 2.}
Theorem~\ref{theorem: risk_bound} compares against the best two-head credal objective. We now convert it into an excess true-risk bound.
\begin{lemma}[A Diagonal Pair is Close to Its True Risk]
For every $f\in\mathcal F_{\Delta}$,
\begin{equation}
\frac12\bigl[R_-(f)+R_+(f)\bigr]
\le
R_0(f)+\frac L2\overline d_Q.
\label{eq:diagonal-comparator-bound}
\end{equation}
\end{lemma}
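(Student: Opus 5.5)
The plan is to decompose the diagonal average of the two credal risks around the true risk with the ambiguity gaps defined in~\cref{eq:upper-lower-gaps}, and then show that these gaps partially cancel. Fix \(f\in\mathcal F_\Delta\). By definition of the gaps,
\[
R_-(f)=R_0(f)+A_Q^-(f),\qquad R_+(f)=R_0(f)-A_Q^+(f),
\]
so that
\[
\tfrac12\bigl[R_-(f)+R_+(f)\bigr]=R_0(f)+\tfrac12\bigl[A_Q^-(f)-A_Q^+(f)\bigr].
\]
Here coverage is needed only to make the gaps nonnegative and well defined as differences of finite quantities. Finiteness holds because cross-entropy is bounded by \(B\) on \(\mathcal X_0\). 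To apply that bound, we use that \(f\in\mathcal F_\Delta\) is realised by some parameter \(\theta_f\), so \(f\) is a genuine head prediction.

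Because \(A_Q^+(f)\ge0\) under conditional coverage, we then drop it to get
\[
\tfrac12\bigl[R_-(f)+R_+(f)\bigr]\le R_0(f)+\tfrac12 A_Q^-(f).
\]
Next we invoke the generic ambiguity-gap bound~\cref{eq:upper-gap-diameter}, which gives \(A_Q^-(f)\le\mathbb E_X[d_Q(X)\operatorname{osc}(-\log f(X))]\). We then apply the uniform oscillation bound~\cref{eq:oscillation-bound}; it applies to \(f\) on \(\mathcal X_0\) because \(f=f_{\pm,\theta_f}\) is a head prediction for an admissible parameter. This yields \(A_Q^-(f)\le L\overline d_Q\), and hence~\cref{eq:diagonal-comparator-bound}.

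The only delicate point is the justification that the oscillation and boundedness assumptions cover the comparator \(f\). This is exactly why \(\mathcal F_\Delta\) is defined through parameters \(\theta_f\in\boldsymbol\Theta\): the assumptions then transfer verbatim, so no genuine obstacle remains. The remainder is bookkeeping, in particular checking that the a.e.\ set \(\mathcal X_0\) does not affect the expectations.
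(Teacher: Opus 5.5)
Your proposal is correct and matches the paper's proof step for step: write the diagonal average as $R_0(f)+\tfrac12[A_Q^-(f)-A_Q^+(f)]$, drop the nonnegative lower gap, bound $A_Q^-(f)$ by $\mathbb E_X[d_Q(X)\operatorname{osc}(-\log f(X))]$, and apply the uniform oscillation bound, which covers $f$ because $f=f_{\pm,\theta_f}$. One small correction: coverage is used for more than nonnegativity of the gaps, since the upper-gap bound itself needs $p_0(x)\in Q(x)$ to obtain $\operatorname{TV}(q,p_0(x))\le d_Q(x)$.
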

\begin{proof}
Using the ambiguity gaps,
\begin{align*}
\frac12\bigl[R_-(f)+R_+(f)\bigr]-R_0(f)
&=\frac12\bigl[A_Q^-(f)-A_Q^+(f)\bigr]\\
&\le\frac12A_Q^-(f)\\
&\le\frac12\mathbb E_X[d_Q(X)\operatorname{osc}(-\log f(X))]\\
&\le\frac L2\overline d_Q.
\end{align*}
This proves~\cref{eq:diagonal-comparator-bound}.
\end{proof}
\paragraph{Step 3.} We are now ready to complete the proof of Theorem~\ref{theorem:main}:
\begin{equation}
\boxed{
R_0(\widehat{\bar f}_n) - \inf_{f\in\mathcal F_{\Delta}}R_0(f)
\leq
4\mathfrak R_n(\mathcal J_Q)
+ 2B\sqrt{\frac{\log(2/\delta)}{2n}}
+ \eta_n
+ L\overline d_Q
}
\label{eq:main-generic-averaged-bound}
\end{equation}
\begin{proof}
For every $f\in\mathcal F_{\Delta}$, by definition there exists a parameter $\theta_f$ with
\[
f_{-,\theta_f}=f_{+,\theta_f}=f.
\]
Therefore
\[
J_Q(\theta_f)
=\frac12\bigl[R_-(f)+R_+(f)\bigr].
\]
By~\cref{eq:diagonal-comparator-bound},
\[
J_Q(\theta_f)
\le R_0(f)+\frac L2\overline d_Q.
\]
Taking the infimum over $f\in\mathcal F_{\Delta}$ and noting that the infimum over all $\theta$ is no larger than the infimum over diagonal parameters,
\begin{equation}
\inf_{\theta\in\boldsymbol\Theta}J_Q(\theta)
\le
\inf_{f\in\mathcal F_{\Delta}}R_0(f)
+\frac L2\overline d_Q.
\label{eq:joint-to-true-oracle}
\end{equation}
Substitute~\cref{eq:joint-to-true-oracle} into~\cref{eq:joint-objective-oracle}:
\begin{align*}
R_0(\widehat{\bar f}_n)
&\le
\inf_{f\in\mathcal F_{\Delta}}R_0(f)
+\frac L2\overline d_Q
+2\varepsilon_{n,\mathrm{joint}}
+\eta_n
+\frac L2\overline d_Q\\
&=
\inf_{f\in\mathcal F_{\Delta}}R_0(f)
+2\varepsilon_{n,\mathrm{joint}}
+\eta_n
+L\overline d_Q.
\end{align*}
Subtract $\inf_{f\in\mathcal F_{\Delta}}R_0(f)$ and expand
\[
2\varepsilon_{n,\mathrm{joint}}(
\delta)
=4\mathfrak R_n(\mathcal J_Q)
+2B\sqrt{\frac{\log(2/\delta)}{2n}}.
\]
This gives~\cref{eq:main-generic-averaged-bound}.
\end{proof}
\subsubsection{Specialisation to the Practical Credal Label Construction}
\label{App:specialization}
For the credal construction in~\cref{eq:LabelConstruction}, i.e.,
\begin{equation}
\mathcal{Q}_{\alpha}(x)\!=\!\big\{q\!\in\!\Delta^{K-1}\!:q_{j(x)} \ge \alpha(x)\big\},
\nonumber
\end{equation}
the conditional coverage becomes the transparent condition
\begin{equation}
p_{0,j(x)}(x)\ge\alpha(x)
\qquad P_X\text{-a.s.}
\label{eq:special-coverage}
\end{equation}
Suppose each head is a softmax of logits $s_{\pm,\theta}(x)$ satisfying, simultaneously for every $\theta\in\boldsymbol\Theta$, both heads, and every $x$ in a common measurable set $\mathcal X_0$ with $P_X(\mathcal X_0)=1$,
\begin{equation}
\|s_{\pm,\theta}(x)\|_\infty\le A.
\label{eq:bounded-logits}
\end{equation}
Then
\begin{equation}
f_{\pm,\theta,k}(x)
\ge\frac{e^{-2A}}{K},
\qquad
B=\log K+2A.
\label{eq:B-bounded-logits}
\end{equation}
Moreover,
\begin{equation}
\operatorname{osc}(-\log f_{\pm,\theta}(x))
=\max_k s_{\pm,\theta,k}(x)-\min_k s_{\pm,\theta,k}(x)
\le2A,
\label{eq:L-bounded-logits}
\end{equation}
so $L=2A$.

\begin{proof}[Verification]
For the probability lower bound,
\[
f_k(x)
=\frac{e^{s_k(x)}}{\sum_{\ell=1}^K e^{s_\ell(x)}}
\ge\frac{e^{-A}}{Ke^A}
=\frac{e^{-2A}}K.
\]
Thus $-\log f_k(x)\le\log K+2A$, and any cross-entropy, being a convex combination of these coordinate losses, is bounded by the same $B$.

Also,
\[
-\log f_k(x)
=\log\!\left(\sum\nolimits_{\ell=1}^K e^{s_\ell(x)}\right)-s_k(x).
\]
The common log-sum-exp term cancels when taking the oscillation, leaving $\max_k s_k-\min_k s_k\le2A$.
\end{proof}
Applying the generic bound~\cref{eq:main-generic-averaged-bound} with
$B=\log K+2A$, $L=2A$, and
$$
\overline d_Q
=
\mathbb E_X[d_{Q_{\alpha,j}}(X)]
=
\mathbb E_X[1-\alpha(X)],
$$

we obtain the specialised bound:
$$
\boxed{
R_0(\widehat{\bar f}_n)-\inf_{f\in\mathcal F_{\Delta}}R_0(f)
\leq
4\mathfrak R_n(\mathcal J_{Q_{\alpha,j}})
+2(\log K+2A)\sqrt{\frac{\log(2/\delta)}{2n}}
+\eta_n
+2A\,\mathbb E[1-\alpha(X)].
}
$$

\section{Experiment Implementation Details}
\label{App:ImplementationDetails}
This section details the experimental setup for our experiments. The datasets are drawn from existing literature. \emph{To further support reproducibility, the core implementation codes are included in the supplementary materials and will be released publicly upon acceptance under a licence permitting free use for research purposes.}
\subsection{Datasets}
\label{sub-App:datasets}
\textbf{CIFAR-10.} The CIFAR-10 dataset~\citep{krizhevsky2009learning} consists of 60,000 colour images (32×32) across 10 classes, split into 50,000 training and 10,000 test images.

\textbf{CIFAR-10H.} The CIFAR-10H~\citep{peterson2019human} provides human soft labels for the CIFAR-10 test set, with 511400 annotations from 2571 workers and approximately 51 labels per image, capturing human perceptual variability.

\textbf{CIFAR-100.} The CIFAR-100 dataset~\citep{krizhevsky2009learning} contains 60,000 colour images (32×32) across 100 fine-grained classes, grouped into 20 superclasses, with 600 images per class and a 50,000/10,000 train/test split.

\textbf{MiceBone.} The MiceBone dataset~\citep{schmarje2022one} comprises 2D slices from 3D scans of mouse bone collagen fibres, originally introduced in~\citep{schmarje20192d}. It consists of three classes: similar (``g'') and dissimilar (``ug'') collagen fibre orientations, and not relevant (``nr'') regions due to noise or background. Annotations were collected from four annotators (three hired workers and one domain expert), who received reference-guided training before annotation. An initial familiarisation phase was discarded, and the final labels are based on majority vote.

\textbf{TreeVersity.} The Treeversity\#1 dataset~\citep{schmarje2022one} originates from a crowdsourced plant image collection of \href{https://arboretum.harvard.edu/research/data-resources/}{Arnold Arboretum of Harvard University} with up to 22 original tags, simplified into six unified classes. Images were cropped to remove metadata. To capture annotator disagreement, only images with at least three annotations are retained; annotations from different users often yield conflicting tags for the same image. The total number of annotations and the agreement with the majority vote per image are detailed in~\citep{schmarje2022one}.

\subsection{Baselines}
\label{App:Baselines}
\textbf{Evidential Deep Learning (EDL).} 
Our implementation of EDL~\citep{sensoy2018evidential} follows~\citet{hofmanefficient}. A single neural network is employed with a SoftPlus activation on the final layer to ensure non-negative evidence outputs. The model is trained using the Type II maximum likelihood loss with a Kullback--Leibler divergence regularisation term, whose coefficient follows a warm-up schedule \(\lambda_i = \min(1, i/10)\) at epoch \(i\). 

At inference time, the predicted evidence parameterises a Dirichlet distribution. For an input \(x\), let \({e}(x) = [e_1(x), \ldots, e_K(x)] \in \mathbb{R}_{\geq 0}^{K}\) denotes the output evidence, where \(K\) is the number of classes. The Dirichlet parameters and total strength are then given by
\begin{equation}
{\alpha}(x) = {e}(x) + \mathbf{1}, \qquad
S(x) = \sum\nolimits_{k=1}^{K} \alpha_k(x).
\end{equation}
Epistemic uncertainty (EU) is quantified as follows~\citep{sensoy2018evidential}:
\begin{equation}
\mathrm{EU}_{\mathrm{EDL}}(x) = \frac{K}{S(x)}
= \frac{K}{\sum_{k=1}^{K} \bigl(e_k(x) + 1\bigr)}.
\label{eq:edl_uncertainty}
\end{equation}

\textbf{Laplace Bridge with Bayesian Deep Networks (LbBnn).}
LbBnn~\citep{hobbhahn2022fast} starts from a standard neural network and uses the Laplace Bridge to approximate a Bayesian posterior efficiently. It builds a Gaussian approximation around the network's logits and then applies the Laplace Bridge to analytically map it to a Dirichlet distribution over softmax outputs. This closed-form approximation eliminates the need for sampling, yielding efficient uncertainty estimates. Given the Dirichlet prediction, the EU is quantified similarly to EDL via~\cref{eq:edl_uncertainty}. Our implementation is based on~\citet{software:pytorch-laplace}, adopting the last-layer and Kronecker-factored approximation for Laplace approximation to balance computational complexity and model performance. 

\textbf{Efficient Credal Predictions through Decalibration (Decali).}
Decali~\citep{hofmanefficient} is implemented as a post-hoc method following the procedure described in the original work. Given the logits of a maximum likelihood estimation predictor, Decali solves two convex optimisation problems for each class to determine the lower and upper bounds of a plausible probability interval under a relative likelihood constraint, referred to as an \(\alpha\)-cut. The relative likelihood budget is estimated from the training data. Each optimisation problem perturbs the logit of a single class by adding a constant, thereby obtaining the corresponding lower or upper probability bound. We set \(\alpha = 1.0\), which was reported to achieve the best epistemic uncertainty quantification performance in downstream tasks~\citep{hofmanefficient}. 

At inference time, given an input \(x\), let \(\bigl[\underline{p}_k(x), \overline{p}_k(x)\bigr]\) denote the resulting class-wise probability intervals. The EU is then quantified as~\citep{wang2024CredalEnsembles}:
\begin{equation}
\mathrm{EU}_{\mathrm{Decali}}(x)
=
\max_{{p} \in C(x)} \sum\nolimits_{k=1}^{K} -p_k \log_2 p_k
-
\min_{{p} \in C(x)} \sum\nolimits_{k=1}^{K} -p_k \log_2 p_k,
\end{equation}
where
\[
C(x) = \Big\{ {p} \in \Delta^{K-1} : \underline{p}_k(x) \le p_k \le \overline{p}_k(x), \ k=1,\dots,K \Big\}.
\]
Test accuracy and ECE are evaluated using the original softmax probabilities before perturbation~\citep{hofmanefficient}.

\textbf{Dirichlet-Approximated Possibilistic Posterior Predictions (DAPPr).}
DAPPr~\citep{nipossibilistic} defines a possibilistic posterior over model parameters, projects it to the prediction space via supremum operators to obtain an epistemic uncertainty function, and approximates this projection using learned Dirichlet possibility functions. This approximation yields a closed-form solution under the high-capacity assumption of modern overparameterised networks with cross-entropy loss, resulting in a simple training objective with minimal regularisation. We implement DAPPr in our evaluation by following the original work. At inference time, DAPPr outputs a Dirichlet distribution, with EU quantified via the same formulation as EDL in~\cref{eq:edl_uncertainty}.

\textbf{Monte Carlo Dropout (MCDO).} MCDO~\citep{gal2016dropout} approximates Bayesian inference by retaining dropout during inference. Multiple stochastic forward passes are performed, each sampling a different sub-network via distinct dropout masks. Following the practice, we set the dropout rate to \(0.3\) and use \(M = 5\) forward passes. Epistemic uncertainty is quantified using the approximate mutual information (MI):
\begin{equation}
\mathrm{EU}_{\mathrm{MCDO}}(x)
=
\sum\nolimits_{k=1}^{K} -\tilde{p}_k(x) \log_2 \tilde{p}_k(x)
-
\frac{1}{M} \sum\nolimits_{i=1}^{M} \sum\nolimits_{k=1}^{K} -p_{i,k}(x) \log_2 p_{i,k}(x),
\label{eq:eu_mcdo}
\end{equation}
where \(\tilde{p}_k\) is the \(k\)-th component of the averaged prediction \(\tilde{{p}} = \frac{1}{M} \sum_{i=1}^{M} {p}_i\). This MI corresponds to the standard decomposition of total uncertainty into aleatoric and epistemic components~\citep{tpamiWANG}.

\textbf{Deep Ensembles (DE).}
DE~\citep{lakshminarayanan2017simple} quantifies prediction uncertainty by training an ensemble of \(M\) neural networks independently with different random initialisations. At inference time, each ensemble member produces a probability vector; the ensemble prediction is obtained by averaging these outputs. Epistemic uncertainty is quantified via the approximate mutual information as in~\cref{eq:eu_mcdo}, where the \(M\) sampled probability vectors correspond to the outputs of the \(M\) ensemble members. In our experiment, \(M=5\).
\subsection{Training Configurations}
\label{App:TrainingConfigurations}
\subsubsection{Annotation Imprecision: Annotator Disagreement}
\label{sub-App:Disagreement}
% For the \textbf{CIFAR-10H} experiments, all methods are trained from scratch on the CIFAR-10H dataset using a ResNet-18~\citep{he2016deep} backbone. Evaluation is performed on the original train split of CIFAR-10 (samples indexed 40000 to 50000, totalling 10000 samples). 
For the \textbf{CIFAR-10H} experiments, all methods are trained from scratch on the CIFAR-10H dataset using a ResNet-18~\citep{he2016deep} backbone. Evaluation is performed on the last 10{,}000 images of the CIFAR-10 training set (indices 40{,}000--50{,}000 under standard torchvision ordering, without shuffling), which do not overlap with the CIFAR-10H data drawn from the original CIFAR-10 test set.
For the \textbf{MiceBone} and \textbf{TreeVersity} datasets, all methods adopt an 80\%:20\% train--test split and use a DenseNet-121~\citep{huang2017densely} backbone initialised with ImageNet pretrained weights. For these two datasets, MCDO and LbBnn are excluded due to the lack of
a compatible pre-trained model and out-of-memory errors on an NVIDIA RTX 5090, respectively. All methods are trained with cross-entropy loss using soft labels. Comprehensive hyperparameter configurations are provided in \tablename~\ref{tab:training_hyperparamsA}.
\begin{table}[!htbp]
\centering
\caption{Hyperparameters used for annotator-disagreement experiments.}
\label{tab:training_hyperparamsA}
\small
\begin{tabular}{lccc}
\toprule
\textbf{Hyperparameter} & \textbf{CIFAR-10H} & \textbf{MiceBone} & \textbf{TreeVersity} \\
\midrule
Backbone      & ResNet-18                    & DenseNet-121                  & DenseNet-121            \\
Classes       & 10                           & 3                             & 6                       \\
Input size    & $(3\times32\times32)$        & $(3\times224\times224)$       & $(3\times224\times224)$ \\
Batch size    & 128                          & 128                           & 128                     \\
Epochs        & 150                          & 60                            & 60                      \\
Learning rate & 0.1                          & 0.00001                       & 0.00001                 \\
Weight decay  & 0.0005                       & 0.0005                        & 0.0005                  \\
Optimiser     & SGD                          & Adam                          & Adam                    \\
Learning rate scheduler  & Cosine Annealing              & --                            & --          \\
\bottomrule
\end{tabular}
\end{table}
\subsubsection{Annotation Imprecision: Teacher Predictions}
\label{sub-App:Teacher}
In this section, we use predictions from a teacher model~\citep{hinton2015distilling} on the training data to simulate annotation imprecision. Specifically, given an input \(x\), let \(h_{\mathrm{teacher}}(\cdot)\) denote a well-trained teacher model, and let \({z}(x) = h_{\mathrm{teacher}}(x)\) denotes its logit output. The soft label distribution \({q}_L\) is then obtained via the standard softmax with temperature:
\begin{equation}
q_{L, k}(x) = \frac{\exp\bigl(z_k(x) / T\bigr)}{\sum_{j=1}^{K} \exp\bigl(z_j(x) / T\bigr)}, \quad k = 1, \dots, K,
\label{eq: teacher_model}
\end{equation}
where \(T\) is the temperature parameter. We experiment with temperatures \(T \in \{1.0, 2.5\}\) and three publicly available teacher backbones~\citep{chenyaofo_pytorch_cifar_models} (ResNet-20, ShuffleNetV2-0.5×~\citep{ma2018shufflenet}, and MobileNetV2-x0-5~\citep{sandler2018mobilenetv2}) for both CIFAR-10 and CIFAR-100 experiments.

An 80\%:20\% train--test split is adopted for all experiments. For \textbf{CIFAR-10}, all methods are trained from scratch using a ResNet-18 backbone with teacher-predicted labels, and a cosine annealing learning rate scheduler is applied. For \textbf{CIFAR-100}, all methods are trained from scratch using a WRN-28-4 backbone with teacher-predicted labels, and a learning rate schedule with linear warmup and cosine annealing is employed: the learning rate is linearly ramped up from \(0.001\) to the base learning rate during the first 5 epochs, and then gradually decays to \(10^{-6}\) following a cosine curve for the remaining training. Detailed hyperparameter configurations are provided in \tablename~\ref{tab:training_hyperparamsBC}.
\begin{table}[!htbp]
\centering
\caption{Hyperparameters used for both teacher-prediction and label-smoothing experiments.}
\label{tab:training_hyperparamsBC}
\small
\setlength\tabcolsep{20pt}
\begin{tabular}{lcc}
\toprule
\textbf{Hyperparameter} & \textbf{CIFAR-10} & \textbf{CIFAR-100} \\
\midrule
Backbone      & ResNet-18                    & WRN-28-4                 \\
Classes       & 10                           & 100                        \\
Input size    & $(3\times32\times32)$        & $(3\times32\times32)$     \\
Batch size    & 128                          & 128                          \\
Epochs        & 200                          & 200        \\
Learning rate & 0.1                          & 0.1                     \\
Weight decay  & 0.0005                       & 0.0005    \\
Optimiser     & SGD                          & SGD                         \\
Learning rate scheduler  & Cosine Annealing              & Warmup Cosine Annealing  \\
\bottomrule
\end{tabular}
\end{table}

\subsubsection{Annotation Imprecision: Label Smoothing}
\label{sub-App:Smooth}
In this section, we adopt label smoothing~\citep{szegedy2016rethinking} to simulate annotation imprecision in the training data. Given a hard one-hot label \({e}_y\), the soft label distribution \({q}_L\) is constructed as:
\begin{equation}
q_{L, k} = (1 - \epsilon) \, \mathbf{1}_{k=y} + \frac{\epsilon}{K},
\label{eq:label_smoothing}
\end{equation}
where \(\epsilon\) is the smoothing coefficient, varied over \(\{0.05, 0.1, 0.15, 0.2\}\) in our experiments, and \(\mathbf{1}_{k=y}\) is the indicator function. The resulting credal label for supervision is then obtained via~\cref{eq:LabelConstruction}.

For the \textbf{CIFAR-10} experiments, all methods are trained from scratch on the CIFAR-10 dataset using a ResNet-18 backbone with label smoothing. We apply cosine annealing as the learning rate scheduler. For the \textbf{CIFAR-100} experiments, all methods are trained from scratch on the CIFAR-100 dataset using a WRN-28-4 backbone with label smoothing. We employ a learning rate schedule with linear warmup and cosine annealing: the learning rate is linearly ramped up from 0.001 to the base learning rate during the first 5 epochs, and then gradually decays to \(10^{-6}\) following a cosine curve for the remaining training. An 80\%:20\% train--test split is adopted for all experiments. Comprehensive hyperparameter configurations are provided in \tablename~\ref{tab:training_hyperparamsBC}.

\subsection{Compute Resources}
\label{sub-App:resource}
All experiments in this work are conducted using the computing resources 
listed in \tablename~\ref{tab:computing}.
\begin{table}[!htbp]
\centering
\small
\caption{Specifications of Computing Resources}
\label{tab:computing}
\begin{tabular}{ll}
\toprule
\textbf{Component} & \textbf{Specification} \\
\midrule
CPU     & Intel Core Ultra 24C/24T, 5.8GHz Max, 36MB L3 Cache \\
GPU     & $1 \times$ NVIDIA GeForce RTX 5090 (32 GB GDDR7) \\
RAM     & 192 GB DDR4 ECC DIMM \\
\bottomrule
\end{tabular}
\end{table}
\subsection{Evaluation Metrics Implementation}
\label{App:metrics}
\textbf{Expected Calibration Error.} 
In classification settings, a well-calibrated prediction is expected to have a confidence value of 80\% and be correct in approximately 80\% of the test cases. To calculate expected calibration error (ECE), probabilistic predictions are split into a predetermined number $Q$ of bins $B$ of equal confidence range. The ECE is defined as the average absolute difference between accuracy and confidence over \(G\) equally spaced bins~\citep{guo2017calibration}:
\begin{equation}
\mathrm{ECE} := \sum\nolimits_{g=1}^{G} \frac{|B_g|}{n} \Big| \mathrm{acc}(B_g) - \mathrm{conf}(B_g) \Big|,
\label{Eq: ECE}
\end{equation}
where \(|B_g|\) is the number of samples in the \(g\)-th bin and \(n\) is the total number of samples. Following standard practice, we use \(G = 10\) bins for evaluation.

\textbf{Selective Classification.} 
Selective classification evaluates epistemic uncertainty (EU) quality by leveraging the expectation that misclassified samples receive higher EU values than correctly classified ones. Instances with high EU quantification are abstained and deferred to an expert to reduce the risk of misclassification (Algorithm~\ref{alg: SelectivePrediction}). We quantify performance using the accuracy–rejection curve~\citep{huhn2008fr3}, which plots accuracy on retained samples against the rejection rate. Reliable uncertainty estimates yield a monotonically increasing curve, whereas random rejection yields a flat curve. The area under this curve (AUARC) serves as a scalar summary~\citep{jaegercall}, with larger values indicating better selective classification performance.
\begin{algorithm}[!hbtp]
\begin{algorithmic}
\State\textbf{Input:} Test dataset $\mathcal{D}_{\text{test}}$; rejection rate $\beta$; 
epistemic-uncertainty-aware predictor $h(\cdot)$
\State \textbf{1.} Quantify epistemic predictive uncertainty given each sample, denoted by $u_{\text{EU},n}$
\State $u_{\text{EU},n} \leftarrow h(x_n) \ \forall n $ 
\State \textbf{2.} Sort uncertainty estimates in ascending order
\State let ${\pi}=\{1, 2, ..., N_t\}$ with $N_t = |\mathcal{D}_{\text{test}}|$ so that $u_{\text{EU},\pi(1)}\leq u_{\text{EU},\pi(2)} \leq ... \leq u_{\text{EU},\pi(N_t)}$
\State \textbf{3.} Select top $\lfloor (1-\beta)N_t \rfloor$ certain averaged probabilities
\State $\tilde{{p}}_{\pi(1)}, ..., \tilde{{p}}_{\pi(\lfloor (1-\beta)N_t \rfloor)}$ for class prediction 
\caption{selective classification procedure} 
\label{alg: SelectivePrediction}
\end{algorithmic}
\end{algorithm}

\textbf{Balanced Quality Score.} 
To \emph{strike a balance among accuracy, calibration, and selective classification performance}, we introduce the \emph{balanced quality score} (BQS). For each seed, we independently normalise ACC, \(1-\text{ECE}\), and AUARC to \([0,1]\) via min-max scaling across all compared methods within each dataset. BQS is then computed as the average of the three normalised components:
\begin{equation}
\mathrm{BQS} = \frac{1}{3} \Bigl( \widehat{\mathrm{ACC}} + \widehat{(1-\mathrm{ECE})} + \widehat{\mathrm{AUARC}} \Bigr),
\label{eq:bqs}
\end{equation}
where each \(\widehat{\,\cdot\,}\) denotes the min-max normalised value computed across all compared methods for that seed and dataset. We report the mean and standard deviation of \(\mathrm{BQS}\) across seeds.

\section{Additional Experiment Results}
\label{App:AddExperiments}
We report performance comparisons and accuracy–rejection curves under the three annotation imprecision scenarios---annotator disagreement, teacher predictions, and label smoothing---in Tables~\ref{Table: DisagreementDetails}, \ref{Table: AblationTeacher}, and \ref{Table: AblationSmoothing}, and Figures~\ref{Figure: arc_dis}, \ref{Figure: arc_teacher}, and \ref{Figure: arc_label}, respectively. 

\tablename~\ref{Table: AblationTeacher_original} further reports results using the teachers' original predictions as the source of annotation imprecision. These results consistently confirm the robustness of POCC in achieving the highest BQS scores across settings.
\begin{table}[!htbp]
\caption{Performance comparison (in $\%$)  under annotator-disagreement annotation imprecision across various datasets, averaged over 10 runs.}
\label{Table: DisagreementDetails}
\centering
\small
\setlength\tabcolsep{5pt}
\begin{tabular}{lcccc|ccccc}
\toprule
& \multicolumn{4}{c|}{MiceBone Dataset} & \multicolumn{4}{c}{TreeVersity Dataset} \\
\cmidrule(lr){2-5} \cmidrule(lr){6-9}
Method & ACC $\uparrow$ & ECE $\downarrow$ & AUARC $\uparrow$ & BQS $\uparrow$ & ACC $\uparrow$ & ECE $\downarrow$ & AUARC $\uparrow$ & BQS $\uparrow$ \\
\midrule \midrule
EDL
  & $88.5 \scriptstyle{\pm 0.8}$ & $9.9 \scriptstyle{\pm 0.8}$ & $\mathbf{97.4 \scriptstyle{\pm 0.3}}$ & $\underline{68.7 \scriptstyle{\pm 3.6}}$
  & $85.5 \scriptstyle{\pm 0.6}$ & $17.1 \scriptstyle{\pm 0.7}$ & $\underline{95.7 \scriptstyle{\pm 0.3}}$ & $58.4 \scriptstyle{\pm 1.9}$ \\
Decali
  & $88.7 \scriptstyle{\pm 1.2}$ & $\underline{5.8 \scriptstyle{\pm 1.0}}$ & $93.6 \scriptstyle{\pm 0.9}$ & $51.2 \scriptstyle{\pm 6.0}$
  & $\mathbf{86.9 \scriptstyle{\pm 0.4}}$ & $\mathbf{2.3 \scriptstyle{\pm 0.3}}$ & $95.0 \scriptstyle{\pm 0.4}$ & $\underline{92.4 \scriptstyle{\pm 2.2}}$ \\
DAPPr
  & $\underline{89.0 \scriptstyle{\pm 1.1}}$ & $13.2 \scriptstyle{\pm 0.9}$ & $96.4 \scriptstyle{\pm 0.5}$ & $53.6 \scriptstyle{\pm 5.1}$
  & $\underline{86.6 \scriptstyle{\pm 0.5}}$ & $17.0 \scriptstyle{\pm 0.4}$ & $95.5 \scriptstyle{\pm 0.3}$ & $61.5 \scriptstyle{\pm 1.5}$ \\
\rowcolor{OxfordBlueLight}
Ours
  & $\mathbf{89.1 \scriptstyle{\pm 0.9}}$ & $\mathbf{2.7 \scriptstyle{\pm 0.4}}$ & $\underline{97.3 \scriptstyle{\pm 0.3}}$ & $\mathbf{96.1 \scriptstyle{\pm 4.6}}$
  & $86.2 \scriptstyle{\pm 0.6}$ & $\underline{2.5 \scriptstyle{\pm 0.5}}$ & $\mathbf{96.7 \scriptstyle{\pm 0.2}}$ & $\mathbf{97.1 \scriptstyle{\pm 1.9}}$ \\
\midrule
DE
  & $85.5 \scriptstyle{\pm 0.9}$ & $7.5 \scriptstyle{\pm 0.8}$ & $94.3 \scriptstyle{\pm 0.5}$ & $23.6 \scriptstyle{\pm 5.5}$
  & $75.7 \scriptstyle{\pm 1.1}$ & $6.2 \scriptstyle{\pm 0.9}$ & $88.6 \scriptstyle{\pm 0.8}$ & $24.4 \scriptstyle{\pm 1.7}$ \\
\bottomrule
\end{tabular}
\end{table}
\begin{figure}[!htbp]
\centering
\includegraphics[width=\linewidth]{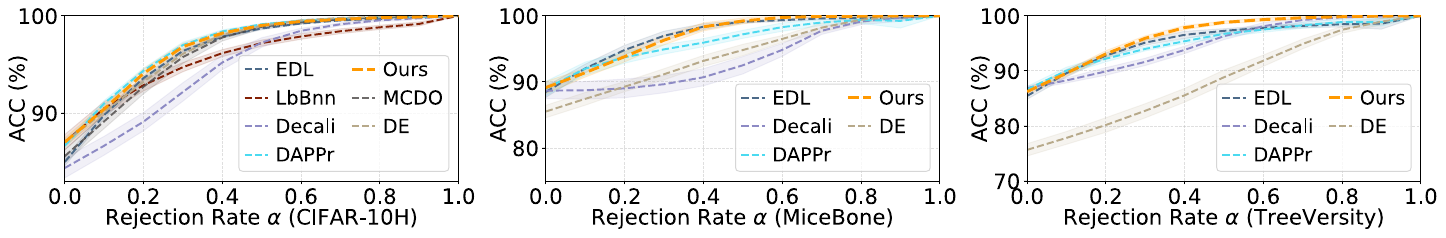}
\caption{Accuracy rejection curves under annotator-disagreement annotation imprecision.}
\label{Figure: arc_dis}
\end{figure}

\begin{table}[!htbp]
\caption{Performance comparison (in $\%$)  under teacher-prediction annotation imprecision with varying settings, averaged over 10 runs.}
\label{Table: AblationTeacher}
\centering
\small
\setlength\tabcolsep{4pt}
\begin{tabular}{lccc|c||ccc|c}
\toprule
\multirow{2}{*}{Method} & \multicolumn{4}{c||}{CIFAR-10} & \multicolumn{4}{c}{CIFAR-100} \\
\cmidrule(lr){2-5} \cmidrule(lr){6-9}
& ACC $\uparrow$ & ECE $\downarrow$ & AUARC $\uparrow$ & BQS $\uparrow$ & ACC $\uparrow$ & ECE $\downarrow$ & AUARC $\uparrow$ & BQS $\uparrow$ \\
\midrule \midrule
\multicolumn{9}{c}{Teacher: ResNet-20}\\ \midrule\midrule
EDL    & $93.4 \scriptstyle{\pm 0.2}$ & $9.0 \scriptstyle{\pm 0.2}$ & $99.1 \scriptstyle{\pm 0.1}$ & $51.6 \scriptstyle{\pm 3.8}$ & $33.1 \scriptstyle{\pm 5.7}$ & $\underline{23.3 \scriptstyle{\pm 3.9}}$ & $65.8 \scriptstyle{\pm 6.1}$ & $30.2 \scriptstyle{\pm 2.9}$ \\
LbBnn  & $\underline{95.1 \scriptstyle{\pm 0.1}}$ & $28.1 \scriptstyle{\pm 0.2}$ & $\underline{99.2 \scriptstyle{\pm 0.0}}$ & $54.2 \scriptstyle{\pm 3.3}$ & $\underline{73.1 \scriptstyle{\pm 0.2}}$ & $63.1 \scriptstyle{\pm 0.2}$ & $63.4 \scriptstyle{\pm 0.4}$ & $31.9 \scriptstyle{\pm 2.4}$ \\
Decali & $93.5 \scriptstyle{\pm 0.1}$ & $8.5 \scriptstyle{\pm 0.1}$ & $98.0 \scriptstyle{\pm 0.1}$ & $29.7 \scriptstyle{\pm 2.5}$ & $71.0 \scriptstyle{\pm 0.2}$ & $27.8 \scriptstyle{\pm 0.2}$ & $73.3 \scriptstyle{\pm 0.2}$ & $64.2 \scriptstyle{\pm 1.9}$ \\
DAPPr  & $93.4 \scriptstyle{\pm 0.2}$ & $14.6 \scriptstyle{\pm 0.2}$ & $99.0 \scriptstyle{\pm 0.0}$ & $42.8 \scriptstyle{\pm 3.1}$ & $72.5 \scriptstyle{\pm 0.4}$ & $52.8 \scriptstyle{\pm 0.3}$ & $78.3 \scriptstyle{\pm 0.3}$ & $55.1 \scriptstyle{\pm 1.6}$ \\
\rowcolor{OxfordBlueLight}
Ours   & $\mathbf{95.5 \scriptstyle{\pm 0.1}}$ & $\mathbf{3.0 \scriptstyle{\pm 0.1}}$ & $\mathbf{99.5 \scriptstyle{\pm 0.0}}$ & $\mathbf{99.9 \scriptstyle{\pm 0.3}}$ & $\mathbf{77.2 \scriptstyle{\pm 0.2}}$ & $\mathbf{10.6 \scriptstyle{\pm 0.4}}$ & $\mathbf{90.6 \scriptstyle{\pm 0.2}}$ & $\mathbf{100.0 \scriptstyle{\pm 0.0}}$ \\
\midrule
MCDO   & $93.5 \scriptstyle{\pm 0.2}$ & $\underline{8.5 \scriptstyle{\pm 0.2}}$ & $99.0 \scriptstyle{\pm 0.1}$ & $50.2 \scriptstyle{\pm 4.8}$ & $71.1 \scriptstyle{\pm 0.3}$ & $28.2 \scriptstyle{\pm 0.3}$ & $83.2 \scriptstyle{\pm 0.7}$ & $75.5 \scriptstyle{\pm 1.3}$ \\
DE     & $94.2 \scriptstyle{\pm 0.1}$ & $9.5 \scriptstyle{\pm 0.1}$ & $99.2 \scriptstyle{\pm 0.0}$ & $\underline{65.3 \scriptstyle{\pm 3.0}}$ & $72.7 \scriptstyle{\pm 0.3}$ & $29.9 \scriptstyle{\pm 0.3}$ & $\underline{86.5 \scriptstyle{\pm 0.3}}$ & $\underline{79.5 \scriptstyle{\pm 0.7}}$ \\
\midrule \midrule
\multicolumn{9}{c}{Teacher: ShuffleNetV2-0.5×}\\ \midrule\midrule 
EDL    & $92.9 \scriptstyle{\pm 0.3}$ & $\underline{9.1 \scriptstyle{\pm 0.1}}$ & $\underline{98.9 \scriptstyle{\pm 0.1}}$ & $\underline{67.0 \scriptstyle{\pm 3.2}}$ & $33.0 \scriptstyle{\pm 5.6}$ & $\underline{23.3 \scriptstyle{\pm 3.8}}$ & $65.9 \scriptstyle{\pm 6.3}$ & $31.9 \scriptstyle{\pm 3.6}$ \\
LbBnn  & $\underline{94.1 \scriptstyle{\pm 0.1}}$ & $33.2 \scriptstyle{\pm 0.2}$ & $98.3 \scriptstyle{\pm 0.1}$ & $43.3 \scriptstyle{\pm 1.8}$ & $72.6 \scriptstyle{\pm 0.2}$ & $65.4 \scriptstyle{\pm 0.2}$ & $62.6 \scriptstyle{\pm 0.4}$ & $31.0 \scriptstyle{\pm 2.4}$ \\
Decali & $92.1 \scriptstyle{\pm 0.2}$ & $12.1 \scriptstyle{\pm 0.2}$ & $96.8 \scriptstyle{\pm 0.1}$ & $26.9 \scriptstyle{\pm 1.7}$ & $71.0 \scriptstyle{\pm 0.2}$ & $27.4 \scriptstyle{\pm 0.2}$ & $71.9 \scriptstyle{\pm 0.3}$ & $64.0 \scriptstyle{\pm 1.7}$ \\
DAPPr  & $92.5 \scriptstyle{\pm 0.2}$ & $18.4 \scriptstyle{\pm 0.2}$ & $98.6 \scriptstyle{\pm 0.1}$ & $47.3 \scriptstyle{\pm 3.8}$ & $72.6 \scriptstyle{\pm 0.4}$ & $52.3 \scriptstyle{\pm 0.4}$ & $79.7 \scriptstyle{\pm 0.2}$ & $58.8 \scriptstyle{\pm 1.2}$ \\
\rowcolor{OxfordBlueLight}
Ours   & $\mathbf{94.9 \scriptstyle{\pm 0.1}}$ & $\mathbf{5.4 \scriptstyle{\pm 0.2}}$ & $\mathbf{99.4 \scriptstyle{\pm 0.0}}$ & $\mathbf{100.0 \scriptstyle{\pm 0.0}}$ & $\mathbf{77.8 \scriptstyle{\pm 0.3}}$ & $\mathbf{11.7 \scriptstyle{\pm 0.3}}$ & $\mathbf{90.2 \scriptstyle{\pm 0.3}}$ & $\mathbf{100.0 \scriptstyle{\pm 0.0}}$ \\
\midrule
MCDO   & $92.0 \scriptstyle{\pm 0.2}$ & $12.2 \scriptstyle{\pm 0.2}$ & $98.3 \scriptstyle{\pm 0.1}$ & $45.3 \scriptstyle{\pm 2.0}$ & $70.8 \scriptstyle{\pm 0.4}$ & $27.7 \scriptstyle{\pm 0.3}$ & $83.2 \scriptstyle{\pm 0.6}$ & $76.7 \scriptstyle{\pm 1.4}$ \\
DE     & $92.9 \scriptstyle{\pm 0.1}$ & $13.1 \scriptstyle{\pm 0.1}$ & $98.7 \scriptstyle{\pm 0.0}$ & $58.5 \scriptstyle{\pm 2.1}$ & $\underline{73.0 \scriptstyle{\pm 0.2}}$ & $29.9 \scriptstyle{\pm 0.1}$ & $\underline{87.2 \scriptstyle{\pm 0.2}}$ & $\underline{81.6 \scriptstyle{\pm 0.9}}$ \\
\midrule \midrule
\multicolumn{9}{c}{Teacher: MobileNetV2-x0-5}\\ \midrule\midrule
EDL    & $93.2 \scriptstyle{\pm 0.2}$ & $\underline{9.1 \scriptstyle{\pm 0.2}}$ & $99.1 \scriptstyle{\pm 0.1}$ & $53.1 \scriptstyle{\pm 1.1}$ & $37.8 \scriptstyle{\pm 1.5}$ & $26.6 \scriptstyle{\pm 1.1}$ & $71.4 \scriptstyle{\pm 1.4}$ & $32.9 \scriptstyle{\pm 1.3}$ \\
LbBnn  & $\underline{95.3 \scriptstyle{\pm 0.1}}$ & $37.4 \scriptstyle{\pm 0.3}$ & $99.0 \scriptstyle{\pm 0.0}$ & $52.5 \scriptstyle{\pm 2.3}$ & $76.1 \scriptstyle{\pm 0.2}$ & $68.2 \scriptstyle{\pm 0.3}$ & $64.5 \scriptstyle{\pm 0.4}$ & $31.0 \scriptstyle{\pm 0.3}$ \\
Decali & $93.6 \scriptstyle{\pm 0.1}$ & $10.3 \scriptstyle{\pm 0.1}$ & $97.9 \scriptstyle{\pm 0.0}$ & $33.7 \scriptstyle{\pm 2.1}$ & $74.4 \scriptstyle{\pm 0.3}$ & $\underline{25.8 \scriptstyle{\pm 0.3}}$ & $77.4 \scriptstyle{\pm 0.6}$ & $70.5 \scriptstyle{\pm 1.0}$ \\
DAPPr  & $93.8 \scriptstyle{\pm 0.2}$ & $16.1 \scriptstyle{\pm 0.2}$ & $99.0 \scriptstyle{\pm 0.0}$ & $53.3 \scriptstyle{\pm 2.8}$ & $75.3 \scriptstyle{\pm 0.4}$ & $51.9 \scriptstyle{\pm 0.4}$ & $84.6 \scriptstyle{\pm 0.2}$ & $65.2 \scriptstyle{\pm 0.6}$ \\
\rowcolor{OxfordBlueLight}
Ours   & $\mathbf{95.5 \scriptstyle{\pm 0.1}}$ & $\mathbf{4.2 \scriptstyle{\pm 0.2}}$ & $\mathbf{99.5 \scriptstyle{\pm 0.0}}$ & $\mathbf{99.8 \scriptstyle{\pm 0.4}}$ & $\mathbf{79.0 \scriptstyle{\pm 0.2}}$ & $\mathbf{10.9 \scriptstyle{\pm 0.2}}$ & $\mathbf{90.9 \scriptstyle{\pm 0.2}}$ & $\mathbf{100.0 \scriptstyle{\pm 0.0}}$ \\
\midrule
MCDO   & $93.4 \scriptstyle{\pm 0.1}$ & $10.2 \scriptstyle{\pm 0.2}$ & $99.0 \scriptstyle{\pm 0.0}$ & $52.5 \scriptstyle{\pm 2.5}$ & $73.9 \scriptstyle{\pm 0.3}$ & $25.8 \scriptstyle{\pm 0.2}$ & $86.5 \scriptstyle{\pm 0.5}$ & $81.7 \scriptstyle{\pm 1.1}$ \\
DE     & $94.2 \scriptstyle{\pm 0.1}$ & $11.4 \scriptstyle{\pm 0.1}$ & $\underline{99.1 \scriptstyle{\pm 0.0}}$ & $\underline{66.9 \scriptstyle{\pm 1.4}}$ & $\underline{76.4 \scriptstyle{\pm 0.1}}$ & $28.4 \scriptstyle{\pm 0.1}$ & $\underline{90.0 \scriptstyle{\pm 0.1}}$ & $\underline{86.5 \scriptstyle{\pm 0.6}}$ \\
\bottomrule
\end{tabular}
\end{table}
\begin{figure}[!htbp]
\centering
\includegraphics[width=\linewidth]{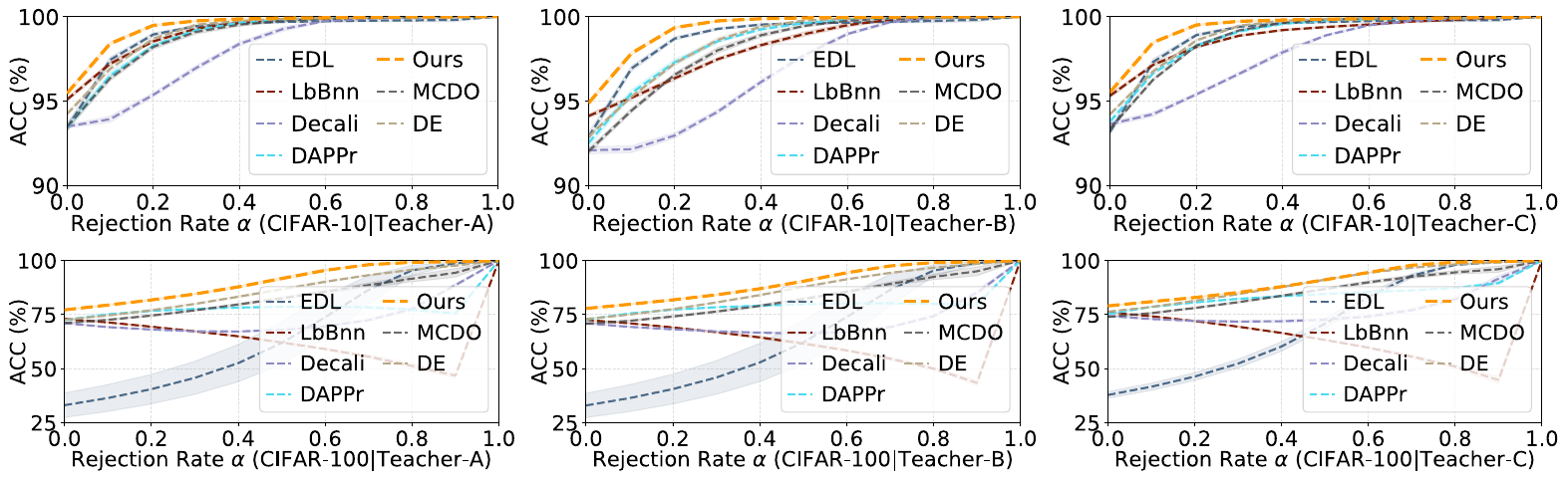}
\caption{Accuracy rejection curves under teacher-prediction annotation imprecision.}
\label{Figure: arc_teacher}
\end{figure}

\begin{table}[!htbp]
\caption{Label-smoothing results (\%) across \(\epsilon\). POCC achieves the highest mean BQS in seven of eight settings; the sole exception is \(\epsilon=0.1\) on CIFAR-10, where it ranks second behind DE (5 members), with the gap attributable to marginal differences in accuracy and AUARC. Statistical significance tests (\tablename~\ref{Table: ST-smoothing}, Appendix~\ref{App:StatisticalTest}) confirm that POCC ranks first statistically across all settings.}
\label{Table: AblationSmoothing}
\centering
\small
\setlength\tabcolsep{3.5pt}
\begin{tabular}{clccc|c||ccc|c}
\toprule
& \multirow{2}{*}{Method} & \multicolumn{4}{c||}{CIFAR-10} & \multicolumn{4}{c}{CIFAR-100} \\
\cmidrule(lr){3-6} \cmidrule(lr){7-10}
& & ACC $\uparrow$ & ECE $\downarrow$ & AUARC $\uparrow$ & BQS $\uparrow$ & ACC $\uparrow$ & ECE $\downarrow$ & AUARC $\uparrow$ & BQS $\uparrow$ \\
\midrule \midrule
\multirow{7}{*}{\rotatebox{90}{$\epsilon=0.05$}}
& EDL    & $93.4 \scriptstyle{\pm 0.3}$ & $9.1 \scriptstyle{\pm 0.2}$ & $99.1 \scriptstyle{\pm 0.1}$ & $48.6 \scriptstyle{\pm 4.3}$ & $26.3 \scriptstyle{\pm 1.2}$ & $18.7 \scriptstyle{\pm 1.0}$ & $58.5 \scriptstyle{\pm 1.6}$ & $24.3 \scriptstyle{\pm 0.6}$ \\
& LbBnn  & $\underline{95.4 \scriptstyle{\pm 0.2}}$ & $29.9 \scriptstyle{\pm 0.3}$ & $99.1 \scriptstyle{\pm 0.1}$ & $53.6 \scriptstyle{\pm 2.6}$ & $\underline{79.3 \scriptstyle{\pm 0.3}}$ & $61.6 \scriptstyle{\pm 0.5}$ & $74.5 \scriptstyle{\pm 0.9}$ & $47.7 \scriptstyle{\pm 1.1}$ \\
& Decali & $93.9 \scriptstyle{\pm 0.3}$ & $3.6 \scriptstyle{\pm 0.1}$ & $98.3 \scriptstyle{\pm 0.2}$ & $39.0 \scriptstyle{\pm 5.2}$ & $76.2 \scriptstyle{\pm 0.3}$ & $\underline{3.5 \scriptstyle{\pm 0.4}}$ & $90.7 \scriptstyle{\pm 0.3}$ & $94.0 \scriptstyle{\pm 0.7}$ \\
& DAPPr  & $93.9 \scriptstyle{\pm 0.3}$ & $8.9 \scriptstyle{\pm 0.2}$ & $99.1 \scriptstyle{\pm 0.1}$ & $58.5 \scriptstyle{\pm 6.4}$ & $75.1 \scriptstyle{\pm 0.4}$ & $35.3 \scriptstyle{\pm 0.4}$ & $91.9 \scriptstyle{\pm 0.2}$ & $76.5 \scriptstyle{\pm 0.4}$ \\
& \cellcolor{OxfordBlueLight}Ours   & \cellcolor{OxfordBlueLight}$\mathbf{95.5 \scriptstyle{\pm 0.1}}$ & \cellcolor{OxfordBlueLight}$\mathbf{1.5 \scriptstyle{\pm 0.1}}$ & \cellcolor{OxfordBlueLight}$\underline{99.2 \scriptstyle{\pm 0.1}}$ & \cellcolor{OxfordBlueLight}$\mathbf{90.6 \scriptstyle{\pm 4.7}}$ & \cellcolor{OxfordBlueLight}$\underline{79.3 \scriptstyle{\pm 0.2}}$ & \cellcolor{OxfordBlueLight}$\mathbf{2.7 \scriptstyle{\pm 0.2}}$ & \cellcolor{OxfordBlueLight}$\underline{93.4 \scriptstyle{\pm 0.1}}$ & \cellcolor{OxfordBlueLight}$\mathbf{99.0 \scriptstyle{\pm 0.3}}$ \\
\cmidrule{2-10}
& MCDO   & $93.7 \scriptstyle{\pm 0.2}$ & $\underline{3.5 \scriptstyle{\pm 0.1}}$ & $99.1 \scriptstyle{\pm 0.1}$ & $60.2 \scriptstyle{\pm 6.3}$ & $76.1 \scriptstyle{\pm 0.3}$ & $3.8 \scriptstyle{\pm 0.4}$ & $91.8 \scriptstyle{\pm 0.2}$ & $94.9 \scriptstyle{\pm 0.6}$ \\
& DE     & $95.2 \scriptstyle{\pm 0.1}$ & $5.5 \scriptstyle{\pm 0.1}$ & $\mathbf{99.4 \scriptstyle{\pm 0.0}}$ & $\underline{89.7 \scriptstyle{\pm 2.3}}$ & $\mathbf{80.3 \scriptstyle{\pm 0.2}}$ & $9.1 \scriptstyle{\pm 0.2}$ & $\mathbf{93.8 \scriptstyle{\pm 0.1}}$ & $\underline{96.4 \scriptstyle{\pm 0.2}}$ \\
\midrule \midrule
\multirow{7}{*}{\rotatebox{90}{$\epsilon=0.1$}}
& EDL    & $93.3 \scriptstyle{\pm 0.1}$ & $9.2 \scriptstyle{\pm 0.3}$ & $\underline{99.1 \scriptstyle{\pm 0.0}}$ & $56.4 \scriptstyle{\pm 2.8}$ & $27.5 \scriptstyle{\pm 1.1}$ & $19.6 \scriptstyle{\pm 0.9}$ & $60.0 \scriptstyle{\pm 1.4}$ & $24.6 \scriptstyle{\pm 0.4}$ \\
& LbBnn  & $\mathbf{95.4 \scriptstyle{\pm 0.1}}$ & $45.2 \scriptstyle{\pm 0.1}$ & $98.9 \scriptstyle{\pm 0.1}$ & $56.4 \scriptstyle{\pm 3.6}$ & $79.1 \scriptstyle{\pm 0.2}$ & $70.0 \scriptstyle{\pm 0.3}$ & $69.4 \scriptstyle{\pm 1.2}$ & $41.9 \scriptstyle{\pm 1.5}$ \\
& Decali & $93.9 \scriptstyle{\pm 0.3}$ & $\underline{7.3 \scriptstyle{\pm 0.1}}$ & $97.5 \scriptstyle{\pm 0.5}$ & $39.4 \scriptstyle{\pm 3.6}$ & $75.9 \scriptstyle{\pm 0.5}$ & $\underline{5.0 \scriptstyle{\pm 0.3}}$ & $87.7 \scriptstyle{\pm 0.7}$ & $89.4 \scriptstyle{\pm 0.7}$ \\
& DAPPr  & $93.9 \scriptstyle{\pm 0.2}$ & $11.8 \scriptstyle{\pm 0.2}$ & $\underline{99.1 \scriptstyle{\pm 0.1}}$ & $63.9 \scriptstyle{\pm 4.8}$ & $75.0 \scriptstyle{\pm 0.4}$ & $36.8 \scriptstyle{\pm 0.4}$ & $91.5 \scriptstyle{\pm 0.3}$ & $77.1 \scriptstyle{\pm 0.4}$ \\
& \cellcolor{OxfordBlueLight}Ours   & \cellcolor{OxfordBlueLight}$\mathbf{95.4 \scriptstyle{\pm 0.1}}$ & \cellcolor{OxfordBlueLight}$\mathbf{3.7 \scriptstyle{\pm 0.2}}$ & \cellcolor{OxfordBlueLight}$98.8 \scriptstyle{\pm 0.2}$ & \cellcolor{OxfordBlueLight}$\underline{87.2 \scriptstyle{\pm 4.4}}$ & \cellcolor{OxfordBlueLight}$\underline{79.4 \scriptstyle{\pm 0.2}}$ & \cellcolor{OxfordBlueLight}$\mathbf{1.5 \scriptstyle{\pm 0.2}}$ & \cellcolor{OxfordBlueLight}$\underline{93.3 \scriptstyle{\pm 0.1}}$ & \cellcolor{OxfordBlueLight}$\mathbf{98.8 \scriptstyle{\pm 0.2}}$ \\
\cmidrule{2-10}
& MCDO   & $93.6 \scriptstyle{\pm 0.3}$ & $7.8 \scriptstyle{\pm 0.2}$ & $99.0 \scriptstyle{\pm 0.1}$ & $60.5 \scriptstyle{\pm 5.3}$ & $76.4 \scriptstyle{\pm 0.3}$ & $7.3 \scriptstyle{\pm 0.3}$ & $91.8 \scriptstyle{\pm 0.2}$ & $92.7 \scriptstyle{\pm 0.4}$ \\
& DE     & $95.2 \scriptstyle{\pm 0.1}$ & $10.0 \scriptstyle{\pm 0.1}$ & $\mathbf{99.4 \scriptstyle{\pm 0.0}}$ & $\mathbf{91.2 \scriptstyle{\pm 2.1}}$ & $\mathbf{80.2 \scriptstyle{\pm 0.2}}$ & $12.8 \scriptstyle{\pm 0.2}$ & $\mathbf{93.9 \scriptstyle{\pm 0.1}}$ & $\underline{94.5 \scriptstyle{\pm 0.1}}$ \\
\midrule \midrule
\multirow{7}{*}{\rotatebox{90}{$\epsilon=0.15$}}
& EDL    & $93.5 \scriptstyle{\pm 0.3}$ & $\underline{9.2 \scriptstyle{\pm 0.3}}$ & $\underline{99.1 \scriptstyle{\pm 0.0}}$ & $61.7 \scriptstyle{\pm 1.7}$ & $26.8 \scriptstyle{\pm 1.2}$ & $19.1 \scriptstyle{\pm 0.9}$ & $59.1 \scriptstyle{\pm 1.5}$ & $25.3 \scriptstyle{\pm 0.4}$ \\
& LbBnn  & $\underline{95.3 \scriptstyle{\pm 0.1}}$ & $54.5 \scriptstyle{\pm 0.1}$ & $98.8 \scriptstyle{\pm 0.1}$ & $57.8 \scriptstyle{\pm 2.8}$ & $78.8 \scriptstyle{\pm 0.3}$ & $72.7 \scriptstyle{\pm 0.3}$ & $69.6 \scriptstyle{\pm 1.3}$ & $42.5 \scriptstyle{\pm 1.4}$ \\
& Decali & $93.9 \scriptstyle{\pm 0.2}$ & $11.6 \scriptstyle{\pm 0.3}$ & $96.2 \scriptstyle{\pm 0.7}$ & $36.3 \scriptstyle{\pm 3.8}$ & $75.7 \scriptstyle{\pm 0.5}$ & $\underline{8.9 \scriptstyle{\pm 0.3}}$ & $83.6 \scriptstyle{\pm 0.4}$ & $84.1 \scriptstyle{\pm 0.8}$ \\
& DAPPr  & $94.0 \scriptstyle{\pm 0.3}$ & $15.4 \scriptstyle{\pm 0.2}$ & $\underline{99.1 \scriptstyle{\pm 0.1}}$ & $65.7 \scriptstyle{\pm 4.2}$ & $74.8 \scriptstyle{\pm 0.5}$ & $38.6 \scriptstyle{\pm 0.4}$ & $91.5 \scriptstyle{\pm 0.3}$ & $77.1 \scriptstyle{\pm 0.4}$ \\
& \cellcolor{OxfordBlueLight}Ours   & \cellcolor{OxfordBlueLight}$\mathbf{95.5 \scriptstyle{\pm 0.1}}$ & \cellcolor{OxfordBlueLight}$\mathbf{6.0 \scriptstyle{\pm 0.2}}$ & \cellcolor{OxfordBlueLight}$98.8 \scriptstyle{\pm 0.2}$ & \cellcolor{OxfordBlueLight}$\mathbf{92.6 \scriptstyle{\pm 2.8}}$ & \cellcolor{OxfordBlueLight}$\underline{79.4 \scriptstyle{\pm 0.1}}$ & \cellcolor{OxfordBlueLight}$\mathbf{2.0 \scriptstyle{\pm 0.2}}$ & \cellcolor{OxfordBlueLight}$\underline{93.1 \scriptstyle{\pm 0.1}}$ & \cellcolor{OxfordBlueLight}$\mathbf{98.7 \scriptstyle{\pm 0.2}}$ \\
\cmidrule{2-10}
& MCDO   & $93.7 \scriptstyle{\pm 0.4}$ & $12.2 \scriptstyle{\pm 0.2}$ & $99.0 \scriptstyle{\pm 0.1}$ & $62.8 \scriptstyle{\pm 4.6}$ & $76.1 \scriptstyle{\pm 0.2}$ & $10.9 \scriptstyle{\pm 0.2}$ & $91.6 \scriptstyle{\pm 0.2}$ & $91.1 \scriptstyle{\pm 0.3}$ \\
& DE     & $95.2 \scriptstyle{\pm 0.1}$ & $14.3 \scriptstyle{\pm 0.1}$ & $\mathbf{99.4 \scriptstyle{\pm 0.0}}$ & $\underline{89.6 \scriptstyle{\pm 2.4}}$ & $\mathbf{80.2 \scriptstyle{\pm 0.2}}$ & $17.1 \scriptstyle{\pm 0.3}$ & $\mathbf{93.9 \scriptstyle{\pm 0.1}}$ & $\underline{92.9 \scriptstyle{\pm 0.2}}$ \\
\midrule \midrule
\multirow{7}{*}{\rotatebox{90}{$\epsilon=0.2$}}
& EDL    & $93.3 \scriptstyle{\pm 0.4}$ & $\underline{9.2 \scriptstyle{\pm 0.1}}$ & $\underline{99.1 \scriptstyle{\pm 0.1}}$ & $63.8 \scriptstyle{\pm 2.3}$ & $26.4 \scriptstyle{\pm 0.9}$ & $18.8 \scriptstyle{\pm 0.7}$ & $58.7 \scriptstyle{\pm 1.1}$ & $25.8 \scriptstyle{\pm 0.4}$ \\
& LbBnn  & $\mathbf{95.4 \scriptstyle{\pm 0.1}}$ & $61.0 \scriptstyle{\pm 0.1}$ & $98.7 \scriptstyle{\pm 0.1}$ & $58.8 \scriptstyle{\pm 2.0}$ & $78.7 \scriptstyle{\pm 0.3}$ & $74.2 \scriptstyle{\pm 0.3}$ & $70.6 \scriptstyle{\pm 1.7}$ & $43.7 \scriptstyle{\pm 1.4}$ \\
& Decali & $93.9 \scriptstyle{\pm 0.2}$ & $16.0 \scriptstyle{\pm 0.2}$ & $95.5 \scriptstyle{\pm 0.7}$ & $37.8 \scriptstyle{\pm 3.0}$ & $75.4 \scriptstyle{\pm 0.3}$ & $\underline{13.5 \scriptstyle{\pm 0.3}}$ & $80.3 \scriptstyle{\pm 0.7}$ & $79.2 \scriptstyle{\pm 1.0}$ \\
& DAPPr  & $93.9 \scriptstyle{\pm 0.2}$ & $18.9 \scriptstyle{\pm 0.2}$ & $99.0 \scriptstyle{\pm 0.1}$ & $65.3 \scriptstyle{\pm 2.5}$ & $75.0 \scriptstyle{\pm 0.4}$ & $40.8 \scriptstyle{\pm 0.3}$ & $91.4 \scriptstyle{\pm 0.3}$ & $76.6 \scriptstyle{\pm 0.5}$ \\
& \cellcolor{OxfordBlueLight}Ours   & \cellcolor{OxfordBlueLight}$\mathbf{95.4 \scriptstyle{\pm 0.1}}$ & \cellcolor{OxfordBlueLight}$\mathbf{7.6 \scriptstyle{\pm 0.1}}$ & \cellcolor{OxfordBlueLight}$98.7 \scriptstyle{\pm 0.1}$ & \cellcolor{OxfordBlueLight}$\mathbf{91.9 \scriptstyle{\pm 2.5}}$ & \cellcolor{OxfordBlueLight}$\underline{79.3 \scriptstyle{\pm 0.3}}$ & \cellcolor{OxfordBlueLight}$\mathbf{2.7 \scriptstyle{\pm 0.3}}$ & \cellcolor{OxfordBlueLight}$\underline{92.8 \scriptstyle{\pm 0.2}}$ & \cellcolor{OxfordBlueLight}$\mathbf{98.4 \scriptstyle{\pm 0.3}}$ \\
\cmidrule{2-10}
& MCDO   & $93.7 \scriptstyle{\pm 0.4}$ & $16.5 \scriptstyle{\pm 0.2}$ & $99.0 \scriptstyle{\pm 0.1}$ & $64.0 \scriptstyle{\pm 6.8}$ & $75.6 \scriptstyle{\pm 0.3}$ & $14.6 \scriptstyle{\pm 0.3}$ & $91.4 \scriptstyle{\pm 0.2}$ & $89.3 \scriptstyle{\pm 0.4}$ \\
& DE     & $95.2 \scriptstyle{\pm 0.1}$ & $18.5 \scriptstyle{\pm 0.1}$ & $\mathbf{99.4 \scriptstyle{\pm 0.0}}$ & $\underline{89.3 \scriptstyle{\pm 2.1}}$ & $\mathbf{80.1 \scriptstyle{\pm 0.2}}$ & $21.3 \scriptstyle{\pm 0.3}$ & $\mathbf{93.9 \scriptstyle{\pm 0.1}}$ & $\underline{91.3 \scriptstyle{\pm 0.1}}$ \\
\bottomrule
\end{tabular}
\end{table}
\begin{figure}[!htbp]
\centering
\includegraphics[width=\linewidth]{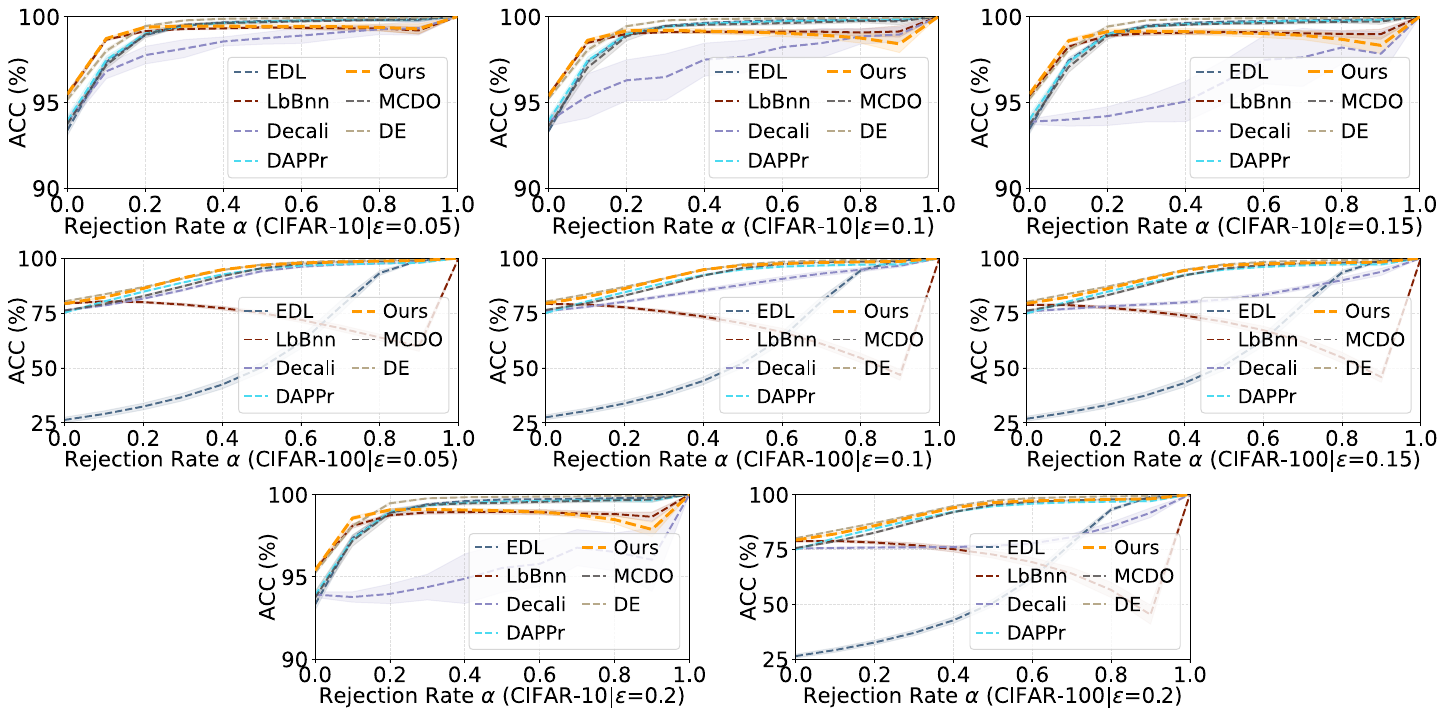}
\caption{Accuracy rejection curves under label-smoothing annotation imprecision with varying $\epsilon$.}
\label{Figure: arc_label}
\end{figure}
\begin{table}[!htbp]
\caption{Performance comparison (in $\%$)  using the teachers' original predictions ($T=1.0$) as the source of annotation imprecision, averaged over 10 runs. These results consistently confirm the robustness of POCC in achieving the highest BQS scores across settings.}
\label{Table: AblationTeacher_original}
\centering
\small
\setlength\tabcolsep{4pt}
\begin{tabular}{lccc|c||ccc|c}
\toprule
\multirow{2}{*}{Method} & \multicolumn{4}{c||}{CIFAR-10} & \multicolumn{4}{c}{CIFAR-100} \\
\cmidrule(lr){2-5} \cmidrule(lr){6-9}
& ACC $\uparrow$ & ECE $\downarrow$ & AUARC $\uparrow$ & BQS $\uparrow$ & ACC $\uparrow$ & ECE $\downarrow$ & AUARC $\uparrow$ & BQS $\uparrow$ \\
\midrule \midrule
\multicolumn{9}{c}{Teacher: ResNet-20}\\ \midrule\midrule
EDL    & $93.3 \scriptstyle{\pm 0.2}$ & $9.1 \scriptstyle{\pm 0.3}$ & $99.1 \scriptstyle{\pm 0.1}$ & $1.2 \scriptstyle{\pm 2.4}$ & $33.7 \scriptstyle{\pm 5.0}$ & $23.7 \scriptstyle{\pm 3.4}$ & $66.5 \scriptstyle{\pm 5.5}$ & $14.3 \scriptstyle{\pm 3.1}$ \\
LbBnn  & $\underline{95.4 \scriptstyle{\pm 0.1}}$ & $1.5 \scriptstyle{\pm 0.1}$ & $\underline{99.5 \scriptstyle{\pm 0.0}}$ & $\underline{91.4 \scriptstyle{\pm 2.7}}$ & $\underline{77.3 \scriptstyle{\pm 0.3}}$ & $19.5 \scriptstyle{\pm 0.5}$ & $90.3 \scriptstyle{\pm 0.2}$ & $81.3 \scriptstyle{\pm 0.7}$ \\
Decali & $93.8 \scriptstyle{\pm 0.2}$ & $1.5 \scriptstyle{\pm 0.1}$ & $99.2 \scriptstyle{\pm 0.0}$ & $42.2 \scriptstyle{\pm 8.0}$ & $74.3 \scriptstyle{\pm 0.4}$ & $\mathbf{2.4 \scriptstyle{\pm 0.3}}$ & $89.1 \scriptstyle{\pm 0.1}$ & $92.9 \scriptstyle{\pm 1.2}$ \\
DAPPr  & $94.0 \scriptstyle{\pm 0.2}$ & $7.0 \scriptstyle{\pm 0.1}$ & $99.2 \scriptstyle{\pm 0.1}$ & $27.2 \scriptstyle{\pm 7.4}$ & $75.4 \scriptstyle{\pm 0.3}$ & $39.7 \scriptstyle{\pm 0.3}$ & $90.1 \scriptstyle{\pm 0.3}$ & $61.6 \scriptstyle{\pm 1.2}$ \\
\rowcolor{OxfordBlueLight}
Ours   & $\mathbf{95.5 \scriptstyle{\pm 0.1}}$ & $1.8 \scriptstyle{\pm 0.1}$ & $\mathbf{99.5 \scriptstyle{\pm 0.0}}$ & $\mathbf{94.5 \scriptstyle{\pm 2.1}}$ & $\mathbf{77.9 \scriptstyle{\pm 0.2}}$ & $3.9 \scriptstyle{\pm 0.4}$ & $\mathbf{92.4 \scriptstyle{\pm 0.1}}$ & $\mathbf{98.6 \scriptstyle{\pm 0.4}}$ \\
\midrule
MCDO   & $93.8 \scriptstyle{\pm 0.1}$ & $\underline{1.1 \scriptstyle{\pm 0.1}}$ & $99.2 \scriptstyle{\pm 0.0}$ & $48.2 \scriptstyle{\pm 7.6}$ & $73.8 \scriptstyle{\pm 0.4}$ & $\underline{2.9 \scriptstyle{\pm 0.3}}$ & $90.3 \scriptstyle{\pm 0.2}$ & $93.5 \scriptstyle{\pm 1.2}$ \\
DE     & $94.8 \scriptstyle{\pm 0.0}$ & $\mathbf{0.7 \scriptstyle{\pm 0.1}}$ & $99.4 \scriptstyle{\pm 0.0}$ & $77.9 \scriptstyle{\pm 3.9}$ & $76.2 \scriptstyle{\pm 0.2}$ & $5.9 \scriptstyle{\pm 0.2}$ & $\underline{91.7 \scriptstyle{\pm 0.1}}$ & $\underline{94.6 \scriptstyle{\pm 0.7}}$ \\
\midrule \midrule
\multicolumn{9}{c}{Teacher: ShuffleNetV2-0.5×}\\ \midrule\midrule 
EDL    & $92.7 \scriptstyle{\pm 0.1}$ & $9.1 \scriptstyle{\pm 0.2}$ & $99.0 \scriptstyle{\pm 0.0}$ & $3.0 \scriptstyle{\pm 2.1}$ & $32.7 \scriptstyle{\pm 5.9}$ & $23.1 \scriptstyle{\pm 4.2}$ & $65.6 \scriptstyle{\pm 6.4}$ & $14.1 \scriptstyle{\pm 3.8}$ \\
LbBnn  & $\underline{95.0 \scriptstyle{\pm 0.2}}$ & $3.1 \scriptstyle{\pm 0.1}$ & $\underline{99.3 \scriptstyle{\pm 0.0}}$ & $\underline{76.2 \scriptstyle{\pm 3.1}}$ & $\underline{77.7 \scriptstyle{\pm 0.2}}$ & $21.7 \scriptstyle{\pm 0.4}$ & $90.8 \scriptstyle{\pm 0.1}$ & $79.0 \scriptstyle{\pm 0.9}$ \\
Decali & $93.2 \scriptstyle{\pm 0.2}$ & $\mathbf{0.6 \scriptstyle{\pm 0.1}}$ & $98.9 \scriptstyle{\pm 0.0}$ & $40.9 \scriptstyle{\pm 3.1}$ & $74.8 \scriptstyle{\pm 0.3}$ & $\mathbf{2.1 \scriptstyle{\pm 0.3}}$ & $90.4 \scriptstyle{\pm 0.1}$ & $94.0 \scriptstyle{\pm 1.2}$ \\
DAPPr  & $93.6 \scriptstyle{\pm 0.2}$ & $8.9 \scriptstyle{\pm 0.2}$ & $99.2 \scriptstyle{\pm 0.0}$ & $28.4 \scriptstyle{\pm 5.4}$ & $75.6 \scriptstyle{\pm 0.3}$ & $38.5 \scriptstyle{\pm 0.2}$ & $91.0 \scriptstyle{\pm 0.1}$ & $62.3 \scriptstyle{\pm 0.8}$ \\
\rowcolor{OxfordBlueLight}
Ours   & $\mathbf{95.0 \scriptstyle{\pm 0.1}}$ & $\underline{0.8 \scriptstyle{\pm 0.1}}$ & $\mathbf{99.5 \scriptstyle{\pm 0.0}}$ & $\mathbf{98.4 \scriptstyle{\pm 1.5}}$ & $\mathbf{78.4 \scriptstyle{\pm 0.1}}$ & $3.8 \scriptstyle{\pm 0.2}$ & $\mathbf{92.8 \scriptstyle{\pm 0.1}}$ & $\mathbf{98.3 \scriptstyle{\pm 0.4}}$ \\
\midrule
MCDO   & $93.2 \scriptstyle{\pm 0.3}$ & $0.9 \scriptstyle{\pm 0.1}$ & $99.0 \scriptstyle{\pm 0.1}$ & $48.0 \scriptstyle{\pm 6.1}$ & $74.2 \scriptstyle{\pm 0.4}$ & $\underline{2.3 \scriptstyle{\pm 0.4}}$ & $90.8 \scriptstyle{\pm 0.3}$ & $94.0 \scriptstyle{\pm 1.6}$ \\
DE     & $94.0 \scriptstyle{\pm 0.1}$ & $1.9 \scriptstyle{\pm 0.1}$ & $99.2 \scriptstyle{\pm 0.0}$ & $65.5 \scriptstyle{\pm 2.3}$ & $77.1 \scriptstyle{\pm 0.2}$ & $5.9 \scriptstyle{\pm 0.2}$ & $\underline{92.3 \scriptstyle{\pm 0.1}}$ & $\underline{94.8 \scriptstyle{\pm 0.5}}$ \\
\midrule \midrule
\multicolumn{9}{c}{Teacher: MobileNetV2-x0-5}\\ \midrule\midrule
EDL    & $93.3 \scriptstyle{\pm 0.2}$ & $9.2 \scriptstyle{\pm 0.2}$ & $99.1 \scriptstyle{\pm 0.1}$ & $2.7 \scriptstyle{\pm 3.4}$ & $38.3 \scriptstyle{\pm 1.6}$ & $27.0 \scriptstyle{\pm 1.1}$ & $71.7 \scriptstyle{\pm 1.4}$ & $9.2 \scriptstyle{\pm 1.1}$ \\
LbBnn  & $\mathbf{95.5 \scriptstyle{\pm 0.2}}$ & $1.6 \scriptstyle{\pm 0.1}$ & $\underline{99.5 \scriptstyle{\pm 0.0}}$ & $\underline{91.0 \scriptstyle{\pm 3.2}}$ & $\mathbf{79.2 \scriptstyle{\pm 0.3}}$ & $20.9 \scriptstyle{\pm 0.6}$ & $91.9 \scriptstyle{\pm 0.1}$ & $79.2 \scriptstyle{\pm 0.5}$ \\
Decali & $93.8 \scriptstyle{\pm 0.2}$ & $1.4 \scriptstyle{\pm 0.1}$ & $99.1 \scriptstyle{\pm 0.0}$ & $42.8 \scriptstyle{\pm 8.4}$ & $76.2 \scriptstyle{\pm 0.2}$ & $\mathbf{2.3 \scriptstyle{\pm 0.2}}$ & $91.7 \scriptstyle{\pm 0.1}$ & $94.4 \scriptstyle{\pm 0.3}$ \\
DAPPr  & $93.9 \scriptstyle{\pm 0.2}$ & $7.2 \scriptstyle{\pm 0.1}$ & $99.2 \scriptstyle{\pm 0.0}$ & $27.7 \scriptstyle{\pm 5.7}$ & $76.1 \scriptstyle{\pm 0.3}$ & $36.6 \scriptstyle{\pm 0.3}$ & $91.8 \scriptstyle{\pm 0.2}$ & $61.2 \scriptstyle{\pm 0.6}$ \\
\rowcolor{OxfordBlueLight}
Ours   & $\underline{95.5 \scriptstyle{\pm 0.2}}$ & $1.7 \scriptstyle{\pm 0.1}$ & $\mathbf{99.5 \scriptstyle{\pm 0.0}}$ & $\mathbf{94.6 \scriptstyle{\pm 3.6}}$ & $\underline{79.1 \scriptstyle{\pm 0.3}}$ & $3.6 \scriptstyle{\pm 0.3}$ & $\mathbf{93.7 \scriptstyle{\pm 0.1}}$ & $\mathbf{98.4 \scriptstyle{\pm 0.5}}$ \\
\midrule
MCDO   & $93.6 \scriptstyle{\pm 0.2}$ & $\underline{1.0 \scriptstyle{\pm 0.2}}$ & $99.2 \scriptstyle{\pm 0.1}$ & $47.7 \scriptstyle{\pm 6.5}$ & $75.4 \scriptstyle{\pm 0.4}$ & $\underline{2.4 \scriptstyle{\pm 0.2}}$ & $91.6 \scriptstyle{\pm 0.3}$ & $93.5 \scriptstyle{\pm 0.9}$ \\
DE     & $94.8 \scriptstyle{\pm 0.1}$ & $\mathbf{0.8 \scriptstyle{\pm 0.1}}$ & $99.4 \scriptstyle{\pm 0.0}$ & $77.8 \scriptstyle{\pm 2.4}$ & $78.9 \scriptstyle{\pm 0.3}$ & $5.2 \scriptstyle{\pm 0.2}$ & $\underline{93.4 \scriptstyle{\pm 0.1}}$ & $\underline{96.4 \scriptstyle{\pm 0.3}}$ \\
\bottomrule
\end{tabular}
\end{table}

\newpage
\section{Evaluation on Statistical Significance Tests}
\label{App:StatisticalTest}
\textbf{Setup.}
In this section, we conduct pairwise one-sided Wilcoxon signed-rank tests at the \(5\%\) significance level across three evaluation criteria: accuracy, \(1-\mathrm{ECE}\), and AUARC. For each ordered pair of methods \((m_i, m_j)\), \(i \neq j\), the null hypothesis \(H_0\) assumes no systematic performance difference between \(m_i\) and \(m_j\), while the alternative hypothesis \(H_1\) posits that \(m_i\) stochastically outperforms \(m_j\). Tests are performed over 10 independent runs, and results are deemed statistically significant when \(p < 0.05\), in which case \(m_i\) is considered to outperform \(m_j\).

To provide an interpretable summary of the statistical comparisons, we construct a ranking scheme as follows. For each method and each evaluation criterion, we count the number of significant wins and losses across all pairwise comparisons: a significant win contributes \(+1\), a significant loss contributes \(-1\), and a non-significant outcome contributes \(0\). The net score for method \(m\) is defined as \(\text{wins}(m) - \text{losses}(m)\), reflecting its relative dominance. To obtain a holistic measure of balanced performance, we min-max normalise the net scores of the three criteria and take their average, denoted as \emph{normalised balanced quality score for statistical test} (BQS-ST), reflecting equal weighting across accuracy, calibration, and EU quantification.

\textbf{Results.}
Consistent with the main evaluation settings in Section~\ref{sec: Exp}, we evaluate our POCC method across three annotation imprecision settings: annotator disagreement, teacher prediction, and label smoothing. Tables~\ref{Table: ST_disagreement}, \ref{Table: ST-Teacher}, and \ref{Table: ST-smoothing} report rankings derived from pairwise statistical comparisons, demonstrating that \emph{POCC consistently achieves the best trade-off performance}---as measured by the highest BQS-ST scores---across prediction accuracy, calibration, and EU quantification. The detailed statistical significance analyses for the three evaluation scenarios are presented in Figures~\ref{Figure: ST-Disagreement}, \ref{Figure: ST-teacher}, and \ref{Figure: ST-smoothing}, respectively.

\begin{table}[!htbp]
\caption{Performance comparison (in $\%$, $\uparrow$) under annotator disagreements.}
\label{Table: ST_disagreement}
\centering
\small
\setlength\tabcolsep{1.5pt}
\begin{tabular}{lccc|c||ccc|c||ccc|c}
\toprule
\multirow{2}{*}{Method} & \multicolumn{4}{c||}{CIFAR-10H} & \multicolumn{4}{c||}{MiceBone} & \multicolumn{4}{c}{TreeVersity} \\
\cmidrule(lr){2-5} \cmidrule(lr){6-9} \cmidrule(lr){10-13}
& ACC & 1-ECE & AUARC & BQS-ST & ACC & 1-ECE & AUARC & BQS-ST & ACC & 1-ECE & AUARC & BQS-ST\\ 
\midrule
EDL    & $-5.0$ & $-6.0$ & $-1.0$ & $15.2$ & $-1.0$ & $-2.0$ & $\mathbf{3.0}$ & $\underline{58.3}$ & $-2.0$ & $-3.0$ & $\underline{1.0}$ & $30.4$ \\
LbBnn  & $\underline{3.0}$ & $-2.0$ & $-4.0$ & $47.8$ & -- & -- & -- & -- & -- & -- & -- & -- \\
Decali & $-5.0$ & $\mathbf{5.0}$ & $-6.0$ & $33.3$ & $\underline{1.0}$ & $\underline{2.0}$ & $-4.0$ & $52.8$ & $\mathbf{3.0}$ & $\mathbf{3.0}$ & $-2.0$ & $\underline{75.0}$ \\
DAPPr  & $1.0$ & $-4.0$ & $\mathbf{5.0}$ & $61.6$ & $\mathbf{2.0}$ & $-4.0$ & $\underline{0.0}$ & $52.4$ & $\mathbf{3.0}$ & $-3.0$ & $\underline{1.0}$ & $54.2$ \\
\rowcolor{OxfordBlueLight}
Ours   & $\mathbf{4.0}$ & $\underline{2.0}$ & $\mathbf{5.0}$ & $\mathbf{90.9}$ & $\mathbf{2.0}$ & $\mathbf{4.0}$ & $\mathbf{3.0}$ & $\mathbf{100.0}$ & $\underline{0.0}$ & $\mathbf{3.0}$ & $\mathbf{4.0}$ & $\mathbf{85.7}$ \\
\midrule
MCDO   & $-2.0$ & $\mathbf{5.0}$ & $-1.0$ & $59.6$ & -- & -- & -- & -- & -- & -- & -- & -- \\
DE     & $\mathbf{4.0}$ & $0.0$ & $\underline{2.0}$ & $\underline{75.8}$ & $-4.0$ & $0.0$ & $-2.0$ & $26.2$ & $-4.0$ & $\underline{0.0}$ & $-4.0$ & $16.7$ \\
\bottomrule
\end{tabular}
\end{table}

\begin{table}[!htbp]
\caption{Performance comparison (in $\%$, $\uparrow$) under teacher-prediction annotation imprecision.}
\label{Table: ST-Teacher}
\centering
\small
\begin{tabular}{cl ccc|c||ccc|c}
\toprule
\multirow{2}{*}{} & \multirow{2}{*}{Method} & \multicolumn{4}{c||}{CIFAR-10} & \multicolumn{4}{c}{CIFAR-100} \\
\cmidrule(lr){3-6} \cmidrule(lr){7-10}
& & ACC & 1-ECE & AUARC & BQS-ST & ACC & 1-ECE & AUARC & BQS-ST \\
\midrule\midrule
\multirow{7}{*}{\rotatebox[origin=c]{90}{ResNet-20}}
& EDL    & $-3.0$ & $0.0$ & $0.0$ & $33.3$ & $-6.0$ & $\underline{4.0}$ & $-5.0$ & $27.8$ \\
& LbBnn  & $\underline{4.0}$ & $-6.0$ & $\underline{4.0}$ & $\underline{53.7}$ & $\underline{4.0}$ & $-6.0$ & $-5.0$ & $27.8$ \\
& Decali & $-3.0$ & $\underline{3.0}$ & $-6.0$ & $25.0$ & $-3.0$ & $2.0$ & $-2.0$ & $39.6$ \\
& DAPPr  & $-3.0$ & $-4.0$ & $-3.0$ & $13.9$ & $1.0$ & $-4.0$ & $0.0$ & $40.2$ \\
& \cellcolor{OxfordBlueLight}Ours & \cellcolor{OxfordBlueLight}$\mathbf{6.0}$ & \cellcolor{OxfordBlueLight}$\mathbf{6.0}$ & \cellcolor{OxfordBlueLight}$\mathbf{6.0}$ & \cellcolor{OxfordBlueLight}$\mathbf{100.0}$ & \cellcolor{OxfordBlueLight}$\mathbf{6.0}$ & \cellcolor{OxfordBlueLight}$\mathbf{6.0}$ & \cellcolor{OxfordBlueLight}$\mathbf{6.0}$ & \cellcolor{OxfordBlueLight}$\mathbf{100.0}$ \\
\cmidrule(l){2-10}
& MCDO   & $-3.0$ & $\underline{3.0}$ & $-3.0$ & $33.3$ & $-3.0$ & $0.0$ & $2.0$ & $46.2$ \\
& DE     & $2.0$ & $-2.0$ & $2.0$ & $51.9$ & $1.0$ & $-2.0$ & $\underline{4.0}$ & $\underline{57.8}$ \\
\midrule\midrule
\multirow{7}{*}{\rotatebox[origin=c]{90}{ShuffleNetV2-0.5x}}
& EDL    & $1.0$ & $\underline{4.0}$ & $\underline{4.0}$ & $\underline{73.7}$ & $-6.0$ & $\underline{4.0}$ & $-5.0$ & $27.8$ \\
& LbBnn  & $\underline{4.0}$ & $-6.0$ & $-4.0$ & $32.8$ & $1.0$ & $-6.0$ & $-5.0$ & $19.4$ \\
& Decali & $-5.0$ & $1.0$ & $-6.0$ & $19.4$ & $-3.0$ & $2.0$ & $-2.0$ & $39.6$ \\
& DAPPr  & $-2.0$ & $-4.0$ & $1.0$ & $34.1$ & $1.0$ & $-4.0$ & $0.0$ & $40.2$ \\
& \cellcolor{OxfordBlueLight}Ours & \cellcolor{OxfordBlueLight}$\mathbf{6.0}$ & \cellcolor{OxfordBlueLight}$\mathbf{6.0}$ & \cellcolor{OxfordBlueLight}$\mathbf{6.0}$ & \cellcolor{OxfordBlueLight}$\mathbf{100.0}$ & \cellcolor{OxfordBlueLight}$\mathbf{6.0}$ & \cellcolor{OxfordBlueLight}$\mathbf{6.0}$ & \cellcolor{OxfordBlueLight}$\mathbf{6.0}$ & \cellcolor{OxfordBlueLight}$\mathbf{100.0}$ \\
\cmidrule(l){2-10}
& MCDO   & $-5.0$ & $1.0$ & $-2.0$ & $30.6$ & $-3.0$ & $0.0$ & $2.0$ & $46.2$ \\
& DE     & $1.0$ & $-2.0$ & $1.0$ & $48.7$ & $\underline{4.0}$ & $-2.0$ & $\underline{4.0}$ & $\underline{66.2}$ \\
\midrule\midrule
\multirow{7}{*}{\rotatebox[origin=c]{90}{MobileNetV2-x0-5}}
& EDL    & $-6.0$ & $\underline{4.0}$ & $2.0$ & $50.0$ & $-6.0$ & $\underline{2.0}$ & $-4.0$ & $27.8$ \\
& LbBnn  & $\underline{4.0}$ & $-6.0$ & $-3.0$ & $36.1$ & $2.0$ & $-6.0$ & $-6.0$ & $22.2$ \\
& Decali & $-1.0$ & $1.0$ & $-6.0$ & $33.3$ & $-2.0$ & $\underline{2.0}$ & $-2.0$ & $44.4$ \\
& DAPPr  & $-1.0$ & $-4.0$ & $-1.0$ & $33.3$ & $0.0$ & $-4.0$ & $0.0$ & $38.9$ \\
& \cellcolor{OxfordBlueLight}Ours & \cellcolor{OxfordBlueLight}$\mathbf{6.0}$ & \cellcolor{OxfordBlueLight}$\mathbf{6.0}$ & \cellcolor{OxfordBlueLight}$\mathbf{6.0}$ & \cellcolor{OxfordBlueLight}$\mathbf{100.0}$ & \cellcolor{OxfordBlueLight}$\mathbf{6.0}$ & \cellcolor{OxfordBlueLight}$\mathbf{6.0}$ & \cellcolor{OxfordBlueLight}$\mathbf{6.0}$ & \cellcolor{OxfordBlueLight}$\mathbf{100.0}$ \\
\cmidrule(l){2-10}
& MCDO   & $-4.0$ & $1.0$ & $-2.0$ & $36.1$ & $-4.0$ & $\underline{2.0}$ & $2.0$ & $50.0$ \\
& DE     & $2.0$ & $-2.0$ & $\underline{4.0}$ & $\underline{61.1}$ & $\underline{4.0}$ & $-2.0$ & $\underline{4.0}$ & $\underline{66.7}$ \\
\bottomrule
\end{tabular}
\end{table}
\begin{table}[!htbp]
\caption{Performance comparison (in $\%$, $\uparrow$) under label-smoothing annotation imprecision.}
\label{Table: ST-smoothing}
\centering
\small
\begin{tabular}{cl ccc|c||ccc|c}
\toprule
\multirow{2}{*}{} & \multirow{2}{*}{Method} & \multicolumn{4}{c||}{CIFAR-10} & \multicolumn{4}{c}{CIFAR-100} \\
\cmidrule(lr){3-6} \cmidrule(lr){7-10}
& & ACC & 1-ECE & AUARC & BQS-ST & ACC & 1-ECE & AUARC & BQS-ST \\
\midrule\midrule
\multirow{7}{*}{\rotatebox[origin=c]{90}{$\epsilon=0.05$}}
& EDL    & $-6.0$ & $-4.0$ & $-1.0$ & $19.4$ & $-6.0$ & $-2.0$ & $-6.0$ & $11.1$ \\
& LbBnn  & $\mathbf{5.0}$ & $-6.0$ & $-2.0$ & $44.4$ & $\underline{3.0}$ & $-6.0$ & $-4.0$ & $30.6$ \\
& Decali & $-1.0$ & $\underline{3.0}$ & $-6.0$ & $40.2$ & $-1.0$ & $\underline{3.0}$ & $-2.0$ & $50.0$ \\
& DAPPr  & $-2.0$ & $-2.0$ & $\underline{2.0}$ & $45.5$ & $-4.0$ & $-4.0$ & $1.0$ & $30.6$ \\
& \cellcolor{OxfordBlueLight}Ours & \cellcolor{OxfordBlueLight}$\mathbf{5.0}$ & \cellcolor{OxfordBlueLight}$\mathbf{6.0}$ & \cellcolor{OxfordBlueLight}$1.0$ & \cellcolor{OxfordBlueLight}$\mathbf{86.1}$ & \cellcolor{OxfordBlueLight}$\underline{3.0}$ & \cellcolor{OxfordBlueLight}$\mathbf{6.0}$ & \cellcolor{OxfordBlueLight}$\underline{4.0}$ & \cellcolor{OxfordBlueLight}$\mathbf{86.1}$ \\
\cmidrule(l){2-10}
& MCDO   & $-3.0$ & $\underline{3.0}$ & $0.0$ & $50.8$ & $-1.0$ & $\underline{3.0}$ & $1.0$ & $58.3$ \\
& DE     & $\underline{2.0}$ & $0.0$ & $\mathbf{6.0}$ & $\underline{74.2}$ & $\mathbf{6.0}$ & $0.0$ & $\mathbf{6.0}$ & $\underline{83.3}$ \\
\midrule\midrule
\multirow{7}{*}{\rotatebox[origin=c]{90}{$\epsilon=0.1$}}
& EDL    & $-6.0$ & $0.0$ & $\underline{3.0}$ & $41.7$ & $-6.0$ & $-2.0$ & $-6.0$ & $11.1$ \\
& LbBnn  & $\mathbf{5.0}$ & $-6.0$ & $-3.0$ & $41.7$ & $2.0$ & $-6.0$ & $-4.0$ & $27.8$ \\
& Decali & $-1.0$ & $\underline{4.0}$ & $-6.0$ & $42.9$ & $-2.0$ & $\underline{4.0}$ & $-2.0$ & $50.0$ \\
& DAPPr  & $-1.0$ & $-4.0$ & $\underline{3.0}$ & $45.7$ & $-4.0$ & $-4.0$ & $0.0$ & $27.8$ \\
& \cellcolor{OxfordBlueLight}Ours & \cellcolor{OxfordBlueLight}$\mathbf{5.0}$ & \cellcolor{OxfordBlueLight}$\mathbf{6.0}$ & \cellcolor{OxfordBlueLight}$-3.0$ & \cellcolor{OxfordBlueLight}$\mathbf{75.0}$ & \cellcolor{OxfordBlueLight}$\underline{4.0}$ & \cellcolor{OxfordBlueLight}$\mathbf{6.0}$ & \cellcolor{OxfordBlueLight}$\underline{4.0}$ & \cellcolor{OxfordBlueLight}$\mathbf{88.9}$ \\
\cmidrule(l){2-10}
& MCDO   & $-4.0$ & $2.0$ & $0.0$ & $44.9$ & $0.0$ & $2.0$ & $2.0$ & $61.1$ \\
& DE     & $\underline{2.0}$ & $-2.0$ & $\mathbf{6.0}$ & $\underline{68.7}$ & $\mathbf{6.0}$ & $0.0$ & $\mathbf{6.0}$ & $\underline{83.3}$ \\
\midrule\midrule
\multirow{7}{*}{\rotatebox[origin=c]{90}{$\epsilon=0.15$}}
& EDL    & $-6.0$ & $\underline{4.0}$ & $\underline{3.0}$ & $52.8$ & $-6.0$ & $-2.0$ & $-6.0$ & $11.1$ \\
& LbBnn  & $\underline{4.0}$ & $-6.0$ & $-3.0$ & $36.1$ & $2.0$ & $-6.0$ & $-4.0$ & $27.8$ \\
& Decali & $-2.0$ & $2.0$ & $-6.0$ & $33.3$ & $-1.0$ & $\underline{4.0}$ & $-2.0$ & $52.8$ \\
& DAPPr  & $-2.0$ & $-4.0$ & $\underline{3.0}$ & $41.7$ & $-4.0$ & $-4.0$ & $1.0$ & $30.6$ \\
& \cellcolor{OxfordBlueLight}Ours & \cellcolor{OxfordBlueLight}$\mathbf{6.0}$ & \cellcolor{OxfordBlueLight}$\mathbf{6.0}$ & \cellcolor{OxfordBlueLight}$-3.0$ & \cellcolor{OxfordBlueLight}$\mathbf{75.0}$ & \cellcolor{OxfordBlueLight}$\underline{4.0}$ & \cellcolor{OxfordBlueLight}$\mathbf{6.0}$ & \cellcolor{OxfordBlueLight}$\underline{4.0}$ & \cellcolor{OxfordBlueLight}$\mathbf{88.9}$ \\
\cmidrule(l){2-10}
& MCDO   & $-2.0$ & $0.0$ & $0.0$ & $44.4$ & $-1.0$ & $2.0$ & $1.0$ & $55.6$ \\
& DE     & $2.0$ & $-2.0$ & $\mathbf{6.0}$ & $\underline{66.7}$ & $\mathbf{6.0}$ & $0.0$ & $\mathbf{6.0}$ & $\underline{83.3}$ \\
\midrule\midrule
\multirow{7}{*}{\rotatebox[origin=c]{90}{$\epsilon=0.2$}}
& EDL    & $-6.0$ & $\underline{4.0}$ & $\underline{4.0}$ & $55.6$ & $-6.0$ & $0.0$ & $-6.0$ & $16.7$ \\
& LbBnn  & $\mathbf{5.0}$ & $-6.0$ & $-3.0$ & $41.7$ & $2.0$ & $-6.0$ & $-4.0$ & $27.8$ \\
& Decali & $-2.0$ & $2.0$ & $-6.0$ & $34.3$ & $-1.0$ & $\underline{4.0}$ & $-2.0$ & $52.8$ \\
& DAPPr  & $-2.0$ & $-4.0$ & $1.0$ & $37.1$ & $-4.0$ & $-4.0$ & $1.0$ & $30.6$ \\
& \cellcolor{OxfordBlueLight}Ours & \cellcolor{OxfordBlueLight}$\mathbf{5.0}$ & \cellcolor{OxfordBlueLight}$\mathbf{6.0}$ & \cellcolor{OxfordBlueLight}$-3.0$ & \cellcolor{OxfordBlueLight}$\mathbf{75.0}$ & \cellcolor{OxfordBlueLight}$\underline{4.0}$ & \cellcolor{OxfordBlueLight}$\mathbf{6.0}$ & \cellcolor{OxfordBlueLight}$\underline{4.0}$ & \cellcolor{OxfordBlueLight}$\mathbf{88.9}$ \\
\cmidrule(l){2-10}
& MCDO   & $-2.0$ & $0.0$ & $1.0$ & $48.2$ & $-1.0$ & $2.0$ & $1.0$ & $55.6$ \\
& DE     & $\underline{2.0}$ & $-2.0$ & $\mathbf{6.0}$ & $\underline{68.7}$ & $\mathbf{6.0}$ & $-2.0$ & $\mathbf{6.0}$ & $\underline{77.8}$ \\
\bottomrule
\end{tabular}
\end{table}

\begin{figure}[!htbp]
\centering
\includegraphics[width=0.95\linewidth]{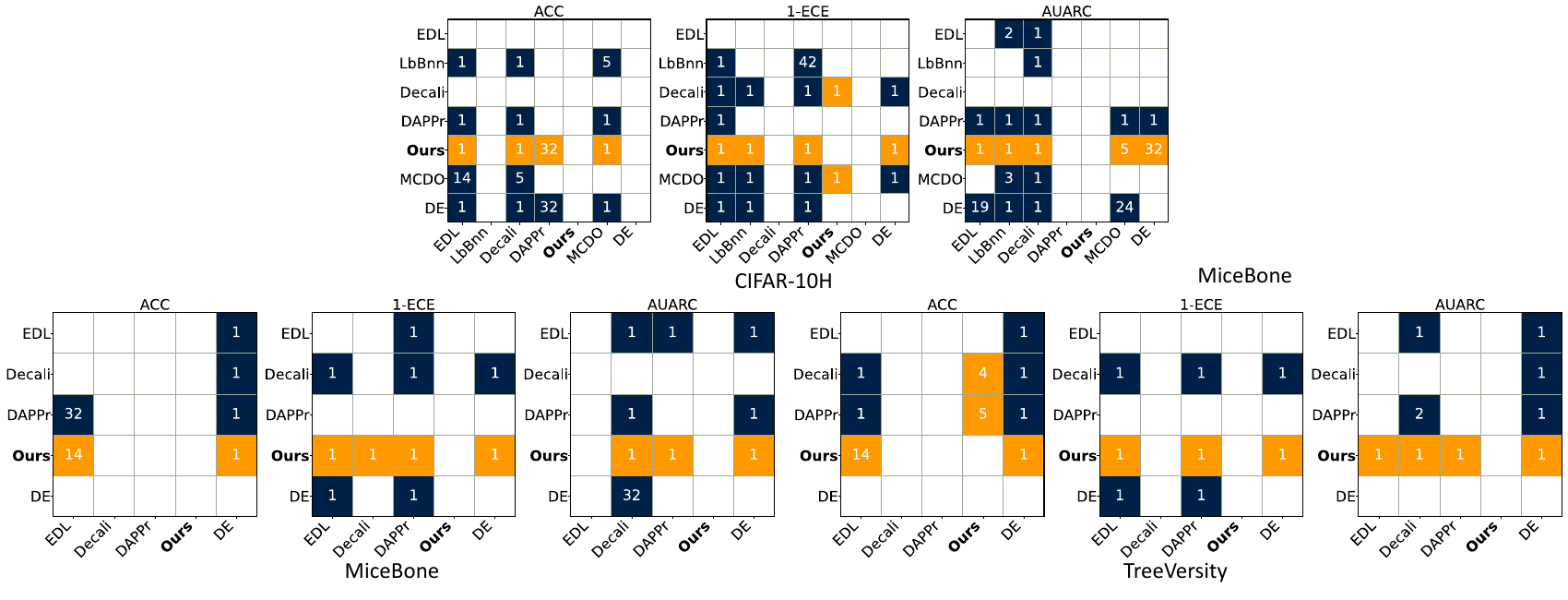}
\caption{Pairwise one-sided Wilcoxon signed-rank test results on various datasets under annotator disagreement. Cell $(i,j)$ indicates whether row method $i$ significantly outperforms column method
$j$; coloured cells show the scaled $p$-value ($\times 10^3$). Orange (\textcolor[HTML]{FF9900}{$\blacksquare$}) mark comparisons involving our method; dark navy (\textcolor[HTML]{002147}{$\blacksquare$}) mark all other significant pairs. }
\label{Figure: ST-Disagreement}
\end{figure}
\begin{figure}[!htbp]
\centering
\includegraphics[width=0.95\linewidth]{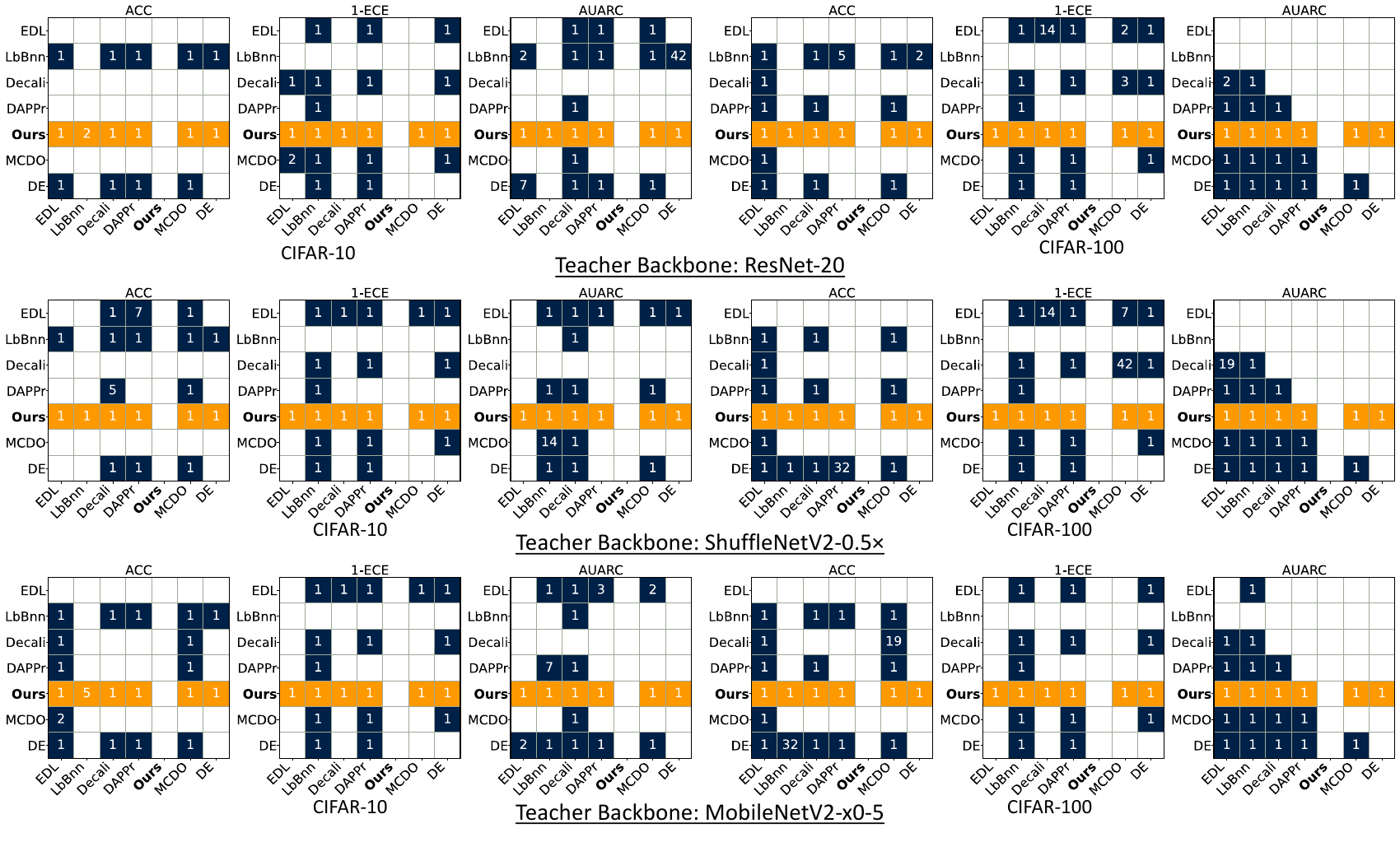}
\caption{Pairwise one-sided Wilcoxon signed-rank test results on various datasets under teacher-prediction imprecision. Cell $(i,j)$ indicates whether row method $i$ significantly outperforms column method $j$; coloured cells show the scaled $p$-value ($\times 10^3$). Orange (\textcolor[HTML]{FF9900}{$\blacksquare$}) mark comparisons involving our method; dark navy (\textcolor[HTML]{002147}{$\blacksquare$}) mark all other significant pairs. }
\label{Figure: ST-teacher}
\end{figure}

\begin{figure}[!htbp]
\centering
\includegraphics[width=0.95\linewidth]{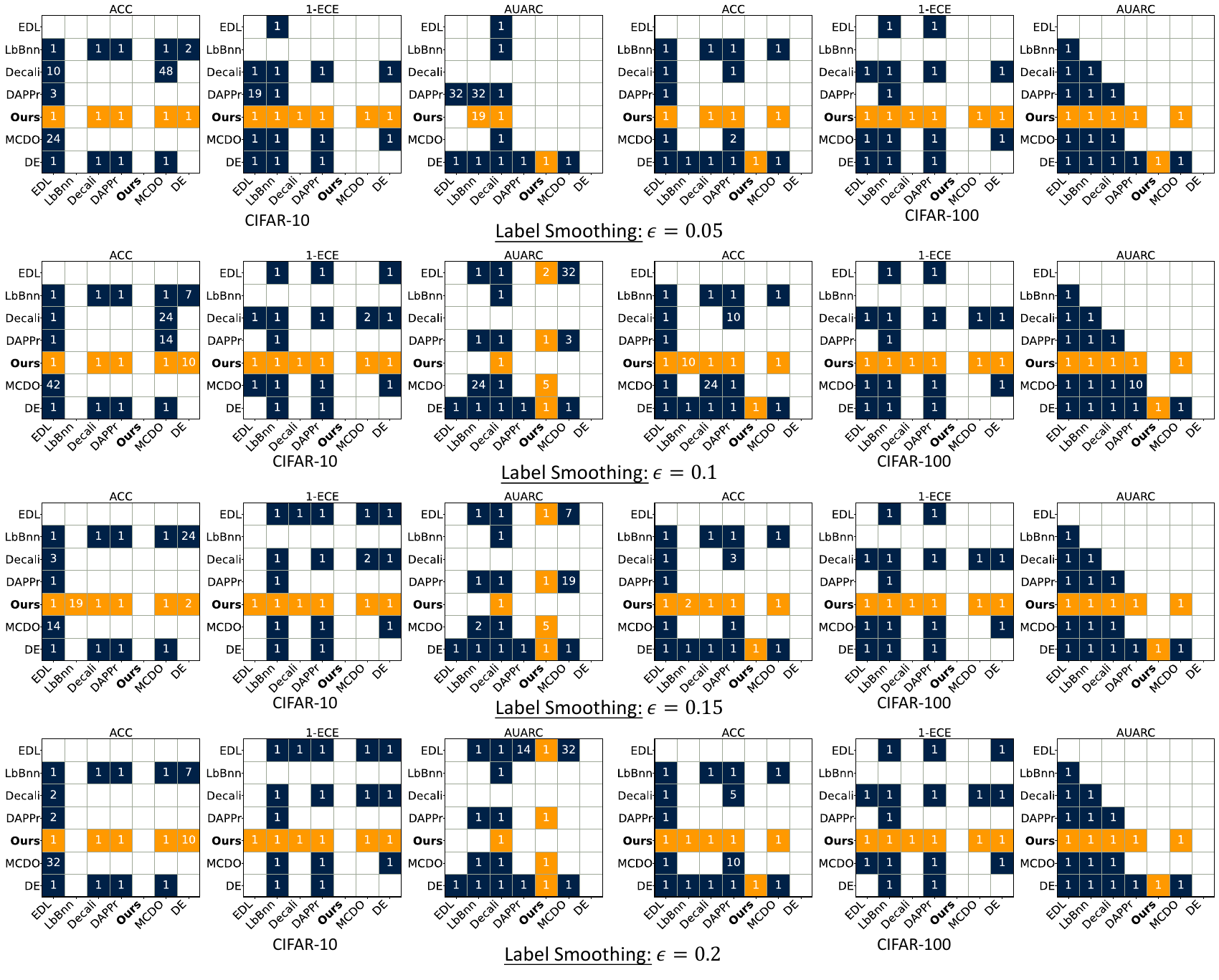}
\caption{Pairwise one-sided Wilcoxon signed-rank test results on various datasets under label-smoothing imprecision. Cell $(i,j)$ indicates whether row method $i$ significantly outperforms column method $j$; coloured cells show the scaled $p$-value ($\times 10^3$). Orange (\textcolor[HTML]{FF9900}{$\blacksquare$}) mark comparisons involving our method; dark navy (\textcolor[HTML]{002147}{$\blacksquare$}) mark all other significant pairs. }
\label{Figure: ST-smoothing}
\end{figure}
\newpage
\section{Comparison with Baselines Trained on Collapsed Labels}
\label{App:FurtherComparison}
\textbf{Setup.}
In this section, we further compare POCC against baseline methods trained on collapsed hard labels derived from imprecise annotations, covering the annotator-disagreement setting. Specifically, given a soft label $q_L$, each baseline is trained using the one-hot label $e_{j}$, where $j = \arg\max_{k \in [K]} q_L$, i.e., majority voting. All other training configurations follow those of the main evaluation in Section~\ref{sec: Exp}, and all results are averaged over 10 independent runs with seeds $1$--$10$.

We exclude the collapsed-label baseline from the knowledge distillation setting, as discarding the teacher's soft predictions contradicts standard practice in knowledge distillation~\citep{hinton2015distilling, phuong2019towards, zhou2021rethinking}. We exclude the experiment from the label smoothing setting as well, since collapsing a smoothed label simply recovers the original one-hot annotation, containing no annotation imprecision by construction and therefore falling outside the scope of our problem formulation.

\textbf{Results.}
Table~\ref{Table: Collaps-Disagree} reports ACC, ECE, AUARC, BQS, and the normalised balanced quality score under statistical significance testing (BQS-ST); see Appendix~\ref{App:StatisticalTest} for setup details. Across both settings, POCC achieves the highest BQS and BQS-ST values, demonstrating that it consistently yields the best trade-off among prediction accuracy, calibration, and EU quantification compared to baselines trained on collapsed hard labels.
\begin{table}[!htbp]
\caption{Performance comparison (in $\%$)  under annotator disagreement, where baseline methods are trained on collapsed hard labels derived from imprecise annotations.}
\label{Table: Collaps-Disagree}
\centering
\small
\begin{tabular}{lccc|cc}
\toprule
Method & ACC $\uparrow$ & ECE $\downarrow$ & AUARC $\uparrow$ & BQS $\uparrow$ & BQS-ST $\uparrow$ \\ 
\midrule\midrule
\multicolumn{6}{c}{CIFAR-10H}\\
\midrule\midrule
EDL    & $84.8 \scriptstyle{\pm 0.2}$ & $9.5 \scriptstyle{\pm 0.5}$ & $96.6 \scriptstyle{\pm 0.1}$ & $41.4 \scriptstyle{\pm 9.2}$ & $33.3$ \\
LbBnn  & $85.9 \scriptstyle{\pm 0.5}$ & $6.9 \scriptstyle{\pm 0.4}$ & $96.2 \scriptstyle{\pm 0.3}$ & $53.5 \scriptstyle{\pm 8.9}$ & $48.5$ \\
Decali & $82.7 \scriptstyle{\pm 0.9}$ & $9.5 \scriptstyle{\pm 0.5}$ & $95.5 \scriptstyle{\pm 0.4}$ & $3.4 \scriptstyle{\pm 5.1}$ & $0.0$ \\
DAPPr  & $85.5 \scriptstyle{\pm 0.4}$ & $6.2 \scriptstyle{\pm 0.4}$ & $96.6 \scriptstyle{\pm 0.2}$ & $59.5 \scriptstyle{\pm 7.5}$ & $62.9$ \\
\rowcolor{OxfordBlueLight}
Ours   & $\mathbf{87.0 \scriptstyle{\pm 0.4}}$ & $\underline{2.9 \scriptstyle{\pm 0.3}}$ & $\mathbf{97.1 \scriptstyle{\pm 0.2}}$ & $\mathbf{95.7 \scriptstyle{\pm 1.1}}$ & $\mathbf{93.9}$ \\
\midrule
MCDO   & $83.8 \scriptstyle{\pm 0.8}$ & $7.6 \scriptstyle{\pm 0.5}$ & $96.0 \scriptstyle{\pm 0.3}$ & $28.6 \scriptstyle{\pm 17.8}$ & $23.0$ \\
DE     & $\underline{85.9 \scriptstyle{\pm 0.4}}$ & $\mathbf{1.9 \scriptstyle{\pm 0.2}}$ & $\underline{96.7 \scriptstyle{\pm 0.2}}$ & $\underline{84.1 \scriptstyle{\pm 10.9}}$ & $\underline{77.8}$ \\
\midrule\midrule
\multicolumn{6}{c}{MiceBone}\\
\midrule\midrule
EDL    & $88.5 \scriptstyle{\pm 0.9}$ & $9.9 \scriptstyle{\pm 1.0}$ & $\mathbf{97.4 \scriptstyle{\pm 0.3}}$ & $58.3 \scriptstyle{\pm 5.4}$ & $50.0$ \\
Decali & $88.5 \scriptstyle{\pm 0.8}$ & $4.9 \scriptstyle{\pm 0.7}$ & $96.8 \scriptstyle{\pm 0.4}$ & $\underline{75.0 \scriptstyle{\pm 7.1}}$ & $\underline{52.4}$ \\
DAPPr  & $\underline{88.5 \scriptstyle{\pm 1.2}}$ & $8.8 \scriptstyle{\pm 0.6}$ & $95.1 \scriptstyle{\pm 0.9}$ & $36.0 \scriptstyle{\pm 10.4}$ & $26.2$ \\
\rowcolor{OxfordBlueLight}
Ours   & $\mathbf{89.1 \scriptstyle{\pm 0.9}}$ & $\mathbf{2.7 \scriptstyle{\pm 0.4}}$ & $\underline{97.3 \scriptstyle{\pm 0.3}}$ & $\mathbf{96.2 \scriptstyle{\pm 5.8}}$ & $\mathbf{100.0}$ \\
\midrule
DE     & $85.6 \scriptstyle{\pm 1.0}$ & $\underline{2.9 \scriptstyle{\pm 0.7}}$ & $95.3 \scriptstyle{\pm 0.4}$ & $37.3 \scriptstyle{\pm 7.3}$ & $33.3$ \\
\midrule\midrule
\multicolumn{6}{c}{TreeVersity}\\
\midrule\midrule
EDL    & $85.4 \scriptstyle{\pm 0.6}$ & $17.1 \scriptstyle{\pm 0.7}$ & $\underline{95.7 \scriptstyle{\pm 0.3}}$ & $59.5 \scriptstyle{\pm 1.9}$ & $39.3$ \\
Decali & $\underline{85.9 \scriptstyle{\pm 0.3}}$ & $8.5 \scriptstyle{\pm 0.4}$ & $95.3 \scriptstyle{\pm 0.3}$ & $\underline{78.4 \scriptstyle{\pm 2.0}}$ & $\underline{61.9}$ \\
DAPPr  & $85.7 \scriptstyle{\pm 0.8}$ & $13.2 \scriptstyle{\pm 0.5}$ & $94.4 \scriptstyle{\pm 0.5}$ & $63.6 \scriptstyle{\pm 1.8}$ & $35.7$ \\
\rowcolor{OxfordBlueLight}
Ours   & $\mathbf{86.2 \scriptstyle{\pm 0.6}}$ & $\mathbf{2.5 \scriptstyle{\pm 0.5}}$ & $\mathbf{96.7 \scriptstyle{\pm 0.2}}$ & $\mathbf{99.4 \scriptstyle{\pm 0.9}}$ & $\mathbf{100.0}$ \\
\midrule
DE     & $75.1 \scriptstyle{\pm 1.2}$ & $\underline{3.2 \scriptstyle{\pm 0.6}}$ & $88.8 \scriptstyle{\pm 0.7}$ & $31.7 \scriptstyle{\pm 1.6}$ & $25.0$ \\
\bottomrule
\end{tabular}
\end{table} 

\section{Evaluation with Normalised AUARC}
\label{App:normalisedAUARC}
To assess the quality of EU quantification relative to each model's base classification accuracy in selective classification, we report normalised AUARC values, where accuracy gains are measured relative to the test accuracy without rejection~\citep{wang2026learning}. Specifically, let \(A(\alpha)\) denote the accuracy at rejection rate \(\alpha\); we define 
the relative accuracy improvement as
\(\frac{A(\alpha) - A(0)}{1 - A(0)},\)
where \(A(0)\) is the initial model accuracy without rejection. Under this setting, we compute the balanced quality score for statistical testing (BQS-ST) (see Appendix~\ref{App:StatisticalTest} for setup details) from ACC, ECE, and normalised AUARC, and report the results in Figures~\ref{Figure: DisagreementAblationNor}, \ref{Figure: SmoothingAblationNor}, and~\ref{Figure: TeacherAblationNor} across the three annotation imprecision scenarios. The results show that our method consistently achieves the best trade-off among accuracy, calibration, and EU quantification.
\begin{figure}[!htbp]
\centering
\includegraphics[width=0.75\linewidth]{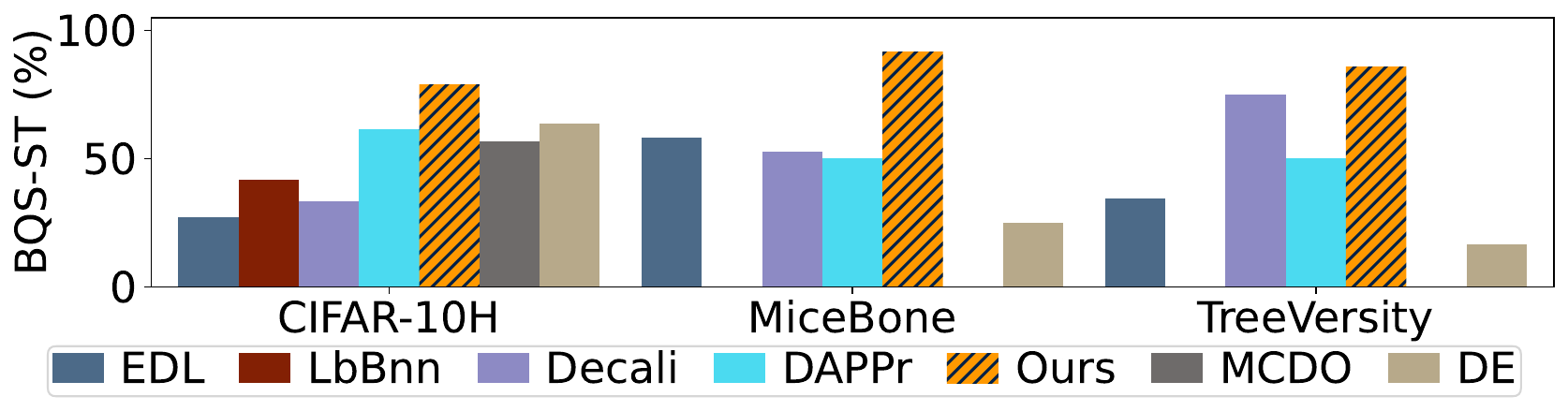}
\caption{BQS-ST comparison under annotator-disagreement annotation imprecision across various datasets, considering normalised AUARC.}
\label{Figure: DisagreementAblationNor}
\end{figure}

\begin{figure}[!htpb]
\centering
\includegraphics[width=\linewidth]{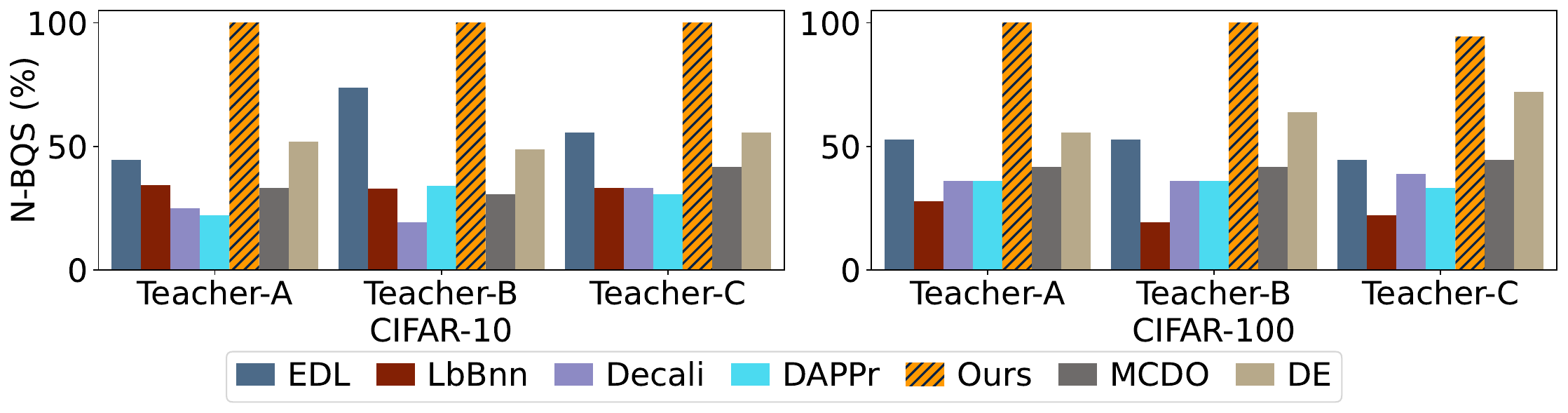}
\caption{BQS-ST comparison under teacher-prediction annotation imprecision across various settings, considering normalised AUARC. Teacher-A: ResNet-20; Teacher-B: ShuffleNetV2-0.5x; Teacher-C: MobileNetV2-x0-5.}
\label{Figure: TeacherAblationNor}
\end{figure}

\begin{figure}[!htbp]
\centering
\includegraphics[width=\linewidth]{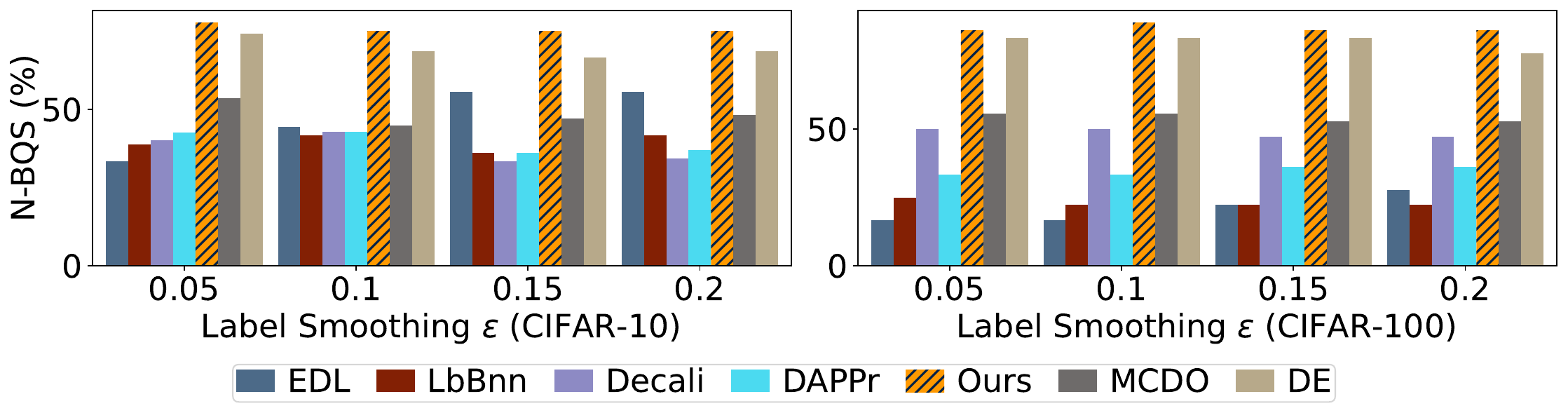}
\caption{BQS-ST comparison under label-smoothing annotation imprecision with varying $\epsilon$, considering normalised AUARC.}
\label{Figure: SmoothingAblationNor}
\end{figure}
\newpage
\section{Evaluation with EU--Prediction Error Correlation}
\label{App:SP}
\textbf{Setup.}
In this section, we apply an additional evaluation perspective---the alignment between epistemic uncertainty (EU) and prediction error---in place of AUARC in the selective classification task. Specifically, we measure this alignment via the Spearman rank correlation. Let \(\mathcal{D}_{\text{test}} = \{(x_i, y_i)\}_{i=1}^{N}\) denote the test set. For each sample \(i\), let \(u_i \in \mathbb{R}_{\geq 0}\) denote the predicted epistemic uncertainty and \(\ell_i = -\log \hat{p}(y_i \mid x_i)\) the cross-entropy loss. The Spearman rank correlation, denoted \(\text{Sp(EU,CE)}\), is computed as
\begin{equation}
    \text{Sp(EU,CE)}
    = \rho\!\left(r(\mathbf{u}),\, r(\boldsymbol{\ell})\right)
    = \frac{\mathrm{Cov}\!\left(r(\mathbf{u}),\, r(\boldsymbol{\ell})\right)}
           {\sigma_{r(\mathbf{u})}\,\sigma_{r(\boldsymbol{\ell})}},
\end{equation}
where \(r(\cdot)\) denotes the rank operator, \(\mathbf{u} = [u_1, \ldots, u_N]^{\top}\), and \(\boldsymbol{\ell} = [\ell_1, \ldots, \ell_N]^{\top}\). A higher value indicates stronger rank association with predictive loss: samples assigned higher epistemic uncertainty tend to incur greater prediction error. We adopt rank correlation rather than Pearson correlation because epistemic uncertainty quantities across different methods operate on incomparable scales; ranking removes this dependence while preserving ordinal structure.

A high value of \(\text{Sp(EU,CE)}\) does not, however, necessarily indicate a well-performing uncertainty estimator: when a model achieves very low accuracy, the cross-entropy loss \(\ell_i\) varies substantially across samples, and a spurious rank alignment with \(u_i\) can inflate \(\text{Sp(EU,CE)}\) even if the uncertainty estimates carry little practical value. This shortcoming highlights the necessity of a balanced evaluation across predictive accuracy, calibration, and \(\text{Sp(EU,CE)}\). Accordingly, we compute the balanced quality score (BQS) and the score for statistical testing (BQS-ST) from ACC, ECE, and \(\text{Sp(EU,CE)}\) (see Appendix~\ref{App:StatisticalTest} for setup details).

\textbf{Results.}
Tables~\ref{Table: Correlation-Disagree}, \ref{Table: Correlation-Teacher}, and~\ref{Table: Correlation-Smooth} across the three annotation imprecision scenarios. The results show that our method consistently achieves the best trade-off among accuracy, calibration, and EU quantification.  In contrast, none of the baselines exhibits a consistent pattern that achieves stable secondary or competitive balance scores across settings. This stems from its explicit treatment of annotation imprecision: baselines trained under standard supervised classification collapse the label into a single distribution, so their EU quantification mainly reflects model-internal uncertainty (e.g., initialisation or approximate inference) rather than supervision ambiguity. In contrast, POCC trains on credal labels and jointly optimises pessimistic and optimistic heads, so the resulting predictive credal set directly encodes annotation ambiguity. This enables simultaneous improvements in accuracy, calibration, and EU quality rather than (significantly) trading one metric against another. 
\begin{table}[!htbp]
\caption{Annotator-disagreement results (\%). Best in bold; second best underlined.}
\label{Table: Correlation-Disagree}
\centering
\small
\setlength\tabcolsep{10pt}
\begin{tabular}{llccc|cc}
\toprule
& Method & ACC $\uparrow$ & ECE $\downarrow$ & \(\text{Sp(EU,CE)}\) $\uparrow$ & BQS $\uparrow$ & BQS-ST $\uparrow$ \\ 
\midrule\midrule
\multirow{7}{*}{\rotatebox[origin=c]{90}{CIFAR-10H}}
& EDL    & $85.0 \scriptstyle{\pm 0.4}$ & $9.5 \scriptstyle{\pm 0.3}$ & $\mathbf{95.3 \scriptstyle{\pm 0.2}}$ & $40.8 \scriptstyle{\pm 6.4}$ & $33.3$ \\
& LbBnn  & $87.0 \scriptstyle{\pm 0.8}$ & $6.6 \scriptstyle{\pm 0.3}$ & $88.8 \scriptstyle{\pm 0.7}$ & $51.7 \scriptstyle{\pm 4.4}$ & $47.8$ \\
& Decali & $84.3 \scriptstyle{\pm 1.0}$ & $\underline{1.0 \scriptstyle{\pm 0.2}}$ & $85.6 \scriptstyle{\pm 0.9}$ & $35.4 \scriptstyle{\pm 4.5}$ & $33.3$ \\
& DAPPr  & $86.6 \scriptstyle{\pm 0.3}$ & $6.9 \scriptstyle{\pm 0.2}$ & $\mathbf{95.3 \scriptstyle{\pm 0.2}}$ & $68.5 \scriptstyle{\pm 5.6}$ & $61.6$ \\
& \cellcolor{OxfordBlueLight}Ours   & \cellcolor{OxfordBlueLight}$\mathbf{87.0 \scriptstyle{\pm 0.4}}$ & \cellcolor{OxfordBlueLight}$2.9 \scriptstyle{\pm 0.3}$ & \cellcolor{OxfordBlueLight}$94.4 \scriptstyle{\pm 0.5}$ & \cellcolor{OxfordBlueLight}$\mathbf{84.9 \scriptstyle{\pm 5.7}}$ & \cellcolor{OxfordBlueLight}$\mathbf{81.8}$ \\
\cmidrule{2-7}
& MCDO   & $85.6 \scriptstyle{\pm 0.7}$ & $\mathbf{0.9 \scriptstyle{\pm 0.1}}$ & $90.4 \scriptstyle{\pm 0.4}$ & $63.8 \scriptstyle{\pm 8.7}$ & $56.6$ \\
& DE     & $\underline{87.0 \scriptstyle{\pm 0.3}}$ & $4.3 \scriptstyle{\pm 0.2}$ & $92.0 \scriptstyle{\pm 0.2}$ & $\underline{71.0 \scriptstyle{\pm 4.5}}$ & $\underline{69.7}$ \\
\midrule\midrule
\multirow{5}{*}{\rotatebox[origin=c]{90}{MiceBone}}
& EDL    & $88.5 \scriptstyle{\pm 0.8}$ & $9.9 \scriptstyle{\pm 0.8}$ & $\mathbf{96.1 \scriptstyle{\pm 0.5}}$ & $\underline{69.4 \scriptstyle{\pm 3.1}}$ & $\underline{58.3}$ \\
& Decali & $88.7 \scriptstyle{\pm 1.2}$ & $\underline{5.8 \scriptstyle{\pm 1.0}}$ & $72.4 \scriptstyle{\pm 0.9}$ & $50.2 \scriptstyle{\pm 5.1}$ & $52.8$ \\
& DAPPr  & $\underline{89.0 \scriptstyle{\pm 1.1}}$ & $13.2 \scriptstyle{\pm 0.9}$ & $\underline{87.1 \scriptstyle{\pm 1.3}}$ & $49.9 \scriptstyle{\pm 4.2}$ & $58.3$ \\
& \cellcolor{OxfordBlueLight}Ours   & \cellcolor{OxfordBlueLight}$\mathbf{89.1 \scriptstyle{\pm 0.9}}$ & \cellcolor{OxfordBlueLight}$\mathbf{2.7 \scriptstyle{\pm 0.4}}$ & \cellcolor{OxfordBlueLight}$82.2 \scriptstyle{\pm 1.8}$ & \cellcolor{OxfordBlueLight}$\mathbf{77.7 \scriptstyle{\pm 4.2}}$ & \cellcolor{OxfordBlueLight}$\mathbf{83.3}$ \\
\cmidrule{2-7}
& DE     & $85.5 \scriptstyle{\pm 0.9}$ & $7.5 \scriptstyle{\pm 0.8}$ & $75.9 \scriptstyle{\pm 2.1}$ & $22.6 \scriptstyle{\pm 4.4}$ & $25.0$ \\
\midrule\midrule
\multirow{5}{*}{\rotatebox[origin=c]{90}{TreeVersity}}
& EDL    & $85.5 \scriptstyle{\pm 0.6}$ & $17.1 \scriptstyle{\pm 0.7}$ & $\mathbf{92.8 \scriptstyle{\pm 0.6}}$ & $62.6 \scriptstyle{\pm 1.6}$ & $42.9$ \\
& Decali & $\mathbf{86.9 \scriptstyle{\pm 0.4}}$ & $\mathbf{2.3 \scriptstyle{\pm 0.3}}$ & $81.7 \scriptstyle{\pm 1.7}$ & $\underline{83.4 \scriptstyle{\pm 2.4}}$ & $\underline{75.0}$ \\
& DAPPr  & $\underline{86.6 \scriptstyle{\pm 0.5}}$ & $17.0 \scriptstyle{\pm 0.4}$ & $85.8 \scriptstyle{\pm 1.1}$ & $56.3 \scriptstyle{\pm 1.5}$ & $50.0$ \\
& \cellcolor{OxfordBlueLight}Ours   & \cellcolor{OxfordBlueLight}$86.2 \scriptstyle{\pm 0.6}$ & \cellcolor{OxfordBlueLight}$\underline{2.5 \scriptstyle{\pm 0.5}}$ & \cellcolor{OxfordBlueLight}$\underline{90.8 \scriptstyle{\pm 0.5}}$ & \cellcolor{OxfordBlueLight}$\mathbf{94.2 \scriptstyle{\pm 1.8}}$ & \cellcolor{OxfordBlueLight}$\mathbf{77.4}$ \\
\cmidrule{2-7}
& DE     & $75.7 \scriptstyle{\pm 1.1}$ & $6.2 \scriptstyle{\pm 0.9}$ & $69.6 \scriptstyle{\pm 1.7}$ & $24.4 \scriptstyle{\pm 1.7}$ & $16.7$ \\
\bottomrule
\end{tabular}
\end{table}

\begin{table}[!htbp]
\caption{Teacher-prediction results (\%). Best in bold; second best underlined.}
\label{Table: Correlation-Teacher}
\centering
\small
\setlength\tabcolsep{0.5pt}
\begin{tabular}{lccc|cc||ccc|cc}
\toprule
\multirow{2}{*}{Method} & \multicolumn{5}{c||}{CIFAR-10} & \multicolumn{5}{c}{CIFAR-100} \\
\cmidrule(lr){2-6} \cmidrule(lr){7-11}
& ACC & ECE & \(\text{Sp(EU,CE)}\) & BQS & BQS-ST & ACC  & ECE & \(\text{Sp(EU,CE)}\) & BQS & BQS-ST \\
\midrule \midrule
\multicolumn{11}{c}{Teacher: ResNet-20}\\ \midrule\midrule
EDL    & $93.4 \scriptstyle{\pm 0.2}$ & $9.0 \scriptstyle{\pm 0.2}$ & $\mathbf{98.9 \scriptstyle{\pm 0.1}}$ & $\underline{60.7 \scriptstyle{\pm 2.9}}$ & $\underline{50.0}$ & $33.1 \scriptstyle{\pm 5.7}$ & $\underline{23.3 \scriptstyle{\pm 3.9}}$ & $62.7 \scriptstyle{\pm 5.6}$ & $56.6 \scriptstyle{\pm 1.0}$ & $50.0$ \\
LbBnn  & $\underline{95.1 \scriptstyle{\pm 0.1}}$ & $28.1 \scriptstyle{\pm 0.2}$ & $96.0 \scriptstyle{\pm 0.1}$ & $51.3 \scriptstyle{\pm 2.9}$ & $44.1$ & $\underline{73.1 \scriptstyle{\pm 0.2}}$ & $63.1 \scriptstyle{\pm 0.2}$ & $-43.4 \scriptstyle{\pm 1.4}$ & $30.2 \scriptstyle{\pm 0.3}$ & $27.8$ \\
Decali & $93.5 \scriptstyle{\pm 0.1}$ & $8.5 \scriptstyle{\pm 0.1}$ & $88.7 \scriptstyle{\pm 0.4}$ & $29.7 \scriptstyle{\pm 2.5}$ & $25.0$ & $71.0 \scriptstyle{\pm 0.2}$ & $27.8 \scriptstyle{\pm 0.2}$ & $6.1 \scriptstyle{\pm 0.4}$ & $65.7 \scriptstyle{\pm 0.6}$ & $36.1$ \\
DAPPr  & $93.4 \scriptstyle{\pm 0.2}$ & $14.6 \scriptstyle{\pm 0.2}$ & $97.6 \scriptstyle{\pm 0.1}$ & $49.8 \scriptstyle{\pm 2.4}$ & $29.8$ & $72.5 \scriptstyle{\pm 0.4}$ & $52.8 \scriptstyle{\pm 0.3}$ & $42.8 \scriptstyle{\pm 0.7}$ & $61.7 \scriptstyle{\pm 0.5}$ & $36.1$ \\
\rowcolor{OxfordBlueLight}
Ours   & $\mathbf{95.5 \scriptstyle{\pm 0.1}}$ & $\mathbf{3.0 \scriptstyle{\pm 0.1}}$ & $\mathbf{98.9 \scriptstyle{\pm 0.1}}$ & $\mathbf{99.7 \scriptstyle{\pm 0.6}}$ & $\mathbf{100.0}$ & $\mathbf{77.2 \scriptstyle{\pm 0.2}}$ & $\mathbf{10.6 \scriptstyle{\pm 0.4}}$ & $\underline{63.0 \scriptstyle{\pm 1.4}}$ & $\mathbf{98.1 \scriptstyle{\pm 0.4}}$ & $\mathbf{91.7}$ \\
\midrule
MCDO   & $93.5 \scriptstyle{\pm 0.2}$ & $\underline{8.5 \scriptstyle{\pm 0.2}}$ & $90.3 \scriptstyle{\pm 0.2}$ & $34.5 \scriptstyle{\pm 3.3}$ & $31.1$ & $71.1 \scriptstyle{\pm 0.3}$ & $28.2 \scriptstyle{\pm 0.3}$ & $58.3 \scriptstyle{\pm 1.6}$ & $80.9 \scriptstyle{\pm 0.8}$ & $44.4$ \\
DE     & $94.2 \scriptstyle{\pm 0.1}$ & $9.5 \scriptstyle{\pm 0.1}$ & $91.8 \scriptstyle{\pm 0.1}$ & $49.4 \scriptstyle{\pm 2.4}$ & $41.8$ & $72.7 \scriptstyle{\pm 0.3}$ & $29.9 \scriptstyle{\pm 0.3}$ & $\mathbf{69.3 \scriptstyle{\pm 0.6}}$ & $\underline{84.2 \scriptstyle{\pm 0.4}}$ & $\underline{63.9}$ \\
\midrule \midrule
\multicolumn{11}{c}{Teacher: ShuffleNetV2-0.5×}\\ \midrule\midrule 
EDL    & $92.9 \scriptstyle{\pm 0.3}$ & $\underline{9.1 \scriptstyle{\pm 0.1}}$ & $\mathbf{98.7 \scriptstyle{\pm 0.1}}$ & $\underline{73.0 \scriptstyle{\pm 2.5}}$ & $\underline{79.3}$ & $33.0 \scriptstyle{\pm 5.6}$ & $\underline{23.3 \scriptstyle{\pm 3.8}}$ & $\underline{63.1 \scriptstyle{\pm 6.0}}$ & $57.3 \scriptstyle{\pm 0.8}$ & $52.8$ \\
LbBnn  & $\underline{94.1 \scriptstyle{\pm 0.1}}$ & $33.2 \scriptstyle{\pm 0.2}$ & $89.8 \scriptstyle{\pm 0.2}$ & $46.3 \scriptstyle{\pm 1.4}$ & $38.4$ & $72.6 \scriptstyle{\pm 0.2}$ & $65.4 \scriptstyle{\pm 0.2}$ & $-51.1 \scriptstyle{\pm 1.0}$ & $29.3 \scriptstyle{\pm 0.6}$ & $19.4$ \\
Decali & $92.1 \scriptstyle{\pm 0.2}$ & $12.1 \scriptstyle{\pm 0.2}$ & $73.4 \scriptstyle{\pm 0.3}$ & $26.9 \scriptstyle{\pm 1.7}$ & $19.4$ & $71.0 \scriptstyle{\pm 0.2}$ & $27.4 \scriptstyle{\pm 0.2}$ & $-1.1 \scriptstyle{\pm 0.6}$ & $65.4 \scriptstyle{\pm 0.6}$ & $36.1$ \\
DAPPr  & $92.5 \scriptstyle{\pm 0.2}$ & $18.4 \scriptstyle{\pm 0.2}$ & $95.7 \scriptstyle{\pm 0.1}$ & $53.6 \scriptstyle{\pm 3.1}$ & $36.9$ & $72.6 \scriptstyle{\pm 0.4}$ & $52.3 \scriptstyle{\pm 0.4}$ & $48.6 \scriptstyle{\pm 0.8}$ & $64.8 \scriptstyle{\pm 0.6}$ & $36.1$ \\
\rowcolor{OxfordBlueLight}
Ours   & $\mathbf{94.9 \scriptstyle{\pm 0.1}}$ & $\mathbf{5.4 \scriptstyle{\pm 0.2}}$ & $\underline{98.4 \scriptstyle{\pm 0.1}}$ & $\mathbf{99.6 \scriptstyle{\pm 0.2}}$ & $\mathbf{94.4}$ & $\mathbf{77.8 \scriptstyle{\pm 0.3}}$ & $\mathbf{11.7 \scriptstyle{\pm 0.3}}$ & $58.2 \scriptstyle{\pm 1.3}$ & $\mathbf{96.5 \scriptstyle{\pm 0.3}}$ & $\mathbf{88.9}$ \\
\midrule
MCDO   & $92.0 \scriptstyle{\pm 0.2}$ & $12.2 \scriptstyle{\pm 0.2}$ & $88.9 \scriptstyle{\pm 0.2}$ & $46.6 \scriptstyle{\pm 1.2}$ & $25.0$ & $70.8 \scriptstyle{\pm 0.4}$ & $27.7 \scriptstyle{\pm 0.3}$ & $56.9 \scriptstyle{\pm 1.2}$ & $80.9 \scriptstyle{\pm 0.8}$ & $44.4$ \\
DE     & $92.9 \scriptstyle{\pm 0.1}$ & $13.1 \scriptstyle{\pm 0.1}$ & $90.7 \scriptstyle{\pm 0.2}$ & $57.6 \scriptstyle{\pm 1.5}$ & $46.0$ & $\underline{73.0 \scriptstyle{\pm 0.2}}$ & $29.9 \scriptstyle{\pm 0.1}$ & $\mathbf{71.0 \scriptstyle{\pm 0.5}}$ & $\underline{85.1 \scriptstyle{\pm 0.5}}$ & $\underline{72.2}$ \\
\midrule \midrule
\multicolumn{11}{c}{Teacher: MobileNetV2-x0-5}\\ \midrule\midrule
EDL    & $93.2 \scriptstyle{\pm 0.2}$ & $\underline{9.1 \scriptstyle{\pm 0.2}}$ & $\mathbf{98.9 \scriptstyle{\pm 0.1}}$ & $\underline{61.8 \scriptstyle{\pm 0.3}}$ & $\underline{61.1}$ & $37.8 \scriptstyle{\pm 1.5}$ & $26.6 \scriptstyle{\pm 1.1}$ & $\underline{68.6 \scriptstyle{\pm 1.1}}$ & $56.0 \scriptstyle{\pm 0.4}$ & $50.0$ \\
LbBnn  & $\underline{95.3 \scriptstyle{\pm 0.1}}$ & $37.4 \scriptstyle{\pm 0.3}$ & $89.7 \scriptstyle{\pm 0.4}$ & $46.5 \scriptstyle{\pm 1.9}$ & $44.4$ & $76.1 \scriptstyle{\pm 0.2}$ & $68.2 \scriptstyle{\pm 0.3}$ & $-62.3 \scriptstyle{\pm 0.8}$ & $31.0 \scriptstyle{\pm 0.3}$ & $22.2$ \\
Decali & $93.6 \scriptstyle{\pm 0.1}$ & $10.3 \scriptstyle{\pm 0.1}$ & $80.9 \scriptstyle{\pm 0.2}$ & $33.7 \scriptstyle{\pm 2.1}$ & $33.3$ & $74.4 \scriptstyle{\pm 0.3}$ & $\underline{25.8 \scriptstyle{\pm 0.3}}$ & $13.5 \scriptstyle{\pm 1.3}$ & $72.6 \scriptstyle{\pm 0.5}$ & $38.9$ \\
DAPPr  & $93.8 \scriptstyle{\pm 0.2}$ & $16.1 \scriptstyle{\pm 0.2}$ & $97.1 \scriptstyle{\pm 0.1}$ & $60.1 \scriptstyle{\pm 2.4}$ & $41.7$ & $75.3 \scriptstyle{\pm 0.4}$ & $51.9 \scriptstyle{\pm 0.4}$ & $63.0 \scriptstyle{\pm 0.8}$ & $70.2 \scriptstyle{\pm 0.3}$ & $41.7$ \\
\rowcolor{OxfordBlueLight}
Ours   & $\mathbf{95.5 \scriptstyle{\pm 0.1}}$ & $\mathbf{4.2 \scriptstyle{\pm 0.2}}$ & $\underline{98.6 \scriptstyle{\pm 0.1}}$ & $\mathbf{99.3 \scriptstyle{\pm 0.5}}$ & $\mathbf{94.4}$ & $\mathbf{79.0 \scriptstyle{\pm 0.2}}$ & $\mathbf{10.9 \scriptstyle{\pm 0.2}}$ & $59.4 \scriptstyle{\pm 1.0}$ & $\mathbf{96.2 \scriptstyle{\pm 0.3}}$ & $\mathbf{77.8}$ \\
\midrule
MCDO   & $93.4 \scriptstyle{\pm 0.1}$ & $10.2 \scriptstyle{\pm 0.2}$ & $84.9 \scriptstyle{\pm 0.1}$ & $37.4 \scriptstyle{\pm 2.0}$ & $30.6$ & $73.9 \scriptstyle{\pm 0.3}$ & $25.8 \scriptstyle{\pm 0.2}$ & $64.0 \scriptstyle{\pm 1.2}$ & $84.5 \scriptstyle{\pm 0.5}$ & $47.2$ \\
DE     & $94.2 \scriptstyle{\pm 0.1}$ & $11.4 \scriptstyle{\pm 0.1}$ & $86.1 \scriptstyle{\pm 0.2}$ & $50.4 \scriptstyle{\pm 1.6}$ & $44.4$ & $\underline{76.4 \scriptstyle{\pm 0.1}}$ & $28.4 \scriptstyle{\pm 0.1}$ & $\mathbf{75.1 \scriptstyle{\pm 0.4}}$ & $\underline{87.7 \scriptstyle{\pm 0.3}}$ & $\underline{72.2}$ \\
\bottomrule
\end{tabular}
\end{table}

\begin{table}[!htbp]
\caption{Label-smoothing results (\%). Best in bold; second best underlined.}
\label{Table: Correlation-Smooth}
\centering
\small
\setlength\tabcolsep{0.75pt}
\begin{tabular}{lccc|cc||ccc|cc}
\toprule
\multirow{2}{*}{Method} & \multicolumn{5}{c||}{CIFAR-10} & \multicolumn{5}{c}{CIFAR-100} \\
\cmidrule(lr){2-6} \cmidrule(lr){7-11}
& ACC & ECE & \(\text{Sp(EU,CE)}\) & BQS & BQS-ST & ACC  & ECE & \(\text{Sp(EU,CE)}\) & BQS & BQS-ST \\
\midrule \midrule
\multicolumn{11}{c}{$\epsilon=0.05$}\\ \midrule\midrule
EDL    & $93.4 \scriptstyle{\pm 0.3}$ & $9.1 \scriptstyle{\pm 0.2}$ & $\mathbf{99.0 \scriptstyle{\pm 0.1}}$ & $58.9 \scriptstyle{\pm 1.9}$ & $38.9$ & $26.3 \scriptstyle{\pm 1.2}$ & $18.7 \scriptstyle{\pm 1.0}$ & $58.6 \scriptstyle{\pm 1.8}$ & $47.3 \scriptstyle{\pm 0.6}$ & $16.7$ \\
LbBnn  & $\underline{95.4 \scriptstyle{\pm 0.2}}$ & $29.9 \scriptstyle{\pm 0.3}$ & $64.8 \scriptstyle{\pm 2.6}$ & $40.3 \scriptstyle{\pm 3.8}$ & $45.5$ & $\underline{79.3 \scriptstyle{\pm 0.3}}$ & $61.6 \scriptstyle{\pm 0.5}$ & $-5.8 \scriptstyle{\pm 4.0}$ & $32.7 \scriptstyle{\pm 0.3}$ & $25.0$ \\
Decali & $93.9 \scriptstyle{\pm 0.3}$ & $3.6 \scriptstyle{\pm 0.1}$ & $52.3 \scriptstyle{\pm 6.5}$ & $39.5 \scriptstyle{\pm 5.6}$ & $40.2$ & $76.2 \scriptstyle{\pm 0.3}$ & $\underline{3.5 \scriptstyle{\pm 0.4}}$ & $76.6 \scriptstyle{\pm 0.9}$ & $93.1 \scriptstyle{\pm 0.7}$ & $50.0$ \\
DAPPr  & $93.9 \scriptstyle{\pm 0.3}$ & $8.9 \scriptstyle{\pm 0.2}$ & $\underline{98.9 \scriptstyle{\pm 0.1}}$ & $67.1 \scriptstyle{\pm 4.0}$ & $56.6$ & $75.1 \scriptstyle{\pm 0.4}$ & $35.3 \scriptstyle{\pm 0.4}$ & $\mathbf{87.6 \scriptstyle{\pm 0.4}}$ & $78.4 \scriptstyle{\pm 0.2}$ & $44.4$ \\
\rowcolor{OxfordBlueLight}
Ours   & $\mathbf{95.5 \scriptstyle{\pm 0.1}}$ & $\mathbf{1.5 \scriptstyle{\pm 0.1}}$ & $97.5 \scriptstyle{\pm 0.4}$ & $\mathbf{97.9 \scriptstyle{\pm 1.7}}$ & $\mathbf{90.9}$ & $79.3 \scriptstyle{\pm 0.2}$ & $\mathbf{2.7 \scriptstyle{\pm 0.2}}$ & $85.7 \scriptstyle{\pm 0.4}$ & $\mathbf{98.7 \scriptstyle{\pm 0.3}}$ & $\mathbf{80.6}$ \\
\midrule
MCDO   & $93.7 \scriptstyle{\pm 0.2}$ & $\underline{3.5 \scriptstyle{\pm 0.1}}$ & $61.3 \scriptstyle{\pm 1.6}$ & $43.6 \scriptstyle{\pm 5.2}$ & $40.2$ & $76.1 \scriptstyle{\pm 0.3}$ & $3.8 \scriptstyle{\pm 0.4}$ & $82.4 \scriptstyle{\pm 0.5}$ & $95.0 \scriptstyle{\pm 0.6}$ & $55.6$ \\
DE     & $95.2 \scriptstyle{\pm 0.1}$ & $5.5 \scriptstyle{\pm 0.1}$ & $69.8 \scriptstyle{\pm 0.9}$ & $\underline{68.8 \scriptstyle{\pm 3.9}}$ & $\underline{59.1}$ & $\mathbf{80.3 \scriptstyle{\pm 0.2}}$ & $9.1 \scriptstyle{\pm 0.2}$ & $\underline{86.8 \scriptstyle{\pm 0.3}}$ & $\underline{96.1 \scriptstyle{\pm 0.2}}$ & $\underline{77.8}$ \\
\midrule \midrule
\multicolumn{11}{c}{$\epsilon=0.1$}\\ \midrule\midrule 
EDL    & $93.3 \scriptstyle{\pm 0.1}$ & $9.2 \scriptstyle{\pm 0.3}$ & $\mathbf{99.0 \scriptstyle{\pm 0.1}}$ & $62.6 \scriptstyle{\pm 0.6}$ & $50.0$ & $27.5 \scriptstyle{\pm 1.1}$ & $19.6 \scriptstyle{\pm 0.9}$ & $60.2 \scriptstyle{\pm 1.1}$ & $51.5 \scriptstyle{\pm 0.4}$ & $22.2$ \\
LbBnn  & $\mathbf{95.4 \scriptstyle{\pm 0.1}}$ & $45.2 \scriptstyle{\pm 0.1}$ & $43.6 \scriptstyle{\pm 6.0}$ & $38.3 \scriptstyle{\pm 4.6}$ & $33.3$ & $79.1 \scriptstyle{\pm 0.2}$ & $70.0 \scriptstyle{\pm 0.3}$ & $-52.6 \scriptstyle{\pm 4.3}$ & $32.6 \scriptstyle{\pm 0.2}$ & $22.2$ \\
Decali & $93.9 \scriptstyle{\pm 0.3}$ & $\underline{7.3 \scriptstyle{\pm 0.1}}$ & $37.1 \scriptstyle{\pm 13.5}$ & $42.3 \scriptstyle{\pm 5.9}$ & $42.9$ & $75.9 \scriptstyle{\pm 0.5}$ & $\underline{5.0 \scriptstyle{\pm 0.3}}$ & $57.2 \scriptstyle{\pm 2.9}$ & $88.5 \scriptstyle{\pm 0.8}$ & $44.4$ \\
DAPPr  & $93.9 \scriptstyle{\pm 0.2}$ & $11.8 \scriptstyle{\pm 0.2}$ & $\underline{98.9 \scriptstyle{\pm 0.1}}$ & $69.6 \scriptstyle{\pm 3.4}$ & $\underline{54.0}$ & $75.0 \scriptstyle{\pm 0.4}$ & $36.8 \scriptstyle{\pm 0.4}$ & $\mathbf{86.9 \scriptstyle{\pm 0.6}}$ & $79.5 \scriptstyle{\pm 0.2}$ & $44.4$ \\
\rowcolor{OxfordBlueLight}
Ours   & $\underline{95.4 \scriptstyle{\pm 0.1}}$ & $\mathbf{3.7 \scriptstyle{\pm 0.2}}$ & $95.8 \scriptstyle{\pm 0.5}$ & $\mathbf{96.9 \scriptstyle{\pm 1.8}}$ & $\mathbf{90.0}$ & $\underline{79.4 \scriptstyle{\pm 0.2}}$ & $\mathbf{1.5 \scriptstyle{\pm 0.2}}$ & $83.7 \scriptstyle{\pm 0.4}$ & $\mathbf{98.7 \scriptstyle{\pm 0.2}}$ & $\mathbf{83.3}$ \\
\midrule
MCDO   & $93.6 \scriptstyle{\pm 0.3}$ & $7.8 \scriptstyle{\pm 0.2}$ & $58.3 \scriptstyle{\pm 1.4}$ & $47.6 \scriptstyle{\pm 4.1}$ & $38.3$ & $76.4 \scriptstyle{\pm 0.3}$ & $7.3 \scriptstyle{\pm 0.3}$ & $80.0 \scriptstyle{\pm 0.3}$ & $93.1 \scriptstyle{\pm 0.2}$ & $55.6$ \\
DE     & $95.2 \scriptstyle{\pm 0.1}$ & $10.0 \scriptstyle{\pm 0.1}$ & $68.1 \scriptstyle{\pm 1.3}$ & $\underline{75.6 \scriptstyle{\pm 2.8}}$ & $52.0$ & $\mathbf{80.2 \scriptstyle{\pm 0.2}}$ & $12.8 \scriptstyle{\pm 0.2}$ & $\underline{85.5 \scriptstyle{\pm 0.2}}$ & $\underline{94.2 \scriptstyle{\pm 0.2}}$ & $\underline{77.8}$ \\
\midrule \midrule
\multicolumn{11}{c}{$\epsilon=0.15$}\\ \midrule\midrule
EDL    & $93.5 \scriptstyle{\pm 0.3}$ & $\underline{9.2 \scriptstyle{\pm 0.3}}$ & $\mathbf{98.9 \scriptstyle{\pm 0.1}}$ & $65.1 \scriptstyle{\pm 1.4}$ & $\underline{61.1}$ & $26.8 \scriptstyle{\pm 1.2}$ & $19.1 \scriptstyle{\pm 0.9}$ & $59.5 \scriptstyle{\pm 1.3}$ & $52.0 \scriptstyle{\pm 0.5}$ & $22.2$ \\
LbBnn  & $\underline{95.3 \scriptstyle{\pm 0.1}}$ & $54.5 \scriptstyle{\pm 0.1}$ & $32.2 \scriptstyle{\pm 3.2}$ & $35.9 \scriptstyle{\pm 3.7}$ & $33.3$ & $78.8 \scriptstyle{\pm 0.3}$ & $72.7 \scriptstyle{\pm 0.3}$ & $-51.7 \scriptstyle{\pm 3.9}$ & $32.5 \scriptstyle{\pm 0.3}$ & $22.2$ \\
Decali & $93.9 \scriptstyle{\pm 0.2}$ & $11.6 \scriptstyle{\pm 0.3}$ & $21.5 \scriptstyle{\pm 11.6}$ & $37.1 \scriptstyle{\pm 4.8}$ & $33.3$ & $75.7 \scriptstyle{\pm 0.5}$ & $\underline{8.9 \scriptstyle{\pm 0.3}}$ & $34.2 \scriptstyle{\pm 2.1}$ & $81.3 \scriptstyle{\pm 0.8}$ & $47.2$ \\
DAPPr  & $94.0 \scriptstyle{\pm 0.3}$ & $15.4 \scriptstyle{\pm 0.2}$ & $\underline{98.8 \scriptstyle{\pm 0.1}}$ & $69.3 \scriptstyle{\pm 4.0}$ & $44.4$ & $74.8 \scriptstyle{\pm 0.5}$ & $38.6 \scriptstyle{\pm 0.4}$ & $\mathbf{86.9 \scriptstyle{\pm 0.7}}$ & $79.4 \scriptstyle{\pm 0.2}$ & $44.4$ \\
\rowcolor{OxfordBlueLight}
Ours   & $\mathbf{95.5 \scriptstyle{\pm 0.1}}$ & $\mathbf{6.0 \scriptstyle{\pm 0.2}}$ & $93.9 \scriptstyle{\pm 0.5}$ & $\mathbf{97.6 \scriptstyle{\pm 0.5}}$ & $\mathbf{88.9}$ & $\underline{79.4 \scriptstyle{\pm 0.1}}$ & $\mathbf{2.0 \scriptstyle{\pm 0.2}}$ & $81.5 \scriptstyle{\pm 0.6}$ & $\mathbf{98.2 \scriptstyle{\pm 0.2}}$ & $\mathbf{83.3}$ \\
\midrule
MCDO   & $93.7 \scriptstyle{\pm 0.4}$ & $12.2 \scriptstyle{\pm 0.2}$ & $55.2 \scriptstyle{\pm 1.4}$ & $48.7 \scriptstyle{\pm 3.2}$ & $38.9$ & $76.1 \scriptstyle{\pm 0.2}$ & $10.9 \scriptstyle{\pm 0.2}$ & $77.7 \scriptstyle{\pm 0.4}$ & $91.1 \scriptstyle{\pm 0.2}$ & $52.8$ \\
DE     & $95.2 \scriptstyle{\pm 0.1}$ & $14.3 \scriptstyle{\pm 0.1}$ & $66.1 \scriptstyle{\pm 1.6}$ & $\underline{75.6 \scriptstyle{\pm 2.4}}$ & $50.0$ & $\mathbf{80.2 \scriptstyle{\pm 0.2}}$ & $17.1 \scriptstyle{\pm 0.3}$ & $\underline{84.5 \scriptstyle{\pm 0.2}}$ & $\underline{92.3 \scriptstyle{\pm 0.2}}$ & $\underline{77.8}$ \\
\midrule \midrule
\multicolumn{11}{c}{$\epsilon=0.2$}\\ \midrule\midrule
EDL    & $93.3 \scriptstyle{\pm 0.4}$ & $\underline{9.2 \scriptstyle{\pm 0.1}}$ & $\mathbf{98.9 \scriptstyle{\pm 0.1}}$ & $66.9 \scriptstyle{\pm 2.0}$ & $\underline{61.1}$ & $26.4 \scriptstyle{\pm 0.9}$ & $18.8 \scriptstyle{\pm 0.7}$ & $59.2 \scriptstyle{\pm 1.5}$ & $52.4 \scriptstyle{\pm 0.3}$ & $27.8$ \\
LbBnn  & $\mathbf{95.4 \scriptstyle{\pm 0.1}}$ & $61.0 \scriptstyle{\pm 0.1}$ & $26.2 \scriptstyle{\pm 4.6}$ & $37.0 \scriptstyle{\pm 4.4}$ & $33.3$ & $78.7 \scriptstyle{\pm 0.3}$ & $74.2 \scriptstyle{\pm 0.3}$ & $-47.4 \scriptstyle{\pm 5.4}$ & $32.5 \scriptstyle{\pm 0.2}$ & $22.2$ \\
Decali & $93.9 \scriptstyle{\pm 0.2}$ & $16.0 \scriptstyle{\pm 0.2}$ & $18.0 \scriptstyle{\pm 18.5}$ & $40.3 \scriptstyle{\pm 4.5}$ & $34.3$ & $75.4 \scriptstyle{\pm 0.3}$ & $\underline{13.5 \scriptstyle{\pm 0.3}}$ & $17.1 \scriptstyle{\pm 2.1}$ & $74.8 \scriptstyle{\pm 0.8}$ & $47.2$ \\
DAPPr  & $93.9 \scriptstyle{\pm 0.2}$ & $18.9 \scriptstyle{\pm 0.2}$ & $\underline{98.7 \scriptstyle{\pm 0.1}}$ & $68.8 \scriptstyle{\pm 2.2}$ & $44.9$ & $75.0 \scriptstyle{\pm 0.4}$ & $40.8 \scriptstyle{\pm 0.3}$ & $\mathbf{86.6 \scriptstyle{\pm 0.6}}$ & $79.0 \scriptstyle{\pm 0.2}$ & $44.4$ \\
\rowcolor{OxfordBlueLight}
Ours   & $\underline{95.4 \scriptstyle{\pm 0.1}}$ & $\mathbf{7.6 \scriptstyle{\pm 0.1}}$ & $91.6 \scriptstyle{\pm 0.6}$ & $\mathbf{95.6 \scriptstyle{\pm 1.9}}$ & $\mathbf{87.9}$ & $\underline{79.3 \scriptstyle{\pm 0.3}}$ & $\mathbf{2.7 \scriptstyle{\pm 0.3}}$ & $79.2 \scriptstyle{\pm 0.7}$ & $\mathbf{97.6 \scriptstyle{\pm 0.2}}$ & $\mathbf{83.3}$ \\
\midrule
MCDO   & $93.7 \scriptstyle{\pm 0.4}$ & $16.5 \scriptstyle{\pm 0.2}$ & $52.9 \scriptstyle{\pm 1.5}$ & $49.9 \scriptstyle{\pm 5.6}$ & $37.9$ & $75.6 \scriptstyle{\pm 0.3}$ & $14.6 \scriptstyle{\pm 0.3}$ & $75.4 \scriptstyle{\pm 0.6}$ & $88.9 \scriptstyle{\pm 0.4}$ & $52.8$ \\
DE     & $95.2 \scriptstyle{\pm 0.1}$ & $18.5 \scriptstyle{\pm 0.1}$ & $65.1 \scriptstyle{\pm 0.9}$ & $\underline{76.1 \scriptstyle{\pm 3.2}}$ & $50.5$ & $\mathbf{80.1 \scriptstyle{\pm 0.2}}$ & $21.3 \scriptstyle{\pm 0.3}$ & $\underline{84.1 \scriptstyle{\pm 0.3}}$ & $\underline{90.7 \scriptstyle{\pm 0.2}}$ & $\underline{72.2}$ \\
\bottomrule
\end{tabular}
\end{table}

\newpage
\section{Ablations on Epistemic Uncertainty Quantification}
\label{App:EUQ}
\subsection{Maximum Mean Imprecision}
\label{App:mmi}
The exact maximum mean imprecision (MMI) for measuring EU under total variation~\citep{chau2025integral} is defined as
\begin{equation}
\mathrm{MMI}(\underline{P}) := \sup_{A\in2^\mathcal{Y}}\big(\overline{P}(A)-\underline{P}(A)\big),
\label{eq:extractMMI}
\end{equation}
where $2^\mathcal{Y}$ denotes the power set over the output space $\mathcal{Y}$, and the upper and lower probabilities $\overline{P}(A)$ and $\underline{P}(A)$ bound the probability of any event $A\subseteq\mathcal{Y}$. Given our credal set defined by~\cref{eq:credal_prediction}, i.e.,
\begin{equation}
C_{\theta}=\bigl\{ (1-\epsilon)\,{p}_{+, \theta} + \epsilon\,{p}_{-, \theta} \mid \epsilon \in [0,1] \bigr\},
\nonumber
\end{equation}
the induced probability $p_\epsilon(A)$ of any event $A$ is affine in $\epsilon$, so its extrema over $C$ are attained at $\epsilon \in \{0,1\}$, as follows:
\[
\overline{P}(A) = \max\big\{ p_{+, \theta} (A),\,p_{-, \theta} (A)\big\}, \qquad
\underline{P}(A) = \min\big\{(p_{+, \theta} (A),\,p_{-, \theta} (A)\big\}.
\]
Therefore, the MMI in~\cref{eq:extractMMI} reduces exactly to the total variation distance between the two predictive distributions:
\begin{equation}
\mathrm{MMI}(\underline{P}) = \sup_{A\in2^\mathcal{Y}}\big\lvert p_{+, \theta} (A)-p_{-, \theta} (A)\big\rvert
= \frac{1}{2}\sum\nolimits_{k=1}^{K} \bigl\lvert p_{+, \theta, k} - p_{-, \theta, k} \bigr\rvert,
\label{eq:mmi-tv}
\end{equation}
where the last equality is the standard characterisation of the total variation distance in terms of the underlying probability mass functions.

\subsection{Shannon Entropy Difference}
\label{App:EntropyDifference}
Since reliable uncertainty quantification depends jointly on the representation and the uncertainty measure~\citep{wang2026set}, in addition to MMI, we also consider the Shannon entropy difference (\(\mathrm{H_{diff}}\))~\citep{abellan2006disaggregated} as an alternative, defined as the range of entropy over the credal set:
\begin{equation}
\mathrm{H_{diff}}(x) = \max_{{p}\in C_{\theta}(x)} H({p}) - \min_{{p}\in C_{\theta}(x)} H({p}),
\label{eq:eu_h_diff}
\end{equation}
where \(H({p}) := -\sum_{k=1}^K p_k \log_2 p_k\) denotes the Shannon entropy of a predictive distribution. Unlike MMI, however, the maximisation in~\cref{eq:eu_h_diff} does not admit a closed-form solution and requires numerical optimisation, incurring a small additional computational overhead.

\textbf{Experimental Evaluation.}
We compare MMI and \(\mathrm{H_{diff}}\) for EU quantification using our POCC method on AUARC in selective classification under three annotation imprecision scenarios: annotator disagreement (\tablename~\ref{tab:mmi_hdiff_dis}), teacher-model predictions (\tablename~\ref{tab:mmi_hdiff_teacher}), and label smoothing (\tablename~\ref{tab:mmi_hdiff_smoothing}), corresponding to the evaluation settings in Section~\ref{sec: Exp}. The results show that MMI consistently achieves better EU quantification performance while being computationally more efficient.
\begin{table}[!htbp]
\centering
\small
\caption{Comparison of MMI and \(\mathrm{H_{diff}}\) on AUARC ($\uparrow$) in selective classification under annotator-disagreement annotation imprecision.}
\begin{tabular}{lccc}
\toprule
Metric & CIFAR-10H & MiceBone & TreeVersity \\
\midrule
MMI & $97.1 \scriptstyle{\pm 0.2}$ & $97.3 \scriptstyle{\pm 0.3}$ & $96.7 \scriptstyle{\pm 0.2}$ \\
H-Diff & $96.6 \scriptstyle{\pm 0.2}$ & $95.0 \scriptstyle{\pm 0.7}$ & $95.7 \scriptstyle{\pm 0.2}$ \\
\bottomrule
\end{tabular}
\label{tab:mmi_hdiff_dis}
\end{table}

\begin{table}[!htbp]
\centering
\small
\caption{Comparison of MMI and \(\mathrm{H_{diff}}\) on AUARC ($\uparrow$) in selective classification under annotation imprecision from teacher model predictions.}
\begin{tabular}{lcccccc}
\toprule
 & \multicolumn{2}{c}{ShuffleNetV2-0.5x} & \multicolumn{2}{c}{ResNet-20} & \multicolumn{2}{c}{MobileNetV2-x0-5} \\
\cmidrule(lr){2-3} \cmidrule(lr){4-5} \cmidrule(lr){6-7}
Dataset & MMI  & H-Diff & MMI  & H-Diff  & MMI & H-Diff  \\
\midrule
CIFAR-10  & $99.4 \scriptstyle{\pm 0.0}$ & $99.1 \scriptstyle{\pm 0.0}$ & $99.5 \scriptstyle{\pm 0.0}$ & $99.3 \scriptstyle{\pm 0.0}$ & $99.5 \scriptstyle{\pm 0.0}$ & $99.2 \scriptstyle{\pm 0.0}$ \\
CIFAR-100 & $90.2 \scriptstyle{\pm 0.3}$ & $83.0 \scriptstyle{\pm 0.7}$ & $90.6 \scriptstyle{\pm 0.2}$ & $84.5 \scriptstyle{\pm 0.6}$ & $90.9 \scriptstyle{\pm 0.2}$ & $83.7 \scriptstyle{\pm 0.4}$ \\
\bottomrule
\end{tabular}
\label{tab:mmi_hdiff_teacher}
\end{table}

\begin{table}[!htbp]
\centering
\small
\caption{Comparison of MMI and \(\mathrm{H_{diff}}\) on AUARC ($\uparrow$) in selective classification under annotation imprecision from label smoothing.}
\begin{tabular}{lccc}
\toprule
Dataset & Smoothing $\epsilon$ & MMI  & H-Diff \\
\midrule
\multirow{4}{*}{CIFAR-10} & 0.05 & $99.2 \scriptstyle{\pm 0.1}$ & $99.1 \scriptstyle{\pm 0.1}$ \\
 & 0.10 & $98.8 \scriptstyle{\pm 0.2}$ & $98.6 \scriptstyle{\pm 0.2}$ \\
 & 0.15 & $98.8 \scriptstyle{\pm 0.2}$ & $98.1 \scriptstyle{\pm 0.2}$ \\
 & 0.20 & $98.7 \scriptstyle{\pm 0.1}$ & $97.3 \scriptstyle{\pm 0.2}$ \\
\midrule
\multirow{4}{*}{CIFAR-100} & 0.05 & $93.4 \scriptstyle{\pm 0.1}$ & $92.2 \scriptstyle{\pm 0.1}$ \\
 & 0.10 & $93.3 \scriptstyle{\pm 0.1}$ & $91.5 \scriptstyle{\pm 0.2}$ \\
 & 0.15 & $93.1 \scriptstyle{\pm 0.1}$ & $90.7 \scriptstyle{\pm 0.2}$ \\
 & 0.20 & $92.8 \scriptstyle{\pm 0.2}$ & $89.6 \scriptstyle{\pm 0.3}$ \\
\bottomrule
\end{tabular}
\label{tab:mmi_hdiff_smoothing}
\end{table}

\newpage
\textbf{Robustness of POCC on EU Quantification Measures.}
When using \(\mathrm{H_{diff}}\) as the EU measure, our method consistently achieves the best balanced quality scores (BQSs) across all scenarios (Figures~\ref{fig:balance-dis}, ~\ref{fig:balance-teacher}), and ~\ref{fig:balance-smoothing}, with one exception: on CIFAR-10 under label smoothing, it ranks second, behind the 5-member Deep Ensemble. 
\begin{figure}[!htbp]
    \centering
    \includegraphics[width=0.5\linewidth]{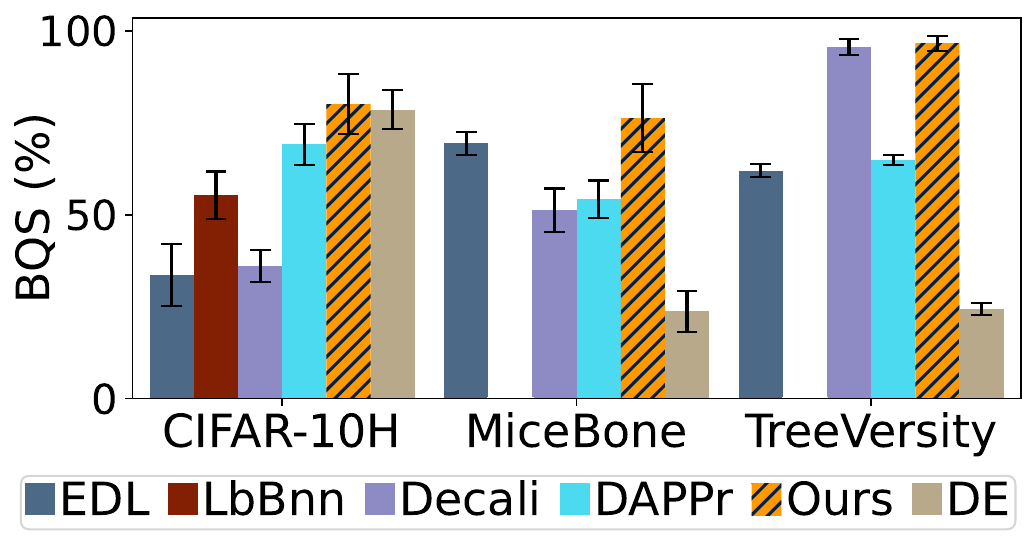}
    \caption{BQS of our method under annotator-disagreement annotation imprecision, using \(\mathrm{H_{diff}}\) for EU quantification.}
    \label{fig:balance-dis}
\end{figure}
\begin{figure}[!htbp]
    \centering
    \includegraphics[width=\linewidth]{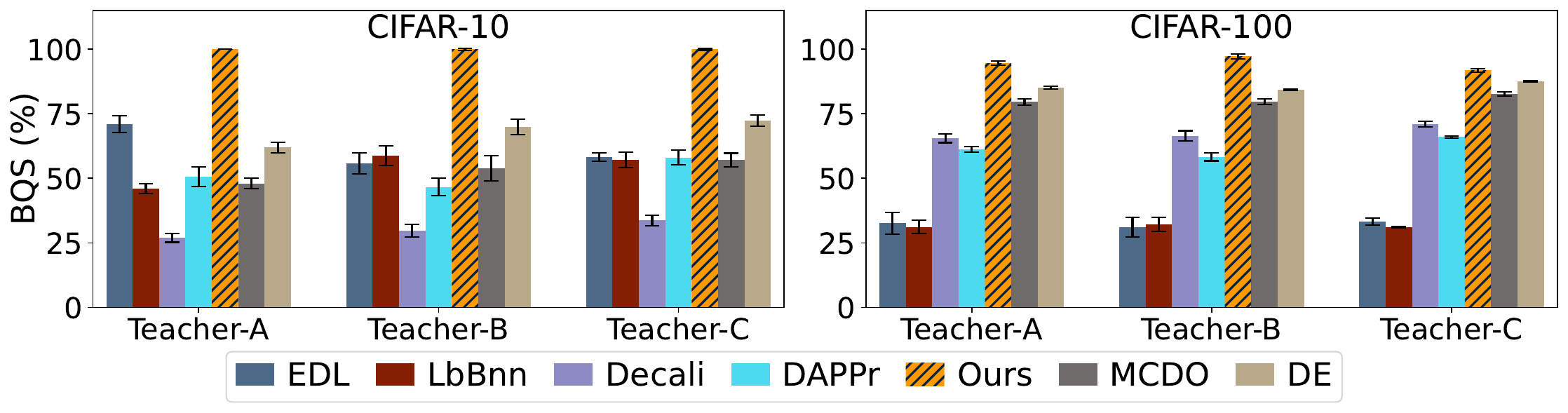}
    \caption{BQS of our method under teacher-prediction annotation imprecision, using \(\mathrm{H_{diff}}\) for EU quantification. Teacher-A: ResNet-20; Teacher-B: ShuffleNetV2-0.5x; Teacher-C: MobileNetV2-x0-5.}
    \label{fig:balance-teacher}
\end{figure}
\begin{figure}[!htbp]
    \centering
    \includegraphics[width=\linewidth]{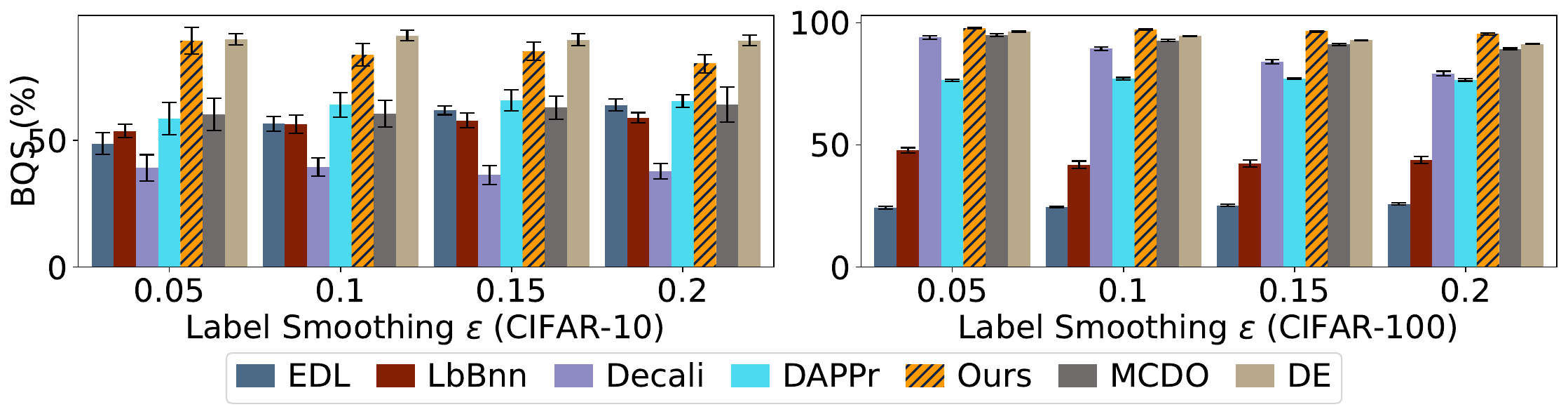}
    \caption{BQS of our method under label-smoothing annotation imprecision with varying $\epsilon$, using \(\mathrm{H_{diff}}\) for EU quantification.}
    \label{fig:balance-smoothing}
\end{figure}

\newpage
\section{On the Exclusion of OOD Detection for EU Evaluation}
\label{App:WhyNotOOD}
Below we detail why out-of-distribution (OOD) detection is not an appropriate benchmark for EU assessment in our work.

\textbf{Conceptual Mismatch.}
OOD detection asks whether a test input \(x\) was drawn from the same marginal distribution \(P_X\) as the training data---an \emph{input-space} question. The EU studied in our framework asks something fundamentally different: \emph{given \(x\), how much remains underdetermined about the true conditional label distribution because the supervision itself is imprecise?}---a \emph{label-space} question. These two sources of uncertainty are conceptually orthogonal. A perfectly in-distribution image can receive highly divergent human annotations (high annotation-induced EU, yet assuredly in-distribution), while a clearly OOD input can be unambiguous to any annotator (low annotation-induced EU, yet OOD). Using OOD-detection performance as a proxy for EU therefore conflates two distinct epistemic phenomena that need not correlate.

\textbf{Methodological Critique.}
This mismatch is not unique to our setting. \citet{li2025position} argue that supervised classifiers answer the wrong question for OOD detection: high predictive uncertainty and large feature distance are conflated with being OOD, yielding irreducible errors that neither scaling, epistemic uncertainty representations, nor outlier exposure can remove. The critique applies with particular force to POCC, whose EU measures the size of the predictive credal set induced by imprecise annotations. Nothing in its training objective ties this quantity to deviation from \(P_X\).

\textbf{Absence of a Meaningful Reference Signal.}
Even setting aside the above, standard OOD benchmarks provide no reference signal for annotation-induced EU. The proxy rests on the assumption that OOD samples \emph{should} exhibit higher EU than in-distribution ones---an assumption that is debatable in conventional settings~\citep{li2025position} and untenable here: all test samples in our experiments are drawn from the same input distribution as the training data. EU differences arise from variation in the informativeness and consistency of supervision, not from variation in input provenance, so an OOD benchmark cannot distinguish between instances that are genuinely annotation-ambiguous and those that are not.

Selective classification, by contrast, offers a task-oriented evaluation with a clear operational interpretation aligned with our framework's objective. A good EU estimate should rank genuinely annotation-ambiguous instances higher, so that abstaining on high-uncertainty predictions improves accuracy on the retained subset. This provides operational evidence that the learned disagreement between pessimistic and optimistic objectives is useful for ranking prediction errors---without claiming that the predictive segment faithfully recovers the true credal set induced by annotation imprecision, or that OOD detection serves as a valid proxy for EU.

\textbf{Additional Experiments.}
Although OOD detection is not an appropriate benchmark for EU evaluation in our work (as discussed above), one may still be interested in how our method performs on OOD detection. We therefore conduct the following additional experiments.

We consider models trained under three annotation imprecision settings: (i) CIFAR-10H; (ii) CIFAR-10 with ResNet-20 as the teacher in a knowledge distillation setting; and (iii) CIFAR-10 with label smoothing at \(\epsilon=0.05\). For OOD detection, CIFAR-10 is used as the in-distribution dataset, with several OOD datasets: SVHN~\citep{hendrycks2021natural}, Places365~\citep{Places}, FMNIST~\citep{xiao2017fashionmnist}, and ImageNet~\citep{deng2009imagenet}. For a fair comparison with multiple-forward-pass baselines, we additionally include an ensemble of $5$ POCC models, denoted Our-Ens.

Let \(M\) denote the number of independently trained POCC models (via random initialisation, same as DE) with parameters \(\theta^{(1)}, \ldots, \theta^{(M)}\). Each model \(m\) produces two vertices in the
probability simplex via its pessimistic and optimistic heads, \(p_{-,\theta^{(m)}}(x), \quad p_{+,\theta^{(m)}}(x) \in \Delta^{K-1},\) which together define a line-segment credal set \(C_{\theta^{(m)}}(x)\) as in
~\cref{eq:credal_prediction}. Aggregating across all \(M\) models, we collect the full set of \(2M\) vertices,
\begin{equation}
    \mathcal{V}(x)
    \;=\;
    \bigcup_{m=1}^{M}
    \bigl\{
        p_{-,\theta^{(m)}}(x),\;
        p_{+,\theta^{(m)}}(x)
    \bigr\}
    \;\subset\;
    \Delta^{K-1}.
    \label{eq:ensemble_vertices}
\end{equation}
The ensemble credal set is then defined as the convex hull of \(\mathcal{V}(x)\),
\begin{equation}
    \mathrm{conv}\!\left(\mathcal{V}(x)\right)
    \!=\!\!
    \left\{
        \sum_{m=1}^{M}
        \bigl(
            \alpha_{m}^{-}\, p_{-,\theta^{(m)}}(x)
            \!\!+\!\!
            \alpha_{m}^{+}\, p_{+,\theta^{(m)}}(x)
        \bigr)
        \!\Big|\!
        \alpha_{m}^{-}, \alpha_{m}^{+} \geq 0,\;
        \!\!\!\!\sum_{m=1}^{M}\!\left(\alpha_{m}^{-} + \alpha_{m}^{+}\right) \!=\! 1
    \!\right\},
    \label{eq:ensemble_credal}
\end{equation}
which is a convex polytope in \(\Delta^{K-1}\) with at most \(2M\) vertices. EU is quantified as the Shannon entropy difference in ~\cref{eq:eu_h_diff}.

\textbf{\emph{Results.}}
Tables~\ref{Tab: OOD-Dis}, \ref{Tab: OOD-Teacher}, and~\ref{Tab: OOD-Smooth} report the OOD detection results. POCC achieves comparable performance to the single-model baselines (ranking within the top 2), while the POCC ensemble attains the best OOD detection performance overall.
\begin{table}[!htbp]
\centering
\small
\caption{AUROC (\%) for OOD detection using epistemic uncertainty, with CIFAR-10 (models trained using CIFAR-10H) as the ID dataset, averaged over 10 runs. Methods are grouped into \textit{single-model} approaches (above the line) and \textit{multiple-forward-pass} approaches, including an ensemble of our POCCs (Our-Ens), MCDO, and DE (below the line). Bold and underlined entries denote the best and second-best results within each group, respectively.}
\label{Tab: OOD-Dis}
\begin{tabular}{@{}lcccc@{}}
\toprule
& SVHN        & Places      & FMNIST      & ImageNet    \\ \midrule 
EDL        & $0.735 \scriptstyle{\pm 0.004}$ & $0.717 \scriptstyle{\pm 0.004}$ & $0.721 \scriptstyle{\pm 0.007}$ & $0.701 \scriptstyle{\pm 0.004}$ \\
Decali     & $\mathbf{0.854 \scriptstyle{\pm 0.011}}$ & $\mathbf{0.832 \scriptstyle{\pm 0.012}}$ & $\mathbf{0.846 \scriptstyle{\pm 0.014}}$ & $\mathbf{0.807 \scriptstyle{\pm 0.011}}$ \\
DAPPr      & $0.730 \scriptstyle{\pm 0.007}$ & $0.707 \scriptstyle{\pm 0.006}$ & $0.699 \scriptstyle{\pm 0.009}$ & $0.680 \scriptstyle{\pm 0.006}$ \\
\rowcolor{OxfordBlueLight}
Ours       & $\underline{0.840 \scriptstyle{\pm 0.009}}$ & $\underline{0.816 \scriptstyle{\pm 0.009}}$ & $\underline{0.830 \scriptstyle{\pm 0.013}}$ & $\underline{0.785 \scriptstyle{\pm 0.007}}$ \\
\midrule
\rowcolor{OxfordBlueLight}
Our-Ens    & $\mathbf{1.000 \scriptstyle{\pm 0.000}}$ & $\mathbf{0.992 \scriptstyle{\pm 0.001}}$ & $\mathbf{0.997 \scriptstyle{\pm 0.001}}$ & $\mathbf{0.990 \scriptstyle{\pm 0.001}}$ \\
MCDO       & $0.957 \scriptstyle{\pm 0.009}$ & $0.930 \scriptstyle{\pm 0.009}$ & $0.945 \scriptstyle{\pm 0.009}$ & $0.904 \scriptstyle{\pm 0.010}$ \\
DE         & $\underline{0.981 \scriptstyle{\pm 0.003}}$ & $\underline{0.958 \scriptstyle{\pm 0.004}}$ & $\underline{0.971 \scriptstyle{\pm 0.003}}$ & $\underline{0.937 \scriptstyle{\pm 0.005}}$ \\
\bottomrule
\end{tabular}
\end{table}

\begin{table}[!htbp]
\centering
\small
\caption{AUROC (\%) for OOD detection using epistemic uncertainty, with CIFAR-10 (model trained using the ResNet-20 as the teacher model with $T=2.5$) as the ID dataset, averaged over 10 runs. Methods are grouped into \textit{single-model} approaches (above the line) and \textit{multiple-forward-pass} approaches, including an ensemble of our POCCs (Our-Ens), MCDO, and DE (below the line). Bold and underlined entries denote the best and second-best results within each group, respectively.}
\label{Tab: OOD-Teacher}
\begin{tabular}{@{}lcccc@{}}
\toprule
& SVHN        & Places      & FMNIST      & ImageNet    \\ \midrule 
EDL        & $0.717 \scriptstyle{\pm 0.008}$ & $0.696 \scriptstyle{\pm 0.006}$ & $0.693 \scriptstyle{\pm 0.010}$ & $0.670 \scriptstyle{\pm 0.005}$ \\
Decali     & $\mathbf{0.750 \scriptstyle{\pm 0.003}}$ & $\underline{0.713 \scriptstyle{\pm 0.003}}$ & $\mathbf{0.737 \scriptstyle{\pm 0.005}}$ & $\mathbf{0.694 \scriptstyle{\pm 0.002}}$ \\
DAPPr      & $0.609 \scriptstyle{\pm 0.004}$ & $0.605 \scriptstyle{\pm 0.003}$ & $0.603 \scriptstyle{\pm 0.003}$ & $0.601 \scriptstyle{\pm 0.003}$ \\
\rowcolor{OxfordBlueLight}
Ours       & $\underline{0.743 \scriptstyle{\pm 0.003}}$ & $\mathbf{0.721 \scriptstyle{\pm 0.003}}$ & $\underline{0.730 \scriptstyle{\pm 0.006}}$ & $\underline{0.688 \scriptstyle{\pm 0.003}}$ \\
\midrule
\rowcolor{OxfordBlueLight}
Our-Ens    & $\mathbf{1.000 \scriptstyle{\pm 0.000}}$ & $\mathbf{0.989 \scriptstyle{\pm 0.001}}$ & $\mathbf{0.997 \scriptstyle{\pm 0.000}}$ & $\mathbf{0.980 \scriptstyle{\pm 0.001}}$ \\
MCDO       & $0.944 \scriptstyle{\pm 0.001}$ & $0.902 \scriptstyle{\pm 0.002}$ & $0.940 \scriptstyle{\pm 0.004}$ & $0.852 \scriptstyle{\pm 0.002}$ \\
DE         & $\underline{0.968 \scriptstyle{\pm 0.001}}$ & $\underline{0.917 \scriptstyle{\pm 0.001}}$ & $\underline{0.956 \scriptstyle{\pm 0.003}}$ & $\underline{0.865 \scriptstyle{\pm 0.002}}$ \\
\bottomrule
\end{tabular}
\end{table}

\begin{table}[!htbp]
\centering
\small
\caption{AUROC (\%) for OOD detection using epistemic uncertainty, with CIFAR-10 (model trained using label smoothing $\varepsilon = 0.05$) as the ID dataset, averaged over 10 runs. Methods are grouped into \textit{single-model} approaches (above the line) and \textit{multiple-forward-pass} approaches, including an ensemble of our POCCs (Our-Ens), MCDO, and DE (below the line). Bold and underlined entries denote the best and second-best results within each group, respectively.}
\label{Tab: OOD-Smooth}
\begin{tabular}{@{}lcccc@{}}
\toprule
& SVHN        & Places      & FMNIST      & ImageNet    \\ \midrule 
EDL        & $0.715 \scriptstyle{\pm 0.008}$ & $0.696 \scriptstyle{\pm 0.006}$ & $0.691 \scriptstyle{\pm 0.008}$ & $0.671 \scriptstyle{\pm 0.004}$ \\
Decali     & $\underline{0.814 \scriptstyle{\pm 0.020}}$ & $\underline{0.780 \scriptstyle{\pm 0.026}}$ & $\underline{0.784 \scriptstyle{\pm 0.020}}$ & $\underline{0.755 \scriptstyle{\pm 0.013}}$ \\
DAPPr      & $0.744 \scriptstyle{\pm 0.003}$ & $0.706 \scriptstyle{\pm 0.002}$ & $0.720 \scriptstyle{\pm 0.007}$ & $0.664 \scriptstyle{\pm 0.002}$ \\
\rowcolor{OxfordBlueLight}
Ours       & $\mathbf{0.872 \scriptstyle{\pm 0.009}}$ & $\mathbf{0.836 \scriptstyle{\pm 0.008}}$ & $\mathbf{0.846 \scriptstyle{\pm 0.007}}$ & $\mathbf{0.798 \scriptstyle{\pm 0.010}}$ \\
\midrule \midrule
\rowcolor{OxfordBlueLight}
Our-Ens    & $\mathbf{0.999 \scriptstyle{\pm 0.000}}$ & $\mathbf{0.989 \scriptstyle{\pm 0.001}}$ & $\mathbf{0.997 \scriptstyle{\pm 0.000}}$ & $\mathbf{0.976 \scriptstyle{\pm 0.001}}$ \\
MCDO       & $0.920 \scriptstyle{\pm 0.002}$ & $0.891 \scriptstyle{\pm 0.002}$ & $0.906 \scriptstyle{\pm 0.005}$ & $0.861 \scriptstyle{\pm 0.003}$ \\
DE         & $\underline{0.947 \scriptstyle{\pm 0.001}}$ & $\underline{0.911 \scriptstyle{\pm 0.001}}$ & $\underline{0.933 \scriptstyle{\pm 0.003}}$ & $\underline{0.884 \scriptstyle{\pm 0.001}}$ \\
\bottomrule
\end{tabular}
\end{table}

\end{document}